\documentclass[11pt]{article}
\usepackage[english]{babel}
\usepackage[utf8]{inputenc}
\usepackage{amsmath,amsfonts,amssymb,amsthm}
\usepackage[letterpaper,top=2cm,bottom=2cm,left=3cm,right=3cm,marginparwidth=1.75cm]{geometry}
\usepackage{graphicx}
\usepackage[colorlinks=true, allcolors=blue]{hyperref}
\usepackage{enumitem}
\usepackage{bm}
\usepackage{cleveref}
\usepackage[numbers,sort&compress]{natbib}
\usepackage[table]{xcolor}
\newtheorem{theorem}{Theorem}[section]
\newtheorem{corollary}[theorem]{Corollary}

\newtheorem{lemma}[theorem]{Lemma}
\newtheorem{proposition}[theorem]{Proposition}

\theoremstyle{definition}
\newtheorem{definition}[theorem]{Definition}

\newtheorem{assumption}[theorem]{Assumption}

\usepackage{authblk}
\usepackage{mathrsfs}
\usepackage[dvipsnames]{xcolor}
\usepackage{cancel}
\usepackage{tikz}
\usetikzlibrary{shapes, positioning, calc}
\usetikzlibrary{positioning, arrows.meta}
\usepackage{subcaption}
\usepackage{lineno}

\title{On the Unique Recovery of Transport Maps\\and Vector Fields from Finite Measure-Valued Data}
\author[1]{Jonah Botvinick-Greenhouse}
\author[2]{Yunan Yang}
\affil[1]{Center for Applied Mathematics, Cornell University, Ithaca, NY}
\affil[2]{Department of Mathematics, Cornell University, Ithaca, NY}

\begin{document}

\maketitle 

\begin{abstract}
We establish guarantees for the unique recovery of vector fields and transport maps from finite measure-valued data, yielding new insights into generative models, data-driven dynamical systems, and PDE inverse problems. In particular, we  provide general conditions under which a diffeomorphism can be uniquely identified from its pushforward action on \textit{finitely} many densities, i.e., when the data $\{(\rho_j,f_\#\rho_j)\}_{j=1}^m$ uniquely determines $f$. As a corollary, we introduce a new metric which compares diffeomorphisms by measuring the discrepancy between finitely many pushforward densities in the space of probability measures.  We also prove analogous results in an infinitesimal setting, where derivatives of the densities along a smooth  vector field are observed, i.e., when $\{(\rho_j,\textup{div} (\rho_j v))\}_{j=1}^m$ uniquely determines $v$. Our analysis makes use of the Whitney and Takens embedding theorems, which provide estimates on the required number of densities $m$, depending only on the intrinsic dimension of the problem.  We additionally interpret our results through the lens of Perron--Frobenius and Koopman operators and demonstrate how our techniques lead to new guarantees for the well-posedness of certain PDE inverse problems related to continuity, advection, Fokker--Planck, and advection-diffusion-reaction equations. Finally, we present illustrative numerical experiments demonstrating the unique identification of transport maps from finitely many pushforward densities, and of vector fields from finitely many weighted divergence observations. 
\end{abstract}

\section{Introduction}\label{sec:intro}

The problem of estimating transport maps and vector fields from measure-valued data is pervasive across data science, machine learning, and engineering. Such problems arise prominently in modern sampling and generative modeling frameworks that aim to transform a reference noise distribution into a target data distribution, including approaches based on optimal transport, normalizing flows, and diffusion processes \cite{ho2020denoising,lipman2022flow,papamakarios2021normalizing}. Beyond machine learning, measure-valued datasets are intrinsic to many physical and biological settings, where experimental observations are naturally represented as time-indexed empirical distributions rather than labeled particle trajectories. In these contexts, a common goal is to infer transport maps, vector fields, or dynamical laws from the evolution of distributions, in order to recover interpretable physical or biological structure \cite{tong2020trajectorynet,huguet2022manifold,schiebinger2019optimal,zhang2024learning,zhao2025cyclegrn,yao2025learning,scarvelis2023riemannian}.

Closely related problems arise in inverse formulations of partial differential equations (PDEs) governed by continuity, Fokker--Planck, or advection-diffusion equations, where one seeks to recover an underlying vector field or drift from measure-valued solution data \cite{liu2025inversions,blickhan2025dice,liu2021solving,chen2021solving}. Similar questions also appear in data-driven dynamical systems, where the objective is to identify Perron--Frobenius operators from their action on a finite collection of probability measures \cite{mcdonald2022identification}. More broadly, \textit{regression problems over spaces of probability measures}, where inputs and outputs are related by pushforward or transport operators, have recently attracted significant attention \cite{chen2023wasserstein,botvinick2025measure}.

Despite the rapid growth of these applications, fundamental questions of well-posedness and identifiability remain poorly understood. In particular, it is often unclear whether a transport map or vector field can be uniquely determined from its action on a finite number of probability measures, even in idealized noiseless settings. The goal of this paper is to provide foundational results addressing this gap. We establish practical conditions under which a transport map or vector field is uniquely identifiable from finitely many measure-valued observations. Our results yield new identifiability guarantees that apply across data-driven dynamical systems, PDE inverse problems, and generative modeling.

At a technical level, our analysis combines tools from measure transport with classical embedding results originating in the work of Whitney and Takens. These embedding theorems allow us to translate identifiability questions for maps and vector fields into topological conditions on finite collections of probability densities, yielding explicit bounds on the number of densities required for unique recovery.

We now formalize the setting. Let $M$ and $N$ be smooth, compact, $d$-dimensional Riemannian manifolds, and let $f \in C^1(M,N)$ be a diffeomorphism. For a probability measure $\rho \in \mathcal{P}(M)$, the pushforward $f_\# \rho \in \mathcal{P}(N)$ describes the redistribution of mass induced by $f$ (see Section~\ref{subsec:push} for a precise definition). In general, the pushforward of a single measure does not uniquely identify the underlying map, i.e., the equality $f_\# \rho = g_\# \rho$ does not imply $f = g$. However, in many applications one observes the action of $f$ on multiple measures $\rho_1, \ldots, \rho_m \in \mathcal{P}(M)$, which motivates the following question.
\begin{enumerate}
    \item[(Q1)] \label{q1}
    \textit{When do the $m$ pairs of measures
    \[
    \{(\rho_1, f_\# \rho_1), \ldots, (\rho_m, f_\# \rho_m)\}
    \]
    uniquely determine the map $f$?}
\end{enumerate}

Using the Whitney embedding theorem, we provide a positive answer to \hyperref[q1]{(Q1)}. We show that when $m > 2d+1$ there is a generic subset $\boldsymbol{D}$ of  strictly positive $C^1$ densities $D_+^1(M,\mathbb{R}^m)$, such that for $(\rho_1, \ldots, \rho_m) \in \boldsymbol{D}$, the pushforward action of a diffeomorphism $f$ uniquely determines $f$. Here, uniqueness means that 
\[
f_\# \rho_j = g_\# \rho_j \quad \text{for } 1 \le j \le m \quad \implies \quad  f=g
\]
for diffeomorphisms $f, g \in C^1(M,N)$. The term ``generic'' is used in a precise topological sense: the set of densities for which the result holds is open and dense in an appropriate function space.

We next consider the corresponding infinitesimal problem. Suppose $f = f_t$ is the time-$t$ flow map generated by a vector field $v$. Under mild regularity assumptions, the associated curve of measures $\rho_t = \left(f_t\right) _\# \rho$ satisfies the continuity equation
\[
\partial_t \rho_t + \textup{div} ( \rho_t v) = 0.
\]
Consequently, the first-order perturbation of $\rho$ induced by $v$ is given by
\[
\left. \partial_t \rho_t \right|_{t=0} = - \textup{div} ( \rho v).
\]
In general, the equality $\textup{div} (\rho v) = \textup{div} (\rho w)$ does not imply $v = w$, since weighted divergence-free components may remain invisible. As in \hyperref[q1]{(Q1)}, however, many applications involve observing the action of a vector field on multiple densities. This leads to the following question.
\begin{enumerate}
    \item[(Q2)] \label{q2}
    \textit{When do the $m$ density--divergence pairs
    \[
    \{(\rho_1, \textup{div} (\rho_1 v)), \ldots, (\rho_m, \textup{div} (\rho_m v))\}
    \]
    uniquely determine the vector field $v$?}
\end{enumerate}

Using similar topological arguments, we show that if $m > 2d + 1$, then for $(\rho_1,\dots,\rho_m)\in \boldsymbol{D},$ the generic set  constructed in response to \hyperref[q1]{(Q1)}, the equalities
\[
\textup{div} (\rho_j v ) = \textup{div} (\rho_j w ), \quad \text{for }1 \le j \le m, \quad \implies\quad  v = w.
\]
Thus, a finite collection of density-weighted divergence observations suffices to uniquely recover the underlying vector field. Our main results addressing \hyperref[q1]{(Q1)} and \hyperref[q2]{(Q2)} are summarized in Theorem~\ref{thm1}.

A more realistic data regime arises when the densities $\rho_j$ are not freely chosen, but are instead generated by an underlying time-dependent process. In many applications, measure-valued observations are collected sequentially in time, and successive densities are linked by the evolution of an unknown or partially known dynamical system. The simplest and most structured instance of this setting occurs when the densities are generated by the repeated pushforward action of a diffeomorphism, modeling either a discrete-time dynamical system or the stroboscopic sampling of a continuous-time physical process.

Concretely, suppose that there exists a diffeomorphism $h \in \mathrm{Diff}^2(M,M)$ such that
\[
\rho_j = \left( h^{j-1} \right)_\# \rho_1, \qquad j = 1,2,\ldots,m,
\]
where $\rho_1$ is an initial density and $h^{j-1}$ denotes the $(j-1)$-fold composition of $h$ with itself. In this time-dependent setting, the available measure-valued data are no longer independent inputs, but are instead dynamically correlated through the unknown map $h$. This raises a fundamental question: to what extent do the identifiability guarantees established in the static setting still hold when the data are generated along a single trajectory in density space? This motivates the following question.

 \begin{enumerate}
    \item[(Q3)] \label{q3}
    \textit{How do our answers to \textnormal{\hyperref[q1]{(Q1)}} and \textnormal{\hyperref[q2]{(Q2)}} change when the measure-valued data are time-dependent, that is, when
    $
    \rho_j = \left( h^{j-1} \right)_\# \rho_1
    $
    for some dynamical system $h \in \mathrm{Diff}^2(M,M)$?}
\end{enumerate}

While our analysis of \hyperref[q1]{(Q1)} and \hyperref[q2]{(Q2)} relied on Whitney's embedding theorem, the time-dependent problem \hyperref[q3]{(Q3)} requires a fundamentally different set of tools. Here we instead draw on Takens' time-delay embedding theory. We show that a carefully constructed delay-coordinate map, built from suitable quotients of pushforward densities, allows one to lift the identifiability results from the static setting to the time-dependent regime. Under certain assumptions on the dynamics (see Assumption~\ref{assumption:1}), this approach yields conditions under which \hyperref[q3]{(Q3)} admits a positive answer. Our main theoretical results for the time-dependent setting are summarized in Theorem~\ref{thm2}.

Takens' embedding theorem has played a central role in nonlinear time-series analysis and has inspired a wide range of data-driven methods for reconstruction, prediction, and control from partial observations \cite{takens2006detecting,noakes1991takens,pecora2007unified,raut2025arousal,sugihara1990nonlinear}. To our knowledge, this work is the first to make a direct connection between Takens-style delay embeddings and the well-posedness of measure-valued inverse problems involving dynamical system recovery from finitely many density snapshots.

Building on these identifiability results, we derive new metrics on the space of diffeomorphisms and smooth vector fields that are defined through the comparison of finitely many pushforward measures, or finitely many infinitesimal measure perturbations (see Corollary~\ref{cor1}). As a representative example, we show that for $m > 2d + 1$ strictly positive densities $(\rho_1, \ldots, \rho_m)$ belonging to the generic set $\boldsymbol{D}$, the quantity
\begin{equation}\label{eq:metric1}
\mathfrak{D}(f,g) := \sum_{j=1}^{m} \mathcal{D}\big(f _\# \rho_j,\, g _\# \rho_j\big),
\end{equation}
defines a genuine metric on the space of diffeomorphisms $\mathrm{Diff}^1(M,N)$. Here, $\mathcal{D}$ denotes any metric on the space of probability measures $\mathcal{P}(N)$, such as the Wasserstein distance \cite{villani2008optimal} or the Maximum Mean Discrepancy (MMD) \cite{gretton2012kernel}. Analogous constructions yield metrics for smooth vector fields via their action on densities through weighted divergence operators. These metrics are intrinsically adapted to measure-valued data and provide a general framework for solving inverse problems and training generative models using only finitely many observed distributions.

We further interpret our answers to \hyperref[q1]{(Q1)}--\hyperref[q3]{(Q3)} through the lens of data-driven dynamical systems, identifying conditions under which Perron--Frobenius and Koopman operators can be uniquely recovered from their action on finitely many densities or observables (see Section~\ref{subsec:dynamics}). In addition, we apply our main results to establish new well-posedness guarantees for inverse problems associated with continuity, advection, Fokker--Planck, and advection-diffusion-reaction equations (see Section~\ref{subsec:PDE}). Finally, we discuss the significance of our results in the context of generative models, highlighting how they can inform well-posedness analysis and the design of new architectures (see Section \ref{subsec:generative}). A schematic overview of the main results and their relationships is provided in Figure~\ref{fig:flowchart}.

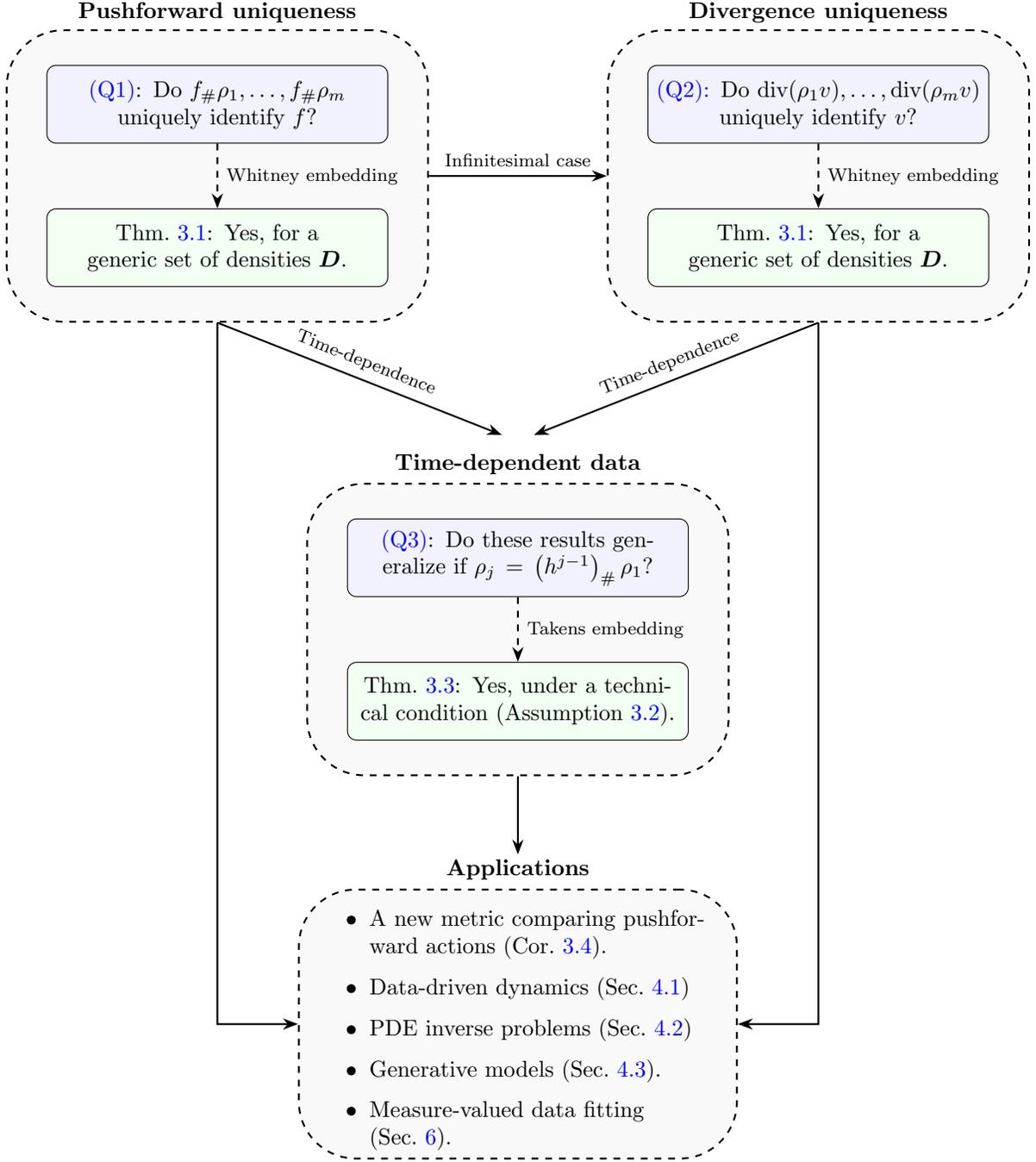
\begin{figure*}[h!]
\centering
\pgfdeclarelayer{background}
\pgfsetlayers{background,main}

\begin{tikzpicture}[scale = 0.7,
  box/.style={draw, minimum width=4cm, minimum height=1.2cm, text centered, rounded corners, text width=5cm, align=center},
  subbox/.style={draw, minimum width=3.5cm, minimum height=1cm, text centered, rounded corners, text width=3.5cm, align=center},
  arrow/.style={-Stealth, thick},
  dottedarrow/.style={-Stealth, thick, dotted},
  every node/.append style={font=\small},
  node distance=1.5cm
]

\node[box, fill = blue!5] (B1) {\hyperref[q1]{(Q1)}: Do $f_\#\rho_1,\ldots, f_\#\rho_m$ uniquely identify $f$?};
\node[box, below = 1cm of B1, fill = green!5] (B1b) {Thm.~\ref{thm1}:  Yes, for a generic set of densities $\boldsymbol{D}$.};

\node[box, right=4cm of B1, fill =  blue!5] (B2) {\hyperref[q1]{(Q2):} Do $\textup{div} (\rho_1 v ),\ldots, \textup{div} ( \rho_m v)$ uniquely identify $v$?};
\node[box, below = 1cm of B2, fill = green!5] (B2b) {Thm.~\ref{thm1}: Yes, for a generic set of densities $\boldsymbol{D}$.};

\node[box, fill =  blue!5] (B3)  at ($ (B1)!0.5!(B2) + (0,-10cm) $)  {\hyperref[q3]{(Q3)}: Do these results generalize if $\rho_j = \left( h^{j-1}\right)_\#\rho_1$?};
\node[box, below = 1cm of B3, fill = green!5] (B3b) {Thm.~\ref{thm2}: Yes, under a technical condition  (Assumption \ref{assumption:1}).};

\node[box, below=4.5cm of B3,
      thick,
       dash pattern=on 3pt off 3pt,
       text width = 6.5cm,
      minimum width=6.5cm,
      minimum height=3cm,
      rounded corners=25pt,
            fill=gray!5 ] (B4) {\vspace{-0.2cm}\begin{itemize}
    \item A new metric comparing pushforward actions (Cor.~\ref{cor1}).
    \item  Data-driven dynamics (Sec.~\ref{subsec:dynamics})
    \item PDE inverse problems (Sec.~\ref{subsec:PDE})
    \item Generative models (Sec.~\ref{subsec:generative}).
    \item  Measure-valued data fitting  (Sec.~\ref{sec:numerics}).
\end{itemize}};

\begin{pgfonlayer}{background}

\path let \p1 = ($(B1)!0.5!(B1b)$) in
  node[
      draw,
      thick,
      dash pattern=on 3pt off 3pt,
      minimum width=6.5cm,
      minimum height= 4.5cm,
      rounded corners=25pt,
      fill=gray!5
  ] (G1) at (\p1) {};

\path let \p2 = ($(B2)!0.5!(B2b)$) in
  node[
      draw,
      thick,
       dash pattern=on 3pt off 3pt,
      minimum width=6.5cm,
      minimum height=4.5cm,
      rounded corners=25pt,
            fill=gray!5 
  ] (G2) at (\p2) {};

\path let \p3 = ($(B3)!0.5!(B3b)$) in
  node[
      draw,
      thick,
       dash pattern=on 3pt off 3pt,
      minimum width=6.5cm,
      minimum height=4.5cm,
      rounded corners=25pt,
            fill=gray!5
  ] (G3) at (\p3) {};

\draw[arrow,dashed] (B1) -- (B1b) node[midway, right] {\scriptsize Whitney embedding};
\draw[arrow,dashed] (B2) -- (B2b) node[midway, right] {\scriptsize Whitney embedding};
\draw[arrow,dashed] (B3) -- (B3b) node[midway, right] {\scriptsize Takens embedding};

\node[above=-1pt of G1] {\textbf{Pushforward uniqueness}};
\node[above=-1pt of G2] {\textbf{Divergence uniqueness}};
\node[above=0pt of G3,fill=white] (L3) {\textbf{Time-dependent data}};
\node[above=0 of B4,fill=white] (L4) {\textbf{Applications}};

\draw[arrow] (G1.south) -- ([xshift = -10pt,yshift = 30pt] G3.north) node[midway, sloped, above] {\scriptsize Time-dependence};
\draw[arrow] (G1) -- (G2) node[midway, above] {\scriptsize Infinitesimal case};
\draw[arrow] (G1.south) |- (B4.west);
\draw[arrow] (G2.south) |- (B4.east);

\draw[arrow, shorten >=15pt] (G3) -- (B4);
\draw[arrow] (G2.south) -- ([xshift = 10pt,yshift=30pt] G3.north) node[midway, sloped, above] {\scriptsize Time-dependence};
\draw[arrow];

\end{pgfonlayer}

\end{tikzpicture}

\caption{Flowchart of our main results.}
\label{fig:flowchart}
\end{figure*}

The rest of the paper is structured as follows. Section \ref{sec:prelim} reviews preliminary notation and background material, including the classical embedding theorems of Whitney and Takens. Section \ref{sec:results} contains the statements of our main results and discussion of their significance. In Section~\ref{subsec:discussion}, we highlight applications  of our results across data-driven dynamical systems, PDE inverse problems, and generative models. The complete proofs of our main theorems and their corollaries then appear in Section~\ref{subsec:proofs}. Finally, in Section \ref{sec:numerics}, we present illustrative numerical experiments demonstrating unique pushforward map and vector field recovery from finite measure-valued datasets. Conclusions follow in Section~\ref{sec:conclusion}.

\section{Preliminaries}\label{sec:prelim}
We begin by establishing necessary notation and definitions that will be used throughout the paper. Section \ref{subsec:func}  introduces the relevant function spaces that play a role in our analysis, Section~\ref{subsec:push} defines the pushforward measure and change of variables formula, while Section~\ref{subsec:embed} introduces classical embedding theorems. 

\begin{table}[h]
    \centering
    \renewcommand{\arraystretch}{1.15}
        \rowcolors{2}{white}{gray!10}
    \begin{tabular}{|l|l|}
    \hline
    \textbf{Symbol} & \textbf{Meaning} \\ 
    \hline
        $M,N$ & Smooth, compact $d$-dimensional Riemannian manifolds   \\ 
        $\mathcal{P}(M)$ & Borel probability measures over $M$ \\
        $\langle\cdot,\cdot\rangle_{r_x}$ & Riemannian metric on $M$ at $x$ \\
        $C^{\ell}(M,\mathbb{R}^n)$ & $\ell$-times continuously differentiable functions \\
          $C_+^{\ell}(M,\mathbb{R}^n)$ & componentwise positive functions in $C^{\ell}(M,\mathbb{R}^n)$\\
          ${D}_+^{\ell}(M,\mathbb{R}^n)$ & functions in $C_+^{\ell}(M,\mathbb{R}^n)$ componentwise integrating to 1 \\ 
          $\textup{Diff}^\ell(M,N)$ & $C^\ell$ diffeomorphisms between $M$ and $N$ \\
        $\mathfrak{X}^\ell(M)$ & $C^\ell$ vector fields on $M$ \\
        $df_x$ & Differential of a function $f$ at the point $x$ \\
        $J_f(x)$ & Jacobian determinant of $f$ at $x$ \\
        $f_\#\rho$ & Pushforward of the measure $\rho$ under $f$\\
        $\Psi_{(y,f)}^{(k)}$ & $k$-dimensional delay map for observable $y$ and system $f$\\
                   $\boldsymbol{W}_k$ & Generic set of embeddings in $C^1(M,\mathbb{R}^{k})$, $k \geq 2d+1$ \\
        $\boldsymbol{G}$ & Generic set of pairs $(y,f)$ for which $\Psi_{(y,f)}^{(2d+1)}$ is an embedding \\
        \hline

    \end{tabular}
    \caption{Notation used throughout Section \ref{sec:results}.}
    
    \label{tab:placeholder}
\end{table}

\subsection{Defining the Function Spaces}\label{subsec:func}

Recall from Section \ref{sec:intro} that $M$ and $N$ denote smooth ($C^{\infty})$ compact $d$-dimensional Riemannian manifolds. Let $r$ be a smooth Riemannian metric on $M$ inducing the inner product $\langle \cdot, \cdot\rangle_{r_x}:T_x M \times T_x M \to \mathbb{R}$. Given $n\in \mathbb{N}$, we will write $C^1(M,\mathbb{R}^n)$ to denote the space of continuously differentiable maps between $M$ and $\mathbb{R}^n$, which is a Banach space when equipped with the norm 
\begin{equation}\label{eq:normC1}
    \| Y\|_{C^1}:= \sup_{x\in M}| Y(x)|+ \sup_{x\in M} \| dY_x\|.
\end{equation}
We write $|Y(x)|$ to denote the Euclidean norm of $Y(x)\in \mathbb{R}^n$ and $\|dY_x\|$ to denote the operator norm of the differential $dY_x:T_x M \to \mathbb{R}^n$, where $T_x M$ denotes the tangent space at $x\in M.$ 
Note that \eqref{eq:normC1} depends on the choice of Riemannian metric $r$ on $M$, as 
$$\| d Y_x\| := \sup_{v\in T_x M ,\, \|v\|_r = 1} | d Y_x(v)|,\qquad \|v\|_r = \sqrt{\langle v, v\rangle_{r_x}}.$$
Since $M$ is compact, any choice of the Riemannian metric will induce an equivalent topology on $C^1(M,\mathbb{R}^n)$.

More generally, $C^\ell(M,\mathbb{R}^n)$ is the space of $\ell$-times continuously differentiable $\mathbb{R}^n$-valued functions. Throughout, we write $Y_j(x)\in \mathbb{R}$ as shorthand for the $j$-th component of $Y(x)\in \mathbb{R}^n$, i.e., $Y_j(x):= Y(x)\cdot \mathbf{e}_j$, where $\mathbf{e}_j$ is the $j$-th standard unit basis vector in $\mathbb{R}^n$. Moreover, we define \begin{equation}\label{eq:posC1}
    C_+^\ell(M,\mathbb{R}^n):= \{Y\in C^\ell(M,\mathbb{R}^n): Y_j(x) > 0, \,\forall x\in M,\, 1\leq j \leq n\},
\end{equation}
which is the subset of $C^\ell(M,\mathbb{R}^n)$ functions that have strictly positive coordinate evaluations. Since we are primarily interested in working with probability densities, we also define 
\begin{equation}\label{eq:posD1}
    D_+^\ell(M,\mathbb{R}^n):= \Bigg\{ Y \in C_+^\ell(M,\mathbb{R}^n): \int_M Y_j(x) \, \textrm{d}x = 1,\, \,1\leq j \leq n\Bigg\},
\end{equation}
which is the subset of $C_+^\ell(M,\mathbb{R}^n)$ functions where each coordinate evaluation integrates to~1. We note that the integral in \eqref{eq:posD1} is computed with respect to the Riemannian volume, which depends on the underlying metric. Throughout, $C_+^\ell(M,\mathbb{R}^n)$ and $D_+^\ell(M,\mathbb{R}^n)$ are endowed with the subpsace topology inherited from $C^\ell(M,\mathbb{R}^n)$. We also write $\textup{Diff}^\ell(M,N)$ to denote the space of $C^\ell$ diffeomorphisms between $M$ and $N$, as well as $\mathfrak{X}^\ell(M)$ to denote the $C^\ell$ vector fields mapping $M\to TM$.

Given a function $y\in C^{1}(M,\mathbb{R})$ and a vector field $v\in \mathfrak{X}^1(M)$, we write $\nabla y$ and $\textup{div}(v)$ to denote the Riemannian gradient and divergence, respectively. Finally, given vector fields $v,w:M\to TM$ we write $\langle v,w\rangle$ as shorthand for the function $x\mapsto \langle v(x), w(x)\rangle_{r_x} $.

\subsection{Pushforward Measures}\label{subsec:push}
We will denote by $\mathcal{P}(M)$ the space of Borel probability measures over $M$. 
Given 
$\rho \in \mathcal{P}(M)$, its \textit{pushforward} under a measurable map $f:M\to N$ is the measure $f_\#\rho \in \mathcal{P}(N)$ defined by $(f_\#\rho)(B):= \rho(f^{-1}(B))$, for all Borel measurable sets $B\subseteq N$. As a slight abuse of notation we will also use $\rho$ to denote the corresponding density of the measure, when it exists. If $\rho \in \mathcal{P}(M)$ admits a $C^1$ density and $f\in \textup{Diff}^1(M,N)$, then the pushforward measure $f_\#\rho$ satisfies
\begin{equation}\label{eq:cov}
    (f_\#\rho)(x)= \rho\left(f^{-1}(x) \right)|\det df_x^{-1}|,\qquad \forall x\in N.
\end{equation}
The formula \eqref{eq:cov} is known as a \textit{change of variables} formula. In \eqref{eq:cov}, $df_x^{-1}:T_x N \to T_{f^{-1}(x)} M$ is the differential of the map $f^{-1}$ evaluated at $x\in N$, and the term $|\det df_x^{-1}|$ describes how volumes are distorted under the pushforward operation, ensuring that $f_\#\rho$ remains a probability measure.  Throughout, we also write $J_{f^{-1}}(x)$ to denote the Jacobian determinant of $f^{-1}$ at $x\in N$.  

\subsection{Embedding Theorems}\label{subsec:embed}
The classical embedding theorems due to Hassler Whitney and Floris Takens play a central role in our analysis. To this end, we first precisely define what is meant by an embedding.

\begin{definition}[Embedding]\label{def:embedding}
    A map $Y\in C^1(M,\mathbb{R}^n)$ is said to be an \textit{embedding} if 
    \begin{enumerate}
        \item[\textit{(i)}] $Y$ is injective;
        \item[\textit{(ii)}] $dY_x:T_xM \to \mathbb{R}^n$ is injective, for all $x\in M$;
        \item[\textit{(iii)}] $Y$ is a homeomorphism onto its image.
    \end{enumerate}
\end{definition}
In the subsequence discussions, we are primarily concerned with embeddings of compact manifolds, in which case Definition \ref{def:embedding}\textit{(iii)} does not need to be checked. In this setting, \textit{(iii)} follows from \textit{(i)}, as $Y$ can be viewed as an invertible continuous map between compact sets, which can be shown to be a homeomorphism; see \cite[Prop.~13.26]{sutherland2009introduction}.

The Whitney embedding theorem asserts that the collection of embeddings in $C^1(M,\mathbb{R}^k)$ is ``topologically large" if $k \geq 2d+1$. In particular, Whitney showed that the set of embeddings is \textit{generic}, i.e., it is open and dense in the $C^1(M,\mathbb{R}^k)$ topology. 
\begin{definition}[Generic]
    Let $X$ be a topological space. A subset $S\subseteq X$ is said to be \textit{generic} if $S$ is open and dense. 
\end{definition}

\begin{theorem}[Whitney Embedding \cite{hirsch2012differential}]\label{thm:whitney}
   Let $k \geq 2d+1$. Then, the set 
       $$\boldsymbol{W}_k:=\{Y\in C^1(M,\mathbb{R}^k): Y\textup{ is an embedding}\}$$
 is generic in $C^1(M,\mathbb{R}^k)$.
\end{theorem}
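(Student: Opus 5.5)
The plan is to prove openness and density separately. Throughout, $M$ is compact and $d$-dimensional, so by the remark after Definition~\ref{def:embedding} it suffices to control injectivity of $Y$ and of $dY_x$.

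\textbf{Openness.} Fix an embedding $Y$. First I will show that injectivity of the differential is an open condition. The function $(x,v)\mapsto |dY_x(v)|$ on the compact unit sphere bundle $SM$ has a positive minimum $c$. Any $Z$ with $\|Z-Y\|_{C^1}<c/2$ therefore satisfies $|dZ_x(v)|\ge c/2$ on $SM$, so $Z$ is an immersion.

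Next I will show that global injectivity is stable. Since $dY$ is injective, a local inverse-function or chart argument gives a uniform $\delta>0$ and $\kappa>0$ such that every $Z$ that is $C^1$-close to $Y$ satisfies $|Z(x)-Z(x')|\ge \kappa\,\mathrm{dist}(x,x')$ whenever $\mathrm{dist}(x,x')<\delta$. This is uniform local injectivity, and it comes from the mean value estimate in charts together with the uniform lower bound on $dZ$. On the compact set $\{\mathrm{dist}(x,x')\ge\delta\}$, the quantity $|Y(x)-Y(x')|$ has a positive minimum $\eta$. Then $C^0$-closeness $\|Z-Y\|_\infty<\eta/2$ keeps $Z(x)\neq Z(x')$ there. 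Combining the two regimes shows that a $C^1$ neighborhood of $Y$ lies in $\boldsymbol{W}_k$.

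\textbf{Density.} Given an arbitrary $Y\in C^1(M,\mathbb{R}^k)$ and $\varepsilon>0$, I first approximate $Y$ by a smooth map. This uses a partition of unity and convolution in charts, and it is routine. So assume $Y$ is smooth.

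The plan is then to use the standard trick of first building some embedding $F:M\to\mathbb{R}^K$ for large $K$. For this I cover $M$ by finitely many charts $\phi_i$ with bump functions $\lambda_i$, and take $F=(\lambda_i\phi_i,\lambda_i)_i$.

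Next consider $G=(Y,F):M\to\mathbb{R}^{k+K}$. It is an embedding, and its projection onto the first $k$ coordinates is $Y$.

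I then project down one dimension at a time. For a unit vector $a$, let $\pi_a$ denote the projection along $a$. The projection $\pi_a\circ G$ fails to be injective only if $a$ lies in the image of the secant map
\[
(x,x')\mapsto \frac{G(x)-G(x')}{|G(x)-G(x')|},
\]
which is defined on a $2d$-dimensional manifold. It fails to be an immersion only if $a$ lies in the image of the unit tangent bundle, which has dimension $2d-1$. These maps are $C^1$ (smooth after the smoothing step), so by Sard's theorem their images have measure zero in the sphere $S^{n-1}$ whenever $n-1>2d$, that is, whenever the ambient dimension satisfies $n\ge 2d+2$.

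At each step I choose $a$ outside both images and arbitrarily close to the directions $e_{k+1},\dots,e_{k+K}$ that are being eliminated. This keeps the composite map an embedding and keeps its first-$k$-coordinate part $C^1$-close to $Y$. After $K$ steps I land in $\mathbb{R}^k$ with $k\ge 2d+1$, holding an embedding within $\varepsilon$ of $Y$.

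\textbf{Main obstacle.} I expect the bookkeeping in the density step to be the hardest part. The projections are oblique, so I must check that the final map, written in the original first $k$ coordinates, differs from $Y$ by a small $C^1$ amount. Each projection can be written as $Y+\sum_i c_i F_i$ with small coefficients $c_i$, so the error is at most a constant times $\max_i|c_i|\cdot\|F\|_{C^1}$. The tilts can be chosen arbitrarily small because the bad sets have measure zero, hence empty interior.

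A cleaner alternative I would try first is the parametric transversality form. Consider the family $Y+AF$ with $A$ a $k\times K$ matrix, and apply Sard's theorem to the map $(x,x',A)\mapsto Y(x)-Y(x')+A(F(x)-F(x'))$, and to its tangent analogue. This shows that almost every small $A$ gives an embedding, using exactly the dimension count $2d<k$. This version avoids the iterated projections entirely.
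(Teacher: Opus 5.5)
The paper does not prove this theorem; it quotes it from Hirsch's \emph{Differential Topology}. Your argument is a correct, self-contained proof of exactly the case the paper uses: compact $M$, target $\mathbb{R}^k$, and the $C^1$ topology.

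\textbf{Comparison with the textbook route.} The textbook density argument is local. It perturbs chart by chart over a locally finite cover, choosing the small perturbation parameters via Sard. That is what lets it work in the strong topology, for noncompact $M$, and for arbitrary target manifolds $N$. Your density argument is global: you adjoin an auxiliary embedding $F$ and then do one of two things.
\begin{itemize}
\item Project $(Y,F)$ obliquely. This is the classical existence proof of the easy Whitney theorem, turned into a density statement by keeping $Y$ in the first $k$ slots.
\item Use the linear family $Y+AF$. This is the cleaner write-up.
\end{itemize}
In the second version, $(x,x',A)\mapsto Y(x)-Y(x')+A(F(x)-F(x'))$ on $(M\times M\setminus\Delta)\times\mathbb{R}^{k\times K}$ is a submersion, because $F$ is injective. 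Likewise $(x,v,A)\mapsto dY_xv+A\,dF_xv$ on $SM\times\mathbb{R}^{k\times K}$ is a submersion, because $F$ is an immersion. Their zero sets are therefore manifolds of dimensions $2d+kK-k$ and $2d-1+kK-k$, both less than $kK$. So the bad parameters $A$ form a null set, and arbitrarily small good $A$ exist. Your smoothing step supplies the $C^2$ regularity that makes the second map $C^1$. Your route uses compactness (a finite atlas to build $F$) and the vector-space structure of $\mathbb{R}^k$, both available here. In exchange you get a shorter argument with a transparent dimension count $2d<k$.

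\textbf{One step to tighten in the openness part.} Uniform local injectivity does not follow from the lower bound $|dZ_x(v)|\ge c/2$ alone: a unit-speed curve can still contain an arbitrarily small loop. The mean value estimate in a chart $\phi$ needs the oscillation $\sup_{|q-p|<\delta}\|D(Z\circ\phi^{-1})(q)-D(Z\circ\phi^{-1})(p)\|$ to sit below that lower bound, uniformly in $Z$. This holds because $\|dZ-dY\|$ is small and $dY$ is uniformly continuous on the compact $M$. A Lebesgue number for a finite chart cover then produces your $\delta$ and $\kappa$.
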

At an intuitive level, the genericity assumption in Theorem~\ref{thm:whitney} reflects two complementary facts: first, that embeddings are stable under small $C^1$ perturbations, and second, that within the space $C^1(M,\mathbb{R}^k)$, embeddings form a dense subset. In other words, any sufficiently small perturbation of an embedding remains an embedding, and any $C^1$ map can be approximated arbitrarily well by one.

Takens' embedding theorem extends this perspective to a dynamical setting, where the collection of observables used to form the embedding is no longer freely chosen, but instead arises from successive partial observations along the time evolution of a dynamical system. In this sense, Takens' theorem can be viewed as a dynamical analogue of Whitney's result, replacing independent observables with time-delayed measurements while retaining generic embedding guarantees. We now define the time-delay map and state Takens' embedding theorem.
\begin{definition}[Time-Delay Map]\label{def:delay_map}
    Let $y\in C^1(M,\mathbb{R})$, let $h\in \textup{Diff}^1(M,M)$, and let $k\in \mathbb{N}$. The \textit{time-delay map} $\Psi_{(y,h)}^{(k)}:M\to \mathbb{R}^k$ is defined by setting 
    $$\Psi_{(y,h)}^{(k)}(x):= (y(x),y(h(x)),\dots, y(h^{k-1}(x))),\qquad x\in M.$$
\end{definition}
\begin{theorem}[Takens Embedding \cite{noakes1991takens}]\label{thm:takens}
   There exists a generic subset $\boldsymbol{G}\subset C^1(M,\mathbb{R})\times \textup{Diff}^1(M,M)$ such that for all  $(y,h)\in \boldsymbol{G}$ it holds that $\Psi_{(y,h)}^{(2d+1)}$ is an embedding of $M$. 
   \end{theorem}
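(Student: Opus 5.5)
The plan is to prove openness and density of $\boldsymbol{G}$ separately. For openness, the key observation is that the assignment $(y,h)\mapsto \Psi_{(y,h)}^{(2d+1)}$ is continuous from $C^1(M,\mathbb{R})\times \textup{Diff}^1(M,M)$ into $C^1(M,\mathbb{R}^{2d+1})$. This holds because composition $(y,h)\mapsto y\circ h^{j}$ is continuous in the $C^1$ topology on a compact manifold, which follows from the chain rule and uniform continuity of $dy$. Embeddings of a compact manifold form an open subset of $C^1(M,\mathbb{R}^{2d+1})$; this is the openness half of Theorem~\ref{thm:whitney}, using compactness to upgrade pointwise injectivity of $Y$ and $dY_x$ to uniform bounds. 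Hence $\boldsymbol{G}$, the preimage of this open set, is open.

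For density, fix $(y,h)$ and a neighborhood of it. I first approximate both by $C^\infty$ objects, so that Sard-type arguments are available. Next I perturb $h$ using the Kupka--Smale theorem so that the periodic points of period $\le 2d+1$ are finitely many and hyperbolic, and each has a differential with distinct eigenvalues. Such periodic points are a genuine obstruction: on a fixed point $p$ the delay map collapses to $(y(p),\dots,y(p))$, so the conditions on $h$ cannot be dropped.

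I then perturb $y$ in three stages. The first stage works near the finitely many short periodic orbits. I adjust $y$ so that it takes distinct values on distinct periodic points. I also arrange that $d\Psi$ is injective there. At a period-$q$ point, $d\Psi$ has rows $dy_{h^j p}\, dh^j_p$, and with $dh^q_p$ having distinct eigenvalues a Vandermonde-type argument gives injectivity for a suitable choice of $dy$ on the orbit. This injectivity is stable in a neighborhood. The second stage handles everywhere else. I take a finite family of bump functions $\phi_1,\dots,\phi_N$, supported on small sets that each meet any short orbit segment at most once, and consider $y_a = y+\sum_i a_i\phi_i$. For pairs $(x,x')$ outside a neighborhood of the diagonal and not related by $x'=h^j(x)$ with small $j$, the map $a\mapsto \Psi_{(y_a,h)}(x)-\Psi_{(y_a,h)}(x')$ is a submersion. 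Because $\dim(M\times M)=2d<2d+1$, parametric transversality shows that for almost every $a$ the difference never vanishes. The same dimension count applied to the unit tangent bundle, of dimension $2d-1<2d+1$, gives the immersion property for almost every $a$.

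The third stage, and the main obstacle, concerns pairs on a common orbit, $x'=h^j(x)$ with $1\le j\le 2d$, together with pairs approaching the diagonal or accumulating at periodic points. In these cases the delay coordinates of $x$ and $x'$ share values of $y$, so the perturbation parameters are not independent. My first attempt would be to use only the coordinates of $\Psi(x')$ that read off points $h^{i}(x')$ outside the orbit segment of $x$. I would verify that enough independent directions remain to keep the required submersion rank. Near the short periodic orbits, I would instead rely on the local injectivity established in the first stage, combined with a compactness argument. Putting the stages together yields a pair $(\tilde y,\tilde h)$ arbitrarily $C^1$-close to $(y,h)$ with $\Psi_{(\tilde y,\tilde h)}^{(2d+1)}$ an embedding, which proves density.
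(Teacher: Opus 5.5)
The paper does not prove this statement; it imports it from \cite{noakes1991takens}. Your sketch therefore has to stand on its own as a reconstruction of the classical Takens argument. Your openness argument is correct. The architecture of your density argument is also the right one: Kupka--Smale genericity of $h$, a cyclic-vector choice of $dy$ along short periodic orbits, and transversality on $M\times M$ and on the unit tangent bundle.

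The gap is exactly where you place the ``main obstacle'', and the remedy you propose fails. For $x'=h^j(x)$ with $1\le j\le 2d$, only the last $j$ coordinates of $\Psi(x')$ read off points outside $\{x,\dots,h^{2d}(x)\}$. Discarding the shared coordinates therefore leaves an $a$-derivative of rank $j$. The set of such pairs is the $d$-dimensional graph of $h^j$, so for $j\le d$ the dimension count already fails. Worse, you must also treat a whole neighbourhood of that graph, where your bumps no longer separate $h^i(x')$ from $h^{i+j}(x)$. That neighbourhood is $2d$-dimensional, so there you need rank $2d+1$.

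Likewise, local injectivity near the short periodic orbits only handles pairs with both points near the same orbit. Pairs with $x$ near a short periodic orbit and $x'$ far away are not covered by anything in your plan. For such $x$ the whole delay segment falls into one bump, and your stage-two bump family (each support meeting a segment at most once) cannot exist there.

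The missing idea is to keep all coordinates and compute the rank combinatorially. Suppose the perturbation functions separate the distinct points among $h^i(x),h^i(x')$, $0\le i\le 2d$, and let $e(z)$ denote the vector of their values at $z$. Then the $a$-derivative of $\Psi_a(x)-\Psi_a(x')$ has rows $e(h^ix)-e(h^ix')$. Its rank is the number of vertices minus the number of components of the graph on these points with edges $\{h^ix,h^ix'\}$. This graph is a forest, hence the rank is the full $2d+1$, in the following cases:
\begin{enumerate}
\item the two segments are disjoint;
\item $x'=h^jx$ with $x,\dots,h^{2d+j}x$ distinct, where the edges form $j$ paths (equivalently, peel off the last $j$ points, which occur once, and descend);
\item $x$ lies on a short periodic orbit and $x'$ lies off it, where the graph is a union of stars.
\end{enumerate}

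Cycles occur only in two situations. The first is when both points lie on short periodic orbits; your first stage handles this on a neighbourhood $V\times V$ of them. The second is when $x,x'$ lie on a common periodic orbit of period in $(2d,4d]$. This is a finite set once all periodic points are hyperbolic, so you need Kupka--Smale beyond period $2d+1$. There only $\Psi(x)\neq\Psi(x')$ is required, and continuity then covers a neighbourhood.

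Lower semicontinuity of rank plus compactness then give one finite family with full rank on the pairs that remain after removing these sets and a neighbourhood of the diagonal. The diagonal neighbourhood is handled by the uniform local injectivity of all immersions $C^1$-close to a given one.

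A smaller point concerns the immersion step. Bumps that are constant on their cores contribute nothing to $d\Psi$, so you need perturbations such as bumps times local coordinates.
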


   The time-delay map can be constructed based only upon partial observations $\{y(h^j(x))\}_j$ of the trajectory $\{h^j(x)\}_j$ and gives rise to a new dynamical system in \textit{time-delay coordinates}, i.e., $\Psi_{(y,h)}^{(2d+1)}(x)\mapsto \Psi_{(y,h)}^{(2d+1)}(h(x))$, which is topologically equivalent to the dynamics $x\mapsto h(x)$ on $M$ and preserves crucial properties including the structure of periodic orbits and Lyapunov exponents. Since appending smooth components to an embedding preserves injectivity, Theorem \ref{thm:takens} also implies that $\Psi_{(y,h)}^{(k)}$ is an embedding for any $k \geq 2d+1$, provided that $(y,h)\in \boldsymbol{G}$.

\section{Main Results}\label{sec:results}
In this paper, we organize the discussion around the central questions \hyperref[q1]{(Q1)}-\hyperref[q3]{(Q3)}, which ask when finite collections of measure-valued data suffice for \textit{unique identification}. Specifically, given a \emph{finite} set of probability measures $\rho_1, \ldots, \rho_{m}$, we seek conditions under which the following implications hold:
\begin{enumerate}
    \item[(K1)]\label{i}
    For diffeomorphisms $f,g \in \mathrm{Diff}^1(M,N)$,
    \[
    f_\# \rho_j = g_\# \rho_j \quad \text{for } 1 \le j \le m
    \]
    implies that $f = g$.
    
    \item[(K2)]\label{ii}
    For vector fields $v,w \in \mathfrak{X}^1(M)$,
    \[
    \textup{div} (\rho_j v) = \textup{div} (\rho_j w) \quad \text{for } 1 \le j \le m
    \]
    implies that $v = w$.
\end{enumerate}

The conclusion \hyperref[i]{(K1)} is helpful for guaranteeing that one can uniquely identify a diffeomorphism from its pushforward action on $m$ densities. This is important for understanding the well-posedness of regression over the space of probability measures, which includes certain applications in data-driven dynamical systems (see Section \ref{subsec:dynamics}) and many generative modeling tasks (see Section \ref{subsec:generative}). The conclusion \hyperref[ii]{(K2)} is essential for analyzing the well-posedness of certain PDE inverse problems where one wishes to uniquely identify the differential operator from its action on finitely many densities. While in this section we focus on establishing criteria under which \hyperref[ii]{(K2)} holds, in Section \ref{subsec:PDE} we use these guarantees to establish new results for the identification of vector fields governing continuity, advection, and advection-diffusion-reaction equations.

\paragraph{Time-Independent  Recovery.} We now present our main results concerning the unique recovery of pushforward maps from finite measure-valued data.
Our first main result, Theorem~\ref{thm1}, shows that there is a generic set of densities in $D_+^1(M,\mathbb{R}^{m})$ such that the pushforward action on these densities uniquely identifies the diffeomorphism, while the derivative of these densities along a flow uniquely recovers the vector field.
\begin{theorem}\label{thm1}
    Fix $m > 2d + 1$. There exists a generic subset $\boldsymbol{D} \subset D_+^1(M,\mathbb{R}^{m})$ such that for every $(\rho_1,\ldots,\rho_{m}) \in \boldsymbol{D}$, both \textup{\hyperref[i]{(K1)}} and \textup{\hyperref[ii]{(K2)}} hold.
\end{theorem}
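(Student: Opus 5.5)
The strategy is to use ratios of the densities, which eliminate the Jacobian factor. Let $\rho=(\rho_1,\dots,\rho_m)\in D_+^1(M,\mathbb{R}^m)$ and assume $f_\#\rho_j=g_\#\rho_j$ for all $j$. By the change of variables formula \eqref{eq:cov}, for every $x\in N$,
\[
\rho_j(f^{-1}(x))\,|J_{f^{-1}}(x)| = \rho_j(g^{-1}(x))\,|J_{g^{-1}}(x)|, \qquad 1\le j\le m .
\]
Divide the $j$-th equation by the first one; this is allowed because the densities are strictly positive. The Jacobian factors cancel, so the map
\[
Q_\rho:M\to\mathbb{R}^{m-1},\qquad Q_\rho(z)=\bigl(\rho_2(z)/\rho_1(z),\dots,\rho_m(z)/\rho_1(z)\bigr)
\]
satisfies $Q_\rho(f^{-1}(x))=Q_\rho(g^{-1}(x))$. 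If $Q_\rho$ is injective, then $f^{-1}=g^{-1}$, hence $f=g$, which gives (K1).

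For (K2), expand $\operatorname{div}(\rho_j v)=\rho_j\operatorname{div}v+\langle\nabla\rho_j,v\rangle$ and set $u=v-w$. The hypotheses become $\rho_j\operatorname{div}u+\langle\nabla\rho_j,u\rangle=0$ for all $j$. Divide by $\rho_j$ to get $\operatorname{div}u+\langle\nabla\log\rho_j,u\rangle=0$. Subtracting the $j=1$ equation gives
\[
\langle \nabla \log(\rho_j/\rho_1),u\rangle = 0, \qquad j=2,\dots,m .
\]
Now $\nabla\log(\rho_j/\rho_1)=\nabla(\rho_j/\rho_1)\big/(\rho_j/\rho_1)$, so these conditions say exactly that $d(Q_\rho)_x u(x)=0$ at every $x$. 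If $dQ_\rho$ is injective everywhere, then $u\equiv0$ and $v=w$.

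Both conclusions therefore follow once $Q_\rho$ is an embedding into $\mathbb{R}^{m-1}$. Since $m-1\ge 2d+1$, Theorem~\ref{thm:whitney} says $\boldsymbol{W}_{m-1}$ is open and dense in $C^1(M,\mathbb{R}^{m-1})$. Define
\[
\boldsymbol{D}:=\{\rho\in D_+^1(M,\mathbb{R}^m): Q_\rho\in \boldsymbol{W}_{m-1}\}.
\]
It remains to show that $\boldsymbol{D}$ is open and dense in $D_+^1$.

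Openness is straightforward. The map $\rho\mapsto Q_\rho$ is continuous from $C_+^1$ to $C^1$, because $\rho_1$ is bounded below on compact $M$ and quotients are continuous in the $C^1$ norm. So $\boldsymbol{D}$ is the preimage of an open set.

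Density is the main obstacle, because $\rho$ must stay positive and normalized after perturbation. The plan is as follows. Given $\rho$ and $\varepsilon>0$, keep $\rho_1$ fixed and write $\rho_j=\rho_1 q_j$ with $q_j>0$. By Theorem~\ref{thm:whitney}, choose an embedding $\tilde q$ with $\|\tilde q-q\|_{C^1}<\delta$; for $\delta$ small, positivity of $\tilde q$ is preserved. Set $\tilde\rho_j=\rho_1\tilde q_j$ and then renormalize, $\tilde\rho_j\mapsto\tilde\rho_j/c_j$ with $c_j=\int\rho_1\tilde q_j$. Here $|c_j-1|\le C\delta$. The renormalization replaces $Q_{\tilde\rho}$ by $\operatorname{diag}(c_j^{-1})\tilde q$, which is an invertible linear map applied to an embedding and hence still an embedding. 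Moreover $\|\tilde\rho-\rho\|_{C^1}\le C'\delta<\varepsilon$ for $\delta$ small. This proves density and completes the argument.
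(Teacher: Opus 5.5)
Your proposal is correct. The identifiability half matches the paper up to relabeling. The paper divides by $\rho_m$ instead of $\rho_1$, and for (K2) it uses that the componentwise $\log$ of the ratio map is an embedding, which is equivalent to your observation $\langle\nabla\log q_j,u\rangle=q_j^{-1}\,dq_j(u)$. Openness likewise comes from continuity of the ratio map in both arguments.

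The genuine difference is how you prove density of $\boldsymbol{D}$. The paper argues abstractly. It shows that $\Lambda\colon Y\mapsto(Y_1/Y_m,\ldots,Y_{m-1}/Y_m)$ is continuous and open on $C_+^1(M,\mathbb{R}^m)$, writing it as the homeomorphism $\mathcal{F}^{-1}$ followed by a coordinate projection that is open by the open mapping theorem. Hence the preimage $\boldsymbol{Q}$ of the positive Whitney embeddings is dense in the unnormalized cone. It then pushes $\boldsymbol{Q}$ through the continuous surjective normalization $\mathcal{I}$, using $\mathcal{I}(\boldsymbol{Q})=\boldsymbol{D}$, since normalizing only rescales each ratio by a positive constant.

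You instead build the approximant by hand. You fix $\rho_1$, replace the ratio map by a nearby Whitney embedding, multiply back by $\rho_1$, renormalize, and bound the $C^1$ error directly; the estimate $|c_j-1|\le\|\tilde q_j-q_j\|_\infty$ comes from $\int\rho_1 q_j=1$.

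Your route is more elementary and self-contained, with no open mapping theorem and no auxiliary topological lemmas. It also gives a slightly stronger statement: for each fixed $\rho_1$, the tuples $(\rho_2,\ldots,\rho_m)$ with $(\rho_1,\ldots,\rho_m)\in\boldsymbol{D}$ are dense. So any prescribed reference density can be kept while only the other $m-1$ densities are perturbed. The paper's route is more modular and additionally records that $\boldsymbol{Q}$ is generic in $C_+^1(M,\mathbb{R}^m)$ itself.
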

Note that the required number of densities in Theorem~\ref{thm1}  is one greater than the embedding dimension in Whitney's embedding theorem (see Theorem~\ref{thm:whitney}). This is a consequence of our proof technique, in which we divide the first $m-1$ pushforward densities, $f_\# \rho_1,\ldots,f_\# \rho_{m-1}$, by the $m$-th pushforward density $f_\# \rho_{m}$, thereby canceling all Jacobian determinant factors appearing in the change of variables formula~\eqref{eq:cov}. Following this cancellation, we then use tools from  Whitney's embedding theorem to obtain the desired uniqueness results. The full proof is presented in Section~\ref{subsec:proofs}.

\paragraph{Time-Dependent Recovery.}
While Theorem~\ref{thm1} establishes unique recovery for generic densities in $D_+^1(M,\mathbb{R}^{m})$, it does not address the case in which the densities $\rho_j$ are generated by a time-dependent process. In particular, situations where the measures arise through iterated pushforward by a dynamical system,
\begin{equation}\label{eq:rho_j}
\rho_j = (h^{j-1})_\# \rho_1, \qquad 1 \le j \le m,
\end{equation}
for some diffeomorphism $h$, fall outside the scope of Theorem~\ref{thm1}. In \eqref{eq:rho_j}, $h^0$ denotes the identity map.  Such temporally correlated data, however, are common in practical applications, where distributions are observed sequentially along the evolution of an underlying system. To treat this setting, we introduce the following technical assumption.
\begin{assumption}\label{assumption:1}
    We assume $(\rho_1,h)\in D_+^1(M)\times \textup{Diff}^2(M,M)$ satisfies $(\rho_1/h_\#\rho_1,h)\in \boldsymbol{G}$. Here, $\boldsymbol{G}$ is the generic set of observables and dynamical systems appearing in Theorem~\ref{thm:takens}.
\end{assumption}
Assumption \ref{assumption:1} implies that  $\Psi_{(y,h)}^{(k)}=(y(x),y(h(x)),\dots,y(h^{k-1}(x)))$ is an embedding where $y:=\rho_1/h_\#\rho_1$ and $k \geq 2d+1$. Note that Assumption~\ref{assumption:1} imposes an additional degree of smoothness on the dynamics, requiring $h \in \mathrm{Diff}^2(M,M)$. This condition ensures that the observable $\rho_1 / h_\# \rho_1$ lies in $C^1(M,\mathbb{R})$, which is necessary for the application of Takens-type embedding arguments. 
Under Assumption~\ref{assumption:1}, we are able to extend the identifiability results of Theorem~\ref{thm1} to the time-dependent setting.

\begin{theorem}\label{thm2}
Fix $m > 2d + 1$. Let $(\rho_1, h) \in D_+^1(M) \times \mathrm{Diff}^2(M,M)$ satisfy Assumption~\ref{assumption:1}, and define
$\rho_1,\ldots,\rho_{m}$ following~\eqref{eq:rho_j}. Then both \textup{\hyperref[i]{(K1)}} and \textup{\hyperref[ii]{(K2)}} hold.
\end{theorem}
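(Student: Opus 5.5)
The plan is to reduce Theorem~\ref{thm2} to the quotient argument sketched after Theorem~\ref{thm1}. Instead of dividing by a single fixed density, I would use \emph{consecutive} quotients $\rho_j/\rho_{j+1}$. The dynamical structure \eqref{eq:rho_j} makes these quotients time-delayed copies of the single observable $y:=\rho_1/h_\#\rho_1$, so Takens' theorem (Theorem~\ref{thm:takens}) can replace Whitney's theorem.

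\textbf{Regularity and positivity.} Since $h\in\textup{Diff}^2(M,M)$, the Jacobian factor in \eqref{eq:cov} is $C^1$. Hence every $\rho_j=(h^{j-1})_\#\rho_1$ is a strictly positive $C^1$ density, and every quotient $\rho_j/\rho_{j+1}$ is a well-defined function in $C^1(M,\mathbb{R})$.

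\textbf{Step 1: the quotients are delays of $y$.} Use the elementary identity
\[
k_\#a/k_\#b=(a/b)\circ k^{-1}
\]
for positive densities $a,b$ and a diffeomorphism $k$; it holds because the Jacobian factors in \eqref{eq:cov} cancel. Applying it with $k=h^{j-1}$, $a=\rho_1$, $b=h_\#\rho_1$, and noting $\rho_{j+1}=(h^{j-1})_\#(h_\#\rho_1)$, gives $\rho_j/\rho_{j+1}=y\circ h^{-(j-1)}$. Now define
\[
Y:=\bigl(\rho_1/\rho_2,\dots,\rho_{m-1}/\rho_m\bigr)\in C^1(M,\mathbb{R}^{m-1}).
\]
Then $Y\circ h^{m-2}$ equals $\Psi^{(m-1)}_{(y,h)}$ with its coordinates written in reverse order. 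Since $m-1\ge 2d+1$ and $(y,h)\in\boldsymbol{G}$ by Assumption~\ref{assumption:1}, the remark after Theorem~\ref{thm:takens} shows that $\Psi^{(m-1)}_{(y,h)}$ is an embedding. Composing with a coordinate permutation and the diffeomorphism $h^{-(m-2)}$ preserves this, so $Y$ is an injective immersion.

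\textbf{Step 2: (K1).} Suppose $f_\#\rho_j=g_\#\rho_j$ for all $j$. Take consecutive quotients and apply the same identity with $k=f$ and $k=g$. This gives
\[
Y\circ f^{-1}=Y\circ g^{-1}\quad\text{on }N.
\]
Injectivity of $Y$ then yields $f^{-1}=g^{-1}$, hence $f=g$.

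\textbf{Step 3: (K2).} Set $u=v-w$, so that $\textup{div}(\rho_ju)=\rho_j\,\textup{div}\,u+\langle\nabla\rho_j,u\rangle=0$ for every $j$. A direct computation gives
\[
\langle\nabla(\rho_j/\rho_{j+1}),u\rangle
=\frac{\rho_{j+1}\langle\nabla\rho_j,u\rangle-\rho_j\langle\nabla\rho_{j+1},u\rangle}{\rho_{j+1}^2}
=\frac{-\rho_{j+1}\rho_j\,\textup{div}\,u+\rho_j\rho_{j+1}\,\textup{div}\,u}{\rho_{j+1}^2}=0 .
\]
Therefore $dY_x(u(x))=0$ for every $x\in M$. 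Since $Y$ is an immersion, $u\equiv0$, i.e. $v=w$.

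\textbf{Main obstacle.} The crux is Step 1: verifying that the consecutive quotients are exactly delays of $y$ along $h^{-1}$, with the Jacobian factors cancelling, and that reversing and reindexing the delay coordinates preserves the embedding property. Steps 2 and 3 are then the same algebra as in the time-independent case.
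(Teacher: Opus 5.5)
Your proposal is correct and follows essentially the same route as the paper. Both arguments rest on three points. First, the consecutive quotients satisfy $\rho_j/\rho_{j+1}=\psi\circ h^{-(j-1)}$ with $\psi=\rho_1/h_\#\rho_1$. Second, the quotient vector $Y$ coincides with $\Psi^{(m-1)}_{(\psi,h^{-1})}$, which is an embedding by reversing and reindexing the Takens delay map; this is exactly the paper's permutation lemma. Third, injectivity of $Y$ gives (K1), and injectivity of $dY_x$ gives (K2).

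The differences are cosmetic. You obtain the quotient identity from $k_\#a/k_\#b=(a/b)\circ k^{-1}$ instead of the explicit Jacobian chain rule. You also handle (K2) by applying the quotient rule to $u=v-w$, rather than passing through the logarithmic embedding $\log\Psi^{(m-1)}_{(\psi,h^{-1})}$.
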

 
The proof of Theorem \ref{thm2} mirrors that of Theorem~\ref{thm1}, canceling Jacobian factors by taking ratios of the corresponding pushforward densities. The key difference is that we invoke Takens' embedding theorem (Theorem~\ref{thm:takens}) rather than Whitney’s (Theorem~\ref{thm:whitney}). The proof is postponed to Section~\ref{subsec:proofs}.

\paragraph{Pushforward- and Divergence-Based Metrics.} As a corollary of Theorems~\ref{thm1} and~\ref{thm2}, we define new metrics over the space of diffeomorphisms and vector fields based on comparison of pushforward measures and divergence operators evaluated on \textit{finite} collections of densities.

\begin{corollary}[Metrics via pushforward and divergence operators]\label{cor1}
Let $m > 2d + 1$. Let $\mathcal{D}(\cdot,\cdot)$ be a metric on $\mathcal{P}(N)$ and let $\mathbf{d}(\cdot,\cdot)$ be a metric on $C(M,\mathbb{R})$. Suppose that either
\begin{enumerate}[label=(\alph*)]
    \item $(\rho_1,\ldots,\rho_{m}) \in \boldsymbol{D}$, where $\boldsymbol{D}$ is the generic set appearing in Theorem~\ref{thm1}, or
    \item $\rho_j := h^{j-1} _\# \rho_1$ for $1 \le j \le m$, where the pair $(\rho_1,h)$ satisfies Assumption~\ref{assumption:1}.
\end{enumerate}
Then the following hold:
\begin{enumerate}
    \item[(i)] The function
    \[
    \mathfrak{D}(f,g) := \sum_{j=1}^{m} \mathcal{D}\big(f _\# \rho_j,\, g _\# \rho_j\big)
    \]
    defines a metric on $\mathrm{Diff}^1(M,N)$.
    
    \item[(ii)] The function
    \[
    \mathsf{D}(v,w) := \sum_{j=1}^{m} \mathbf{d}\big(\textup{div} (\rho_j v),\, \textup{div} (\rho_j w)\big)
    \]
    defines a metric on $\mathfrak{X}^1(M)$.
    
\end{enumerate}
\end{corollary}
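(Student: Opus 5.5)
We verify the metric axioms for $\mathfrak{D}$ and $\mathsf{D}$ directly. Every axiom except the identity of indiscernibles is inherited termwise from $\mathcal{D}$ and $\mathbf{d}$. The remaining axiom is exactly the uniqueness statement \hyperref[i]{(K1)} or \hyperref[ii]{(K2)}, which Theorem~\ref{thm1} (case (a)) or Theorem~\ref{thm2} (case (b)) supplies. The same argument covers both cases once we invoke the appropriate theorem.

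\emph{Well-definedness.} For (i), each $f_\#\rho_j$ lies in $\mathcal{P}(N)$, so $\mathcal{D}(f_\#\rho_j,g_\#\rho_j)$ is a finite nonnegative number. For (ii), we need $\textup{div}(\rho_j v)\in C(M,\mathbb{R})$ for $v\in\mathfrak{X}^1(M)$.
\begin{itemize}
\item In case (a), $\rho_j\in C^1$, so $\rho_j v$ is a $C^1$ vector field and its divergence is continuous.
\item In case (b), $\rho_j=(h^{j-1})_\#\rho_1$ with $h\in\textup{Diff}^2(M,M)$. The change of variables formula \eqref{eq:cov} gives $\rho_j=\rho_1\circ h^{-(j-1)}\cdot|J_{h^{-(j-1)}}|$. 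Here $\rho_1\circ h^{-(j-1)}$ is $C^1$, and the Jacobian factor is $C^1$ because $h^{-(j-1)}$ is $C^2$. Hence $\rho_j\in C^1$ and $\textup{div}(\rho_j v)$ is again continuous.
\end{itemize}
This regularity check is the only place where the cases need separate attention.

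\emph{Easy axioms.} Nonnegativity and symmetry follow termwise from the corresponding properties of $\mathcal{D}$ and $\mathbf{d}$. If $f=g$, every term vanishes, so $\mathfrak{D}(f,f)=0$, and likewise $\mathsf{D}(v,v)=0$. For the triangle inequality, apply the triangle inequality for $\mathcal{D}$ to the triple $f_\#\rho_j, g_\#\rho_j, k_\#\rho_j$ for each $j$ and sum over $j$. The same termwise argument with $\mathbf{d}$ handles $\mathsf{D}$.

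\emph{Identity of indiscernibles.} Suppose $\mathfrak{D}(f,g)=0$. Since the sum has nonnegative terms, $\mathcal{D}(f_\#\rho_j,g_\#\rho_j)=0$ for each $j$. Because $\mathcal{D}$ is a metric, this gives $f_\#\rho_j=g_\#\rho_j$ for $1\le j\le m$. Then \hyperref[i]{(K1)} holds by Theorem~\ref{thm1} in case (a) and by Theorem~\ref{thm2} in case (b), so $f=g$. The argument for $\mathsf{D}$ is identical: $\mathsf{D}(v,w)=0$ forces $\textup{div}(\rho_jv)=\textup{div}(\rho_jw)$ for all $j$, and \hyperref[ii]{(K2)} then yields $v=w$.

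All the substantive work is already contained in Theorems~\ref{thm1} and~\ref{thm2}. The only point requiring care is the $C^1$ regularity of the $\rho_j$ in the dynamical case (b), which is where the $\textup{Diff}^2$ hypothesis on $h$ is used.
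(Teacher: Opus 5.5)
Your proposal is correct and matches the paper's proof. Symmetry, vanishing on the diagonal, and the triangle inequality are inherited termwise from $\mathcal{D}$ and $\mathbf{d}$, and positivity for distinct arguments is exactly (K1)/(K2) from Theorem~\ref{thm1} in case (a) and Theorem~\ref{thm2} in case (b); the paper phrases this contrapositively ($f\neq g \Rightarrow$ some term is positive), and your extra check that $\rho_j\in C^1$ in case (b), which makes $\textup{div}(\rho_j v)\in C(M,\mathbb{R})$ and $\mathsf{D}$ well defined, is a correct detail the paper leaves implicit.
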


Additionally, it is straightforward to verify that if $\|\cdot\|$ is any norm on $C(M,\mathbb{R})$, then by linearity of the divergence operator
\begin{equation}\label{eq:norm}
    \| v \|_\mathsf{D}= \sum_{j=1}^{m}\left\| \textup{div}( \rho_j v)\right\|, \qquad v \in \mathfrak{X}^1(M),
\end{equation}
 defines a norm on $\mathfrak{X}^1(M)$ when assumption (a) or (b) of Corollary \ref{cor1} holds.

 In Section~\ref{subsec:generative}, we discuss stability properties of  the metric $\mathfrak{D}$ in the context of generative modeling, and in Section \ref{sec:numerics},  we use $\mathfrak{D}$ and $\mathsf{D}$ to construct loss functions which can be used to numerically recover pushforward maps and vector fields from finite measure-valued data.
 
\section{Application to Learning from Distributions}\label{subsec:discussion}

In this section, we examine the implications of the main theoretical results presented earlier in Section~\ref{sec:results} for a range of measure-valued learning tasks that arise throughout data-driven science and engineering. In Section~\ref{subsec:dynamics}, we consider applications to data-driven dynamical systems, focusing on the unique recovery of Perron--Frobenius and Koopman operators. In Section~\ref{subsec:PDE}, we establish new theoretical guarantees for the unique solution of certain PDE inverse problems, including those associated with continuity, Fokker--Planck, and advection-diffusion-reaction equations. Finally, in Section~\ref{subsec:generative}, we discuss how our results may inform the design and analysis of generative models.

\subsection{Unique Recovery of Perron--Frobenius and Koopman Operators}\label{subsec:dynamics}

\paragraph{Perron--Frobenius Operator Recovery.}
Across a wide range of physical, biological, and engineering applications, one seeks to infer an underlying dynamical evolution rule $f : M \to M$ from observed measurement data. In some settings, the available data consists of state trajectories $\{ f^j(x) \}_j$ generated from a fixed initial condition $x \in M$, in which case model identification proceeds by directly fitting the observed trajectories.

A second, and increasingly common, data regime arises when one observes a temporal sequence of probability distributions $\{ \rho(t_j) \}_j \subset \mathcal{P}(M)$, reflecting uncertainty, noise, or population-level variability in the system state. In this setting, the dynamics are naturally described by the Perron--Frobenius operator (PFO), also known as the transfer operator, which governs the evolution of densities under the action of the map $f$. The PFO lifts the finite-dimensional, nonlinear dynamics on $M$ to a linear evolution on an infinite-dimensional space of measures, and has been widely used for modeling, analysis, and prediction of dynamical systems from data across diverse biological and engineering applications \cite{schutte2001transfer,froyland2010coherent,klus2015numerical}.
\begin{definition}[Perron--Frobenius Operator \cite{lasota2013chaos}]
    Given a measure space $(X,\mathscr{B},\mu)$, and a non-singular dynamical system $f:X\to X$, i.e., $\mu(B) = 0 $ implies $\mu(f^{-1}(B)) = 0$ for all $B\in \mathscr{B}$, the PFO is the unique linear operator $\mathcal{T}:L_{\mu}^1(X) \to L_{\mu}^1(X)$ defined by the relationship 
    \begin{equation}\label{eq:PFO}
        \int_B \mathcal{T}\phi\, \textrm{d}\mu = \int_{f^{-1}(B)} \phi \, \textrm{d}
        \mu,\qquad \forall B\in \mathscr{B},\qquad \phi \in L_{\mu}^1(X).
    \end{equation}
\end{definition}

In our setting, we take $X = M$ to be a smooth, compact Riemannian manifold, let $\mathscr{B}$ denote the Borel $\sigma$-algebra on $M$, and let $\mu$ be the associated volume measure. For a diffeomorphism $f \in \mathrm{Diff}^1(M,M)$, the PFO defined in~\eqref{eq:PFO} reduces, when acting on densities, to the classical change-of-variables formula. In particular, for any $\rho \in D_+^1(M,\mathbb{R})$, we have
\[
(\mathcal{T}\rho)(x) = \rho\big(f^{-1}(x)\big)\, J_{f^{-1}}(x) = (f_\# \rho)(x),
\]
where $J_f$ denotes the Jacobian determinant of $f$.

As a consequence, Theorems~\ref{thm1} and~\ref{thm2} admit an immediate reformulation in terms of Perron--Frobenius operators. From this perspective, Theorem~\ref{thm1} asserts that the PFO associated with a diffeomorphism is \emph{uniquely determined} by its action on $m$ smooth densities belonging to a generic set. More precisely, let $\mathcal{T}_f$ and $\mathcal{T}_g$ denote the PFOs corresponding to $f,g \in \mathrm{Diff}^1(M,M)$. If $(\rho_1,\ldots,\rho_{m}) \in \boldsymbol{D}$, where $\boldsymbol{D}$ is the generic set appearing in Theorem~\ref{thm1}$,$ and
\[
\mathcal{T}_f \rho_j = \mathcal{T}_g \rho_j, \qquad 1 \le j \le m,
\]
then it follows that $f = g$, and hence $\mathcal{T}_f = \mathcal{T}_g$.

Similarly, Theorem~\ref{thm2} provides conditions under which a finite trajectory of densities
\[
\{\mathcal{T}^j \rho : 1 \le j \le m\}
\]
generated by repeated application of the PFO suffices to recover the underlying dynamical operator. This interpretation highlights the relevance of our results to data-driven identification of transfer operators from finitely many measure-valued observations.

 \paragraph{Koopman Operator Recovery.}
 The PFO is adjoint to the well-known Koopman operator, which has also been used in a variety of data-driven prediction, estimation, and control applications \cite{mezic2013analysis,klus2015numerical,colbrook2025introductory}. While the PFO acts on the space of densities, the Koopman operator evolves observables $y:X\to \mathbb{R}$, which are measurement functions mapping the state of the dynamical system to a real number. 
\begin{definition}[Koopman Operator \cite{lasota2013chaos}]
    Given a measure space $(X,\mathscr{B},\mu)$ and a non-singular system $f:X\to X$, the Koopman operator $\mathcal{K}:L_{\mu}^{\infty}(X)\to L_{\mu}^{\infty}(X)$ is defined by $\mathcal{K} \phi = \phi \circ f$.
\end{definition}
 
While the main theory of this paper focuses on the unique recovery of Perron--Frobenius operators from measure-valued data, analogous questions can be posed for the Koopman operator. In particular, one may ask whether a dynamical system or vector field can be uniquely identified from its action on a finite collection of scalar observables.

In direct analogy with \textup{\hyperref[i]{(K1)}} and \textup{\hyperref[ii]{(K2)}}, we consider the following conditions. Given a finite set of observables $y_1,\ldots,y_m \in C^1(M,\mathbb{R})$, we ask when the implications below hold:
\begin{enumerate}
    \item[(K3)]\label{iii}
    For diffeomorphisms $f,g \in \mathrm{Diff}^1(M,M)$,
    \[
    y_j \circ f = y_j \circ g \quad \text{for } 1 \le j \le m
    \]
    implies that $f = g$.
    
    \item[(K4)]\label{iv}
    For vector fields $v,w \in \mathfrak{X}^1(M)$,
    \[
\langle \nabla y_j, v\rangle = \langle \nabla y_j , w\rangle\quad \text{for } 1 \le j \le m
    \]
    implies that $v = w$.
\end{enumerate}

We now restate Theorem~\ref{thm1} in the context of Koopman operators.

\begin{proposition}\label{prop:koopman1}
Fix $m \ge 2d + 1$. There exists a generic subset $\boldsymbol{W}_m \subset C^1(M,\mathbb{R}^m)$ such that, for every $(y_1,\ldots,y_m) \in \boldsymbol{W}_m$, both \textnormal{\hyperref[iii]{(K3)}} and \textnormal{\hyperref[iv]{(K4)}} hold.
\end{proposition}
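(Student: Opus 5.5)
The natural choice is to take $\boldsymbol{W}_m$ to be exactly the set of embeddings from Theorem~\ref{thm:whitney}; it is generic in $C^1(M,\mathbb{R}^m)$ because $m \ge 2d+1$. Write $Y=(y_1,\ldots,y_m)$ and fix $Y\in\boldsymbol{W}_m$. Both (K3) and (K4) should then follow from the two defining properties of an embedding: (K3) from injectivity of $Y$ (Definition~\ref{def:embedding}\textit{(i)}), and (K4) from injectivity of each differential $dY_x$ (Definition~\ref{def:embedding}\textit{(ii)}).

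For \hyperref[iii]{(K3)}, suppose $f,g\in\mathrm{Diff}^1(M,M)$ satisfy $y_j\circ f = y_j\circ g$ for $1\le j\le m$. Stacking these identities gives $Y(f(x)) = Y(g(x))$ for every $x\in M$. Since $Y$ is injective, $f(x)=g(x)$ for all $x$, so $f=g$. This argument does not use that $f$ and $g$ are diffeomorphisms, only that they are maps $M\to M$.

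For \hyperref[iv]{(K4)}, suppose $v,w\in\mathfrak{X}^1(M)$ satisfy $\langle\nabla y_j,v\rangle=\langle\nabla y_j,w\rangle$ for all $j$. Put $u:=v-w$. By the defining property of the Riemannian gradient, $\langle \nabla y_j(x), u(x)\rangle_{r_x} = d(y_j)_x(u(x))$. Hence the hypothesis says
\[
dY_x(u(x)) = \big(d(y_1)_x(u(x)),\ldots,d(y_m)_x(u(x))\big) = 0 \quad \text{for every } x\in M.
\]
Since $dY_x:T_xM\to\mathbb{R}^m$ is injective, $u(x)=0$ for every $x$, and therefore $v=w$.

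There is no substantive obstacle here. The only point needing care is the identification of $\langle\nabla y_j,u\rangle$ with $d(y_j)_x(u)$, so that the $m$ scalar conditions become exactly the vanishing of $dY_x(u(x))$. Openness and density of $\boldsymbol{W}_m$ come directly from Theorem~\ref{thm:whitney}. Compared with Theorem~\ref{thm1}, no Jacobian factor appears (composition with $f$ introduces no $|\det df|$), so no quotient is needed. That is why $m\ge 2d+1$ suffices here rather than $m>2d+1$.
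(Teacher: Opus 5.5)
Your proposal is correct and is essentially the paper's own proof. The paper also takes $\boldsymbol{W}_m$ to be Whitney's generic set of embeddings, obtains (K3) from injectivity of $Y$ and (K4) from injectivity of $dY_x$. Your identification $\langle \nabla y_j(x), u(x)\rangle_{r_x} = d(y_j)_x(u(x))$ just makes explicit a step the paper leaves implicit.
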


Similarly, there is an analogue of Theorem~\ref{thm2} for Koopman operators.

\begin{proposition}\label{prop:koopman2}
Fix $m \ge 2d + 1$. There exists a generic subset $\boldsymbol{G} \subset C^1(M,\mathbb{R}) \times \mathrm{Diff}^1(M,M)$ such that for every pair $(y,h) \in \boldsymbol{G}$, defining the observables
\[
y_j := y \circ h^{j-1}, \qquad 1 \le j \le m,
\]
both \textnormal{\hyperref[iii]{(K3)}} and \textnormal{\hyperref[iv]{(K4)}} hold.
\end{proposition}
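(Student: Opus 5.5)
We take $\boldsymbol{G}$ to be exactly the generic set from Theorem~\ref{thm:takens}, so genericity needs no separate argument, and reduce both implications to properties of the delay map. For $(y,h)\in\boldsymbol{G}$ and $y_j:=y\circ h^{j-1}$, the vector-valued observable $Y:=(y_1,\dots,y_m)$ is precisely the delay map $\Psi_{(y,h)}^{(m)}$. Since $m\ge 2d+1$, the remark following Theorem~\ref{thm:takens} tells us that $\Psi^{(m)}_{(y,h)}$ is an embedding of $M$: the first $2d+1$ components already form an embedding, and appending further $C^1$ components keeps the map injective and keeps its differential injective. So it suffices to show that (K3) and (K4) hold whenever $Y=(y_1,\dots,y_m)\in C^1(M,\mathbb{R}^m)$ is an embedding. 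This is the same reduction that proves Proposition~\ref{prop:koopman1}, with $\boldsymbol{W}_m$ replaced by the set of pairs whose delay map lies in $\boldsymbol{W}_m$.

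For (K3), suppose $y_j\circ f=y_j\circ g$ for all $1\le j\le m$. Then $Y(f(x))=Y(g(x))$ for every $x\in M$. Injectivity of the embedding, condition (i) of Definition~\ref{def:embedding}, gives $f(x)=g(x)$ for every $x$, so $f=g$.

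For (K4), suppose $\langle\nabla y_j,v\rangle=\langle\nabla y_j,w\rangle$ for all $j$. Since $\langle \nabla y_j, u\rangle(x)=d(y_j)_x(u(x))$, this says $dY_x(v(x)-w(x))=0$ at every $x\in M$. Condition (ii) of Definition~\ref{def:embedding} says $dY_x$ is injective, so $v(x)=w(x)$ for all $x$, and hence $v=w$.

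There is no substantial obstacle. The only point to check is the hypothesis of Theorem~\ref{thm:takens}: it concerns pairs $(y,h)$ with $h\in\mathrm{Diff}^1$, which matches the setting of the proposition. One must also confirm that padding from $2d+1$ to $m$ components preserves both injectivity and immersivity. That is immediate: if a tuple of values (or of differentials) already separates points (or tangent vectors), any longer tuple containing it does too.
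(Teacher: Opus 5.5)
Your proposal is correct and matches the paper's proof essentially step for step. You take $\boldsymbol{G}$ from Theorem~\ref{thm:takens}, note that $(y_1,\dots,y_m)=\Psi^{(m)}_{(y,h)}$ is an embedding for $m\ge 2d+1$, and then use injectivity of $\Psi$ for (K3) and injectivity of $d\Psi_x$ for (K4); your explicit checks of the padding step and of $\langle \nabla y_j, u\rangle(x)=d(y_j)_x(u(x))$ just spell out details the paper leaves implicit.
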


The proofs of Propositions~\ref{prop:koopman1} and~\ref{prop:koopman2}, which are presented in Section~\ref{subsec:proofs}, are considerably less involved than those of Theorems~\ref{thm1} and~\ref{thm2}. This simplification stems from the fact that, in the Koopman setting, no Jacobian determinant arises from the action of the operator. In addition, Proposition~\ref{prop:koopman2} does not require any auxiliary technical assumptions, and the number of observables needed for identifiability is $m \geq 2d+1$ rather than $m> 2d+1$.

Both propositions follow as direct applications of Whitney’s and Takens’ embedding theorems. Finally, in analogy with Corollary~\ref{cor1}, one may also define metrics on the spaces of diffeomorphisms and vector fields by comparing the action of Koopman operators on the finite family of observables $\{y_j\}$. 

\begin{corollary}[Metrics via composition and gradient operators]\label{cor2}
Let $m \ge 2d + 1$ and let $\mathbf{d}(\cdot,\cdot)$ be a metric on $C(M,\mathbb{R})$. Suppose that either
\begin{enumerate}[label=(\alph*)]
    \item $(y_1,\ldots,y_{m}) \in \boldsymbol{W}_m$, where $\boldsymbol{W}_m$ is the generic set appearing in Proposition~\ref{prop:koopman1}, or
    \item $y_j:= y_1\circ h^{j-1}$ for $1 \le j \le m$ where  $(y,h)\in \boldsymbol{G}$, the generic set from Proposition~\ref{prop:koopman2}.
\end{enumerate}
Then the following hold: 
\begin{enumerate}
    \item[(i)] The function
    \[
    \mathfrak{D}(f,g) := \sum_{j=1}^{m} \mathbf{d}\big(y_j\circ f,\, y_j\circ g\big)
    \]
    defines a metric on $\mathrm{Diff}^1(M,M)$.
    
    \item[(ii)] The function
    \[
    \mathsf{D}(v,w) := \sum_{j=1}^{m} \mathbf{d}\big(\langle \nabla y_j , v\rangle,\, \langle \nabla y_j, w\rangle\big)
    \]
    defines a metric on $\mathfrak{X}^1(M)$.  
\end{enumerate}
\end{corollary}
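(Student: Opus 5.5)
The statement immediately above is Corollary~\ref{cor2}. The plan is to check the metric axioms directly. Non-negativity, symmetry and the triangle inequality are inherited term by term from $\mathbf{d}$, because $\mathfrak{D}$ and $\mathsf{D}$ are finite sums of pullbacks of $\mathbf{d}$. The only real content is the identity of indiscernibles, and this is exactly what (K3) and (K4) supply.

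Before that, I would confirm the maps involved land in $C(M,\mathbb{R})$, so that $\mathbf{d}$ can be applied:
\begin{itemize}
\item If $y_j\in C^1(M,\mathbb{R})$ and $f\in\mathrm{Diff}^1(M,M)$, then $y_j\circ f$ is continuous.
\item If $v\in\mathfrak{X}^1(M)$, then $\langle\nabla y_j,v\rangle$ is continuous, since $\nabla y_j$ is a continuous vector field.
\end{itemize}

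The remaining axioms then go as follows.
\begin{itemize}
\item \emph{Symmetry.} This is immediate from the symmetry of each summand.
\item \emph{Triangle inequality.} For $f,g,k$,
\[
\mathbf{d}(y_j\circ f,y_j\circ k)\le \mathbf{d}(y_j\circ f,y_j\circ g)+\mathbf{d}(y_j\circ g,y_j\circ k).
\]
Summing over $j$ gives the triangle inequality for $\mathfrak{D}$, and the same argument works verbatim for $\mathsf{D}$.
\item \emph{Zero on the diagonal.} $\mathfrak{D}(f,f)=0$ and $\mathsf{D}(v,v)=0$ are trivial.
\end{itemize}

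For the converse, suppose $\mathfrak{D}(f,g)=0$. Since each summand is non-negative, every term vanishes, so $y_j\circ f=y_j\circ g$ for $1\le j\le m$ because $\mathbf{d}$ is a metric. Under hypothesis (a), Proposition~\ref{prop:koopman1} gives (K3), hence $f=g$. Under hypothesis (b), Proposition~\ref{prop:koopman2} gives (K3) for $y_j=y\circ h^{j-1}$ with $(y,h)\in\boldsymbol{G}$, hence again $f=g$. The same reasoning applied to $\mathsf{D}(v,w)=0$ yields $\langle\nabla y_j,v\rangle=\langle\nabla y_j,w\rangle$ for all $j$, and then (K4) from the same propositions gives $v=w$.

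I expect no step to be a genuine obstacle, since the work is delegated to Propositions~\ref{prop:koopman1} and~\ref{prop:koopman2}. The only point needing care is a notational one: case (b) writes $y_1\circ h^{j-1}$ with $(y,h)\in\boldsymbol{G}$. I would read $y_1=y$, so that the family matches Proposition~\ref{prop:koopman2} exactly.
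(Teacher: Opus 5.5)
Your proposal is correct and follows essentially the same route as the paper's proof: symmetry and the triangle inequality are inherited termwise from $\mathbf{d}$, and the identity of indiscernibles comes from (K3)/(K4) via Propositions~\ref{prop:koopman1} and~\ref{prop:koopman2}, where you argue $\mathfrak{D}(f,g)=0\Rightarrow f=g$ and the paper argues the equivalent contrapositive $f\neq g\Rightarrow \mathfrak{D}(f,g)>0$. Your reading $y_1=y$ in case (b) matches the paper's proof, which writes $(y_1,h)\in\boldsymbol{G}$.
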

Similar to \eqref{eq:norm}, if $\|\cdot\|$ is a norm over $C(M,\mathbb{R})$ then

$$
\|v\|_{\mathsf{D}} = \sum_{j=1}^m \| \langle \nabla \rho_j ,v \rangle \|, \qquad v \in \mathfrak{X}^1(M),
$$
defines a norm on $\mathfrak{X}^1(M)$ whenever assumptions (a) or (b) of Corollary \ref{cor2} hold.
\subsection{PDE Inverse Problems}\label{subsec:PDE}
Our theory can be directly applied to provide new well-posedness guarantees for inverse problems arising from evolution equations of the form 
\begin{equation}\label{eq:PDE_1}
    \partial_t \rho(t,x) + \mathcal{L}[\rho(t,x)] = 0, \qquad \rho(0,x) = \rho_0(x), \qquad t\in (0,T),\qquad x\in M.
\end{equation}
In particular, we consider \eqref{eq:PDE_1} based upon the following choices for $\mathcal{L}[\cdot]$:
\begin{equation}\label{eqPDE}
    \begin{cases}
        \text{Continuity Eqn. (CE)} & \mathcal{L}[\rho] = \textup{div} (\rho v)   \\
        \text{Advection Eqn. (AE)}  &  \mathcal{L}[\rho] = \langle\nabla \rho , v\rangle  \\ 
        \text{Advection-Diffusion-Reaction (ADR) Eqn.} & \mathcal{L}[\rho] =  \textup{div} (\rho v) - \textup{div} (D \nabla \rho) + R(\rho)  
    \end{cases}.
\end{equation}

In \eqref{eqPDE}, $v:M\to TM$ is a vector field, $D:TM\to TM$ is a symmetric positive-definite diffusion tensor, and $R(\cdot)$ is a reaction functional. Collectively, these equations model a wide range of complex physical and biological phenomena, from fluid transport to chemically reacting systems. They have also been employed in numerous computational inverse problems, including the identification of tumor-growth dynamics from patient-specific data \cite{hogea2008image,liu2014patient,tong2020trajectorynet,liu2021solving,blickhan2025dice}. 

Note that when $R(\rho) = 0$ the ADR equation reduces to a Fokker--Planck equation, and when additionally $D = 0$ it reduces to the CE. While the theoretical foundations of many inverse problems associated with \eqref{eqPDE} have been studied extensively, rigorous guarantees for the unique recovery of the underlying vector field $v$ from \emph{finite snapshot data} remain limited. In this section, we leverage our main theoretical results to address the following question:

\begin{enumerate}
    \item[(Q4)] \label{Q4}
    \textit{Given finitely many solution snapshots $\{\rho(t_j,x)\}_{j=0}^{m}$ of Equation~\eqref{eq:PDE_1}, and assuming that the diffusion tensor $D$ and reaction functional $R$ are known, under what conditions does this data uniquely determine the vector field $v$?}
\end{enumerate}

Throughout this section, we assume that the solution snapshots are uniformly spaced in time, i.e., $t_j = j\,\Delta t \in [0,T]$. 
By interpreting the solution operators of the CE and AE as pushforward and composition operators, respectively, we are able to use our main time-dependent results (Theorem \ref{thm2} and Proposition \ref{prop:koopman2}) to uniquely recover the time-$\Delta t$ flow map $f_{\Delta t}:M\to M$ from the observed data $\{\rho(t_j,x)\}_{j=0}^{m}$.  In order to  recover the vector field $v$, we must assume access to additional data incorporating time-derivatives of the observed states: 
\begin{equation*}\label{eq:data_change}
    \{ (\rho(t_j,x),\, \partial_t \rho(t_j,x)) \}_{j=0}^{m} , \qquad \text{ or equivalently } \qquad \{ (\rho(t_j,x),\, \mathcal{L}[\rho(t_j,x))] \}_{j=0}^{m} .
\end{equation*}
In what follows, we write $f_{\Delta t} = f^{(v)}_{\Delta t}$, $\mathcal{L} = \mathcal{L}^{(v)}$, and $\rho(t) = \rho^{(v)}(t)$ to highlight the dependence of the flow map, differential operator, and PDE solution on the vector field $v$.
    \begin{corollary}[Inversion for CEs and AEs]\label{cor:CE}
Let $v,w \in \mathfrak{X}^2(M)$, fix $m > 2d+1$, and let $t \mapsto \rho^{(v)}(t)$ and $t \mapsto \rho^{(w)}(t)$, $t \in [0,T]$, be strong solutions of either
\begin{enumerate}
    \item[(a)] continuity equations (CEs), where $(\rho_0, f^{(v)}_{\Delta t})$ satisfies Assumption~\ref{assumption:1}, or
    \item[(b)] advection equations (AEs), where $(\rho_0, f^{(v)}_{\Delta t}) \in \boldsymbol{G}$, the generic set from Theorem~\ref{thm:takens}.
\end{enumerate}
If $\rho^{(v)}(t_j,x) = \rho^{(w)}(t_j,x)$ for all $x \in M$ and $0 \leq j \leq m$, then the time-$\Delta t$ flow maps coincide pointwise, i.e., 
\[
f_{\Delta t}^{(v)}(x) = f_{\Delta t}^{(w)}(x) \quad \text{ for all }  x \in M.
\]
If, in addition, $
\mathcal{L}^{(v)}\big[\rho^{(v)}(t_j,\cdot)\big](x) = \mathcal{L}^{(w)}\big[\rho^{(w)}(t_j,\cdot)\big](x)$ for all $x \in M$ and $0 \leq j \leq m-1$, then the vector fields coincide pointwise, i.e.,
\[
v(x) = w(x) \quad \text{for all } x \in M.
\]
\end{corollary}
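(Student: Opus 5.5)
Both parts rest on the fact that the solution operator of the equation is a pushforward (CE) or a composition (AE) by the flow. After this identification, Theorem~\ref{thm2}, Proposition~\ref{prop:koopman2}, and the (K2)/(K4) conclusions finish the argument.

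\textbf{Step 1: solution operators.} Since $v,w\in\mathfrak{X}^2(M)$ and $M$ is compact, the flows $f^{(v)}_t, f^{(w)}_t$ are global $C^2$ diffeomorphisms, so $f^{(v)}_{\Delta t}\in\mathrm{Diff}^2(M,M)$. Uniqueness of strong solutions, via the method of characteristics, gives
\begin{itemize}
\item for the CE: $\rho^{(v)}(t)=(f^{(v)}_t)_\#\rho_0$;
\item for the AE: $\rho^{(v)}(t)=\rho_0\circ f^{(v)}_{-t}$.
\end{itemize}
Write $h:=f^{(v)}_{\Delta t}$ and $\tilde h:=f^{(w)}_{\Delta t}$. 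By the group property, in case (a)
\[
\rho^{(v)}(t_j)=(h^{j})_\#\rho_0,
\]
which is exactly the construction \eqref{eq:rho_j} with $\rho_1:=\rho_0$. In case (b), $\rho^{(v)}(t_j)=\rho_0\circ h^{-j}$.

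\textbf{Step 2: recovering the flow map.} Set $\rho_{j+1}:=\rho^{(v)}(t_j)$ for $0\le j\le m-1$. Hypothesis (a) is exactly Assumption~\ref{assumption:1} for $(\rho_0,h)$, so Theorem~\ref{thm2} gives (K1) for $\rho_1,\dots,\rho_m$. Equality of the snapshots gives $\rho^{(v)}(t_{j+1})=\rho^{(w)}(t_{j+1})$, that is,
\[
h_\#\rho_{j+1}=\tilde h_\#\rho^{(w)}(t_j)=\tilde h_\#\rho_{j+1}, \qquad 0\le j\le m-1.
\]
Applying (K1) with $f=h$ and $g=\tilde h$ yields $h=\tilde h$.

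For (b), the snapshot relation reads $\rho_0\circ h^{-(j+1)}=\rho_0\circ\tilde h^{-1}\circ h^{-j}$. I would rewrite this as $y_j\circ h^{-1}=y_j\circ\tilde h^{-1}$, where $y_j:=\rho_0\circ h^{-j}$ are the delay observables of $(\rho_0,h^{-1})$. Then (K3) from Proposition~\ref{prop:koopman2} gives $h^{-1}=\tilde h^{-1}$.

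A subtlety here: the hypothesis is stated as $(\rho_0,h)\in\boldsymbol{G}$, but the natural delay map is built from $h^{-1}$. Since $\Psi^{(k)}_{(\rho_0,h^{-1})}$ is $\Psi^{(k)}_{(\rho_0,h)}\circ h^{-(k-1)}$ with its coordinates reversed, the embedding property transfers. This costs at most one extra snapshot, which the available $m+1$ snapshots supply.

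\textbf{Step 3: recovering the vector field.} The extra data give $\mathcal{L}^{(v)}[\rho_{j+1}]=\mathcal{L}^{(w)}[\rho_{j+1}]$ for $j\le m-1$, since the snapshots of the two solutions agree.
\begin{itemize}
\item For the CE this is $\mathrm{div}(\rho_j v)=\mathrm{div}(\rho_j w)$ for $1\le j\le m$. Theorem~\ref{thm2} gives (K2) for these densities, hence $v=w$.
\item For the AE this is $\langle\nabla y_j,v\rangle=\langle\nabla y_j,w\rangle$. Proposition~\ref{prop:koopman2}, with the same delay-observable identification as in Step 2, gives (K4), hence $v=w$.
\end{itemize}
Note that Step 3 alone already yields $v=w$. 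Step 2 is the standalone claim about flow maps, and it also follows from Step 3, since equal vector fields generate equal flows.

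\textbf{Main obstacle.} The delicate point is the bookkeeping in the AE case. One must match the forward/backward time direction of the Koopman delay observables with the hypothesis $(\rho_0,f^{(v)}_{\Delta t})\in\boldsymbol{G}$, and must check that the indices $0\le j\le m$ (snapshots) and $0\le j\le m-1$ (operator data) supply at least $2d+1$ delay coordinates. My first attempt would reindex the observables as $y_j=\rho^{(v)}(t_{m-j})=\rho_0\circ h^{-(m-j)}$. Then $(y_0,\dots,y_m)$ equals $\Psi^{(m+1)}_{(\rho_0,h)}\circ h^{-m}$ with its coordinates reversed, which is an embedding composed with a diffeomorphism, so both (K3) and (K4) follow directly from the embedding property.

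A secondary point is justifying the representation $\rho(t)=(f_t)_\#\rho_0$ for strong solutions. This is a standard uniqueness result for linear transport equations with $C^1$ (here $C^2$) fields.
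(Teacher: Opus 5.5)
Your proof is correct and follows essentially the same approach as the paper. For the CE you apply Theorem~\ref{thm2} to the shifted densities $\rho^{(v)}(t_j)$, $0\le j\le m-1$; for the AE you use the reversal-and-composition trick of Lemma~\ref{lemma:permute_embed} to pass from $(\rho_0,h)\in\boldsymbol{G}$ to the embedding $\Psi^{(m)}_{(\rho_0,h^{-1})}$, whose injectivity and injective differential give $h=\tilde h$ and $v=w$. A few side remarks are slightly off, without affecting the main argument: the transfer costs no extra snapshot; the intermediate identity should read $\rho_0\circ h^{-j}\circ\tilde h^{-1}$; and your alternative reindexing can only use $y_1,\dots,y_m$, since no relation is available for $y_0=\rho^{(v)}(t_m)$ and no coordinate reversal occurs there.
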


In Corollary \ref{cor:CE}, we assume sufficient regularity on the vector fields $v,w\in \mathfrak{X}^2(M)$ and initial data $\rho_0\in C^1(M,\mathbb{R})$ such that $t\mapsto \rho^{(v)}(t)$ and $t\mapsto \rho^{(w)}(t)$ yield classical solutions of the CE and AE. Further work is needed to extend Corollary \ref{cor:CE} to encompass ADR equations. Indeed, our main analytical technique for proving Corollary \ref{cor:CE} involves interpreting the solution map to CEs and AEs as pushforward and composition operators over the space $M$, while the solution map of the ADR equation generally lacks this structure. However, we can still provide guarantees for the unique recovery of the vector field governing an ADR equation when we observe the action of its differential operator on $m$ initial conditions belonging to the generic set $\boldsymbol{D}$. 
\begin{corollary}[Inversion for ADR equations]\label{cor:ADR}
Fix $m > 2d+1$, and let $(\rho_1,\dots,\rho_{m})\in \boldsymbol{D}$, where $\boldsymbol{D}$ is the generic set from Theorem~\ref{thm1}. Further assume that
     $v,w\in \mathfrak{X}^1(M)$, that  the diffusion tensor $D:TM\to TM$ is bounded and measurable, and that  the reaction functional $R:L^1(M)\to L^1(M)$. If $$\mathcal{L}^{(v)}[\rho_j] = \mathcal{L}^{(w)}[\rho_j], \quad \text{ weakly for } 1\leq j \leq m,$$  where $\mathcal{L}$ is the differential operator of the ADR equation, then 
     \[
v(x) = w(x) \quad \text{for all } x \in M.
\] 
\end{corollary}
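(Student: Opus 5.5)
The plan is to reduce Corollary~\ref{cor:ADR} directly to property \hyperref[ii]{(K2)} from Theorem~\ref{thm1}, by cancelling every term of the ADR operator that does not depend on the vector field. Write out the weak identity $\mathcal{L}^{(v)}[\rho_j] = \mathcal{L}^{(w)}[\rho_j]$ for each $1\le j\le m$: for every test function $\phi\in C^\infty(M)$,
\[
-\int_M \rho_j\langle v,\nabla\phi\rangle\,\textrm{d}x + \int_M \langle D\nabla\rho_j,\nabla\phi\rangle\,\textrm{d}x + \int_M R(\rho_j)\phi\,\textrm{d}x
= -\int_M \rho_j\langle w,\nabla\phi\rangle\,\textrm{d}x + \int_M \langle D\nabla\rho_j,\nabla\phi\rangle\,\textrm{d}x + \int_M R(\rho_j)\phi\,\textrm{d}x .
\]
Since $D$ and $R$ are the same on both sides and $\rho_j\in C^1$, the diffusion and reaction terms are finite and identical, so they cancel. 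Boundedness and measurability of $D$, together with $\nabla\rho_j\in C^0$, make the diffusion integral well defined; $R(\rho_j)\in L^1$ makes the reaction integral well defined against bounded $\phi$. After cancellation we are left with
\[
\int_M \rho_j\langle v-w,\nabla\phi\rangle\,\textrm{d}x = 0 \quad\text{for all }\phi.
\]
In other words, $\textup{div}(\rho_j v)=\textup{div}(\rho_j w)$ holds in the weak sense.

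The next step is to upgrade this weak equality to a classical one. Because $\rho_j\in C^1$ and $v,w\in\mathfrak{X}^1(M)$, the fields $\rho_j v$ and $\rho_j w$ are $C^1$ vector fields. The divergence theorem on the closed manifold $M$ then gives
\[
\int_M \rho_j\langle u,\nabla\phi\rangle\,\textrm{d}x = -\int_M \textup{div}(\rho_j u)\,\phi\,\textrm{d}x
\]
for $u\in\{v,w\}$. Hence the continuous function $\textup{div}(\rho_j v)-\textup{div}(\rho_j w)$ integrates to zero against every smooth $\phi$, so it vanishes identically. Thus $\textup{div}(\rho_j v)=\textup{div}(\rho_j w)$ pointwise for $1\le j\le m$.

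Finally, since $(\rho_1,\dots,\rho_m)\in\boldsymbol{D}$, Theorem~\ref{thm1} guarantees \hyperref[ii]{(K2)}, which yields $v=w$ on $M$.

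The only delicate point is the bookkeeping around weak formulations. One must make sure that the diffusion and reaction contributions are genuinely finite, so that subtracting them is legitimate; this is exactly where the hypotheses that $D$ is bounded and $R$ maps into $L^1$ are used. One must also use compactness and the absence of boundary in $M$ to justify integration by parts with no boundary terms. Everything else is an immediate appeal to Theorem~\ref{thm1}.
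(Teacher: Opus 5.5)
Your proposal is correct and matches the paper's proof: you cancel the identical diffusion and reaction terms in the weak formulation, upgrade the resulting weak identity to pointwise equality $\textup{div}(\rho_j v)=\textup{div}(\rho_j w)$ using continuity, and then apply (K2) from Theorem~\ref{thm1}. The only cosmetic difference is that you write the advection term as $-\int_M \rho_j\langle v,\nabla\phi\rangle\,\textrm{d}x$ and undo it with the divergence theorem, whereas the paper's weak form keeps $\int_M \textup{div}(\rho_j v)\,\phi\,\textrm{d}x$ directly, which is legitimate since $\rho_j v\in C^1$.
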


In Corollary \ref{cor:ADR}, we have assumed only boundedness and integrability of the diffusion tensor and reaction functional, respectively. Moreover, $\boldsymbol{D}\subseteq C^1(M,\mathbb{R}^{m})$ while the ADR operator $\mathcal{L}[\cdot]$ involves first- and second-order spatial derivatives.  Thus, to make sense of the quantity $\mathcal{L}[\rho_j]$ in Corollary \ref{cor:ADR}, these derivatives are interpreted in the weak sense by integration against test functions in $C_c^{\infty}(M)$. These details, along with the complete proofs of Corollary~\ref{cor:CE} and Corollary~\ref{cor:ADR}, appear in Section~\ref{subsec:proofs}.

\subsection{Generative models}\label{subsec:generative}

Many modern generative models are formulated in terms of measure transport between a known reference distribution and a target data distribution. Prominent examples include diffusion models~\cite{ho2020denoising}, normalizing flows~\cite{papamakarios2021normalizing}, and flow matching methods~\cite{lipman2022flow}. Normalizing flows aim to learn an explicit pushforward map between the two distributions, either through a discrete transformation or as the flow of a time-dependent vector field, whereas diffusion models connect the source and target distributions via a continuous solution path governed by a Fokker--Planck equation. Depending on the training objective and modeling choices, some of these approaches admit a unique transport map or flow vector field, while others are inherently underconstrained and allow multiple minimizers.

Uniqueness guarantees in generative modeling are therefore of fundamental importance, as they form a prerequisite for more refined notions of well-posedness, including stability. In particular, stability analysis is essential for understanding the robustness of learned generative models to data noise, model misspecification, and adversarial perturbations. From this perspective, our results provide a foundational characterization of when inverse problems based on measure transport admit unique solutions. This, in turn, creates a pathway toward systematic stability analyses of existing generative models and may inform the design of new generative modeling frameworks with built-in identifiability guarantees.

More concretely, consider a probability measure $\nu\in \mathcal{P}(M)$ representing the target data distribution. The goal of generative modeling is to find some $f$ which satisfies $f_\#\rho = \nu$ for a reference noise distribution  $\rho\in \mathcal{P}(M)$ that is easy to sample from, e.g., a multivariate Gaussian measure. In general situations, there are infinitely many such $f$, and thus the generative modeling problem is inherently ill-posed without further structure. Already, efforts have been made to achieve uniqueness by constraining the functional form of $f$, as in diffusion models~\cite{ho2020denoising}, or by only allowing certain paths of transport, as is the case in flow matching or methods based on optimal transport \cite{lipman2022flow}. Our theoretical results motivate a new way forward, in which the class of admissible models is constrained by the pushforward action of the generative model on a finite family of measures $\{\rho_j\}_{j=1}^{m}\subseteq \mathcal{P}(M)$. 

This collection can arise naturally in time-dependent physical modeling problems, as described in Sections \ref{subsec:dynamics} and \ref{subsec:PDE}, or may be introduced only to promote uniqueness. For example, as motivated by Theorem \ref{thm1}, rather than considering one reference noise distribution $\rho$, one can construct a finite collection $(\rho_1,\dots,\rho_{m})\in \boldsymbol{D}$, and instead enforce  $f_\#\rho_j = \nu_j$ for $1\leq j \leq m$. Another option, motivated by Corollary \ref{cor:CE}, involves constructing $f$ as the flow map of a continuity equation which interpolates a \textit{finite} family of probability measures at prescribed time marginals. While our theory guarantees uniqueness in these settings, an important direction of future study involves exploring how the existence of such generative models depends on the chosen pushforward constraints.

When the generative modeling procedure yields a unique pushforward map, we can represent it as a function $\mathsf{F}:\mathcal{P}(M)^{m} \to \textup{Diff}^1(M,M)$, where given the target measures  $\nu = (\nu_1,\dots, \nu_{m})\in \mathcal{P}(M)^{m}$, the output $f = \mathsf{F}(\nu)$ satisfies $f_\#\rho_j = \nu_j$ for $1\leq j \leq m$. A similar formulation also exists in the time-dependent setting. Quantifying how the generative model changes under perturbations of the data $\nu$ is important for understanding generalization ability, robustness to noise, and finite-sample complexity. 

Concretely, one may be interested in bounds of the form 
\begin{equation}\label{eq:stable}
    d(\mathsf{F}(\nu),\mathsf{F}(\nu^{\star})) \leq \Theta (\mathcal{D}_{m}( \nu,\nu^{\star})),
\end{equation}
 where $d(\cdot,\cdot)$ is a metric on $\textup{Diff}^1(M,M)$ and $\mathcal{D}_{m}$ is a metric on $\mathcal{P}(M)^{m}$, and $\Theta(\cdot)$ quantifies the type of stability, e.g., logarithmic, linear, Lipschitz, or H\"older. Understanding how the stability quantified by \eqref{eq:stable} depends on $\mathsf{F}$, $d$, and $\mathcal{D}_{m}$ can improve our understanding of existing models and inform the design of new architectures. 
 
When $(\rho_1,\dots,\rho_{m})\in \boldsymbol{D}$, the generic set introduced in Theorem \ref{thm1}, our  main theory developed in Section \ref{sec:results}, already provides insights regarding stability bounds of the form \eqref{eq:stable}. Towards this, suppose that $f,g\in \textup{Diff}^1(M,M)$ satisfy $f = \mathsf{F}(\nu)$ and $g = \mathsf{F}(\nu^{\star})$, i.e., $f_\#\rho_j = \nu_j$ and $g_\#\rho_j = \nu_j^{\star}$ for $1\leq j \leq m$. In this situation, we assume the target measures $\nu_j$ have been perturbed but that the reference measures $\rho_j$ remain fixed. It then follows that
 \begin{equation}\label{eq:stab_final}
     \mathfrak{D}(\mathsf{F}(\nu),\mathsf{F}(\nu^{\star})) = \sum_{j=1}^{m} \mathcal{D}(f_\#\rho_j,g_\#\rho_j) = \sum_{j=1}^{m} \mathcal{D}(\nu_j,\nu_j^{\star}) = \mathcal{D}_{m}(\nu,\nu^{\star}),
 \end{equation}
 which shows a Lipschitz stability bound of the form \eqref{eq:stable} with $\Theta (x) = x$. In \eqref{eq:stab_final}, $\mathfrak{D}(\cdot,\cdot)$ is the metric on $\textup{Diff}^1(M,M)$ introduced in Corollary \ref{cor1}, $\mathcal{D}(\cdot,\cdot)$ is a metric over $\mathcal{P}(M)$, and $\mathcal{D}_{m}(\cdot,\cdot)$ is the corresponding metric over $\mathcal{P}(M)^{m}$ defined by summing $\mathcal{D}$ over each index. 
 
The stability bound \eqref{eq:stab_final} depends on the choice of metric $\mathcal{D}(\cdot,\cdot)$ over the space of probability measures \cite{li2024stochastic}, which in turn determines both $\mathfrak{D}(\cdot,\cdot)$ and $\mathcal{D}_{m}(\cdot,\cdot)$. To further understand the implications of \eqref{eq:stab_final}, it is important to study how the metric $\mathfrak{D}(\cdot,\cdot)$ we introduced in Corollary \ref{cor1} is related to other common notions of distance over $\textup{Diff}^1(M, M)$, and how this relationship depends on $\mathcal{D}(\cdot,\cdot)$.

\section{Proof of Results}\label{subsec:proofs}
This section contains the complete proofs of all theorems, corollaries, and propositions stated in Sections \ref{sec:results} and \ref{subsec:discussion}.
\subsection{Proofs for Section \ref{sec:results}} \label{subsec:proof_results}
We now present the proofs of our main results. Throughout Section \ref{subsec:proof_results}, $m > 2d+1$ is fixed. We begin by establishing several crucial lemmas, which are dedicated to proving the existence of the generic set $\boldsymbol{D}$, appearing in Theorem~\ref{thm1}.  A key ingredient in our construction of $\boldsymbol{D}$ involves taking preimages and forward images of known generic sets, e.g., $\boldsymbol{W}_{m-1}$ appearing in Theorem \ref{thm:whitney}, under suitable functions. Lemmas \ref{lemma:prod}, \ref{lemma:2}, and~\ref{lemma:I} establish the functions and their regularity properties we use to accomplish this.
\begin{lemma}\label{lemma:prod}
 Define
\[
\mathcal{F}: C_+^{1}(M,\mathbb{R}^m)\to C_+^{1}(M,\mathbb{R}^m)
\]
by
\begin{equation}\label{eq:F}  
\mathcal{F}[Y](x)
=\bigl(Y_1(x)Y_m(x),\,\ldots,\, Y_{m-1}(x)Y_m(x),\, Y_m(x)\bigr),
\qquad x\in M,
\end{equation}
for $Y(x)=(Y_1(x),\ldots,Y_m(x))\in C_+^{1}(M,\mathbb{R}^m)$. Then $\mathcal{F}$ is a homeomorphism.
\end{lemma}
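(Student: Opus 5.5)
The plan is to write down the inverse explicitly and check that both $\mathcal{F}$ and $\mathcal{F}^{-1}$ are well defined and continuous in the $C^1$ topology on $C_+^1(M,\mathbb{R}^m)$. First, $\mathcal{F}$ maps into $C_+^1(M,\mathbb{R}^m)$: products of $C^1$ functions are $C^1$, and products of strictly positive functions are strictly positive. Next define
\[
\mathcal{G}[Z](x) = \bigl(Z_1(x)/Z_m(x),\,\ldots,\,Z_{m-1}(x)/Z_m(x),\,Z_m(x)\bigr), \qquad Z\in C_+^1(M,\mathbb{R}^m).
\]
Since $Z_m>0$ on $M$, it is bounded below by a positive constant by compactness, so each quotient is $C^1$ and strictly positive. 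A direct substitution gives $\mathcal{G}\circ\mathcal{F}=\mathrm{id}$ and $\mathcal{F}\circ\mathcal{G}=\mathrm{id}$, so $\mathcal{F}$ is a bijection with inverse $\mathcal{G}$.

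For continuity of $\mathcal{F}$, it suffices to treat each component, since the $C^1$ norm on $\mathbb{R}^m$-valued maps is equivalent to the sum of the componentwise $C^1$ norms. The key estimate is the product bound $\|uv\|_{C^1}\le 2\|u\|_{C^1}\|v\|_{C^1}$, which follows from the Leibniz rule $d(uv)_x = u(x)\,dv_x + v(x)\,du_x$. Writing $Y_jY_m - \tilde Y_j\tilde Y_m = (Y_j-\tilde Y_j)Y_m + \tilde Y_j(Y_m-\tilde Y_m)$ then gives
\[
\|Y_jY_m-\tilde Y_j\tilde Y_m\|_{C^1}\le 2\bigl(\|Y_m\|_{C^1}+\|\tilde Y_j\|_{C^1}\bigr)\|Y-\tilde Y\|_{C^1}.
\]
This is locally Lipschitz, hence continuous.

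For continuity of $\mathcal{G}$, the same decomposition reduces everything to continuity of the reciprocal map $u\mapsto 1/u$ on $C_+^1(M,\mathbb{R})$, and this is the only step needing care. Fix $u$ with $\min u = c>0$. If $\|\tilde u-u\|_{C^1}<c/2$, then $\tilde u\ge c/2$. Then
\[
\frac{1}{u}-\frac{1}{\tilde u} = \frac{\tilde u-u}{u\tilde u}
\]
has sup norm at most $2\|\tilde u-u\|_{C^0}/c^2$. For the differentials, use $d(1/u) = -du/u^2$ and split
\[
\frac{d\tilde u}{\tilde u^2}-\frac{du}{u^2} = \frac{d\tilde u-du}{\tilde u^2} + du\Bigl(\frac{1}{\tilde u^2}-\frac{1}{u^2}\Bigr).
\]
Both terms are bounded by a constant depending on $c$ and $\|u\|_{C^1}$, times $\|\tilde u-u\|_{C^1}$. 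Combined with the product estimate, this shows $\mathcal{G}$ is continuous.

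The main obstacle is therefore this uniform lower bound on the denominator. It depends on compactness of $M$, which gives $\min Y_m>0$, and on $C_+^1$ being open in $C^1$, so that nearby functions stay bounded away from zero. Both facts are available from the setup in Section~\ref{subsec:func}, so with them $\mathcal{F}$ is a homeomorphism.
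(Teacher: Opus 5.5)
Your proposal is correct and follows the same route as the paper: write down the explicit inverse $\mathcal{F}^{-1}[Y]=(Y_1/Y_m,\ldots,Y_{m-1}/Y_m,Y_m)$, then conclude that both $\mathcal{F}$ and $\mathcal{F}^{-1}$ are continuous because pointwise multiplication and division are continuous on $C_+^1$. The paper simply asserts that continuity, whereas you prove it with explicit product and reciprocal estimates; these are sound, and the constant $2$ in your product bound could even be taken to be $1$.
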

\begin{proof}
    First, note that $\mathcal{F}$ is invertible with inverse $\mathcal{F}^{-1}:C_+^1(M,\mathbb{R}^m)\to C_+^1(M,\mathbb{R}^m)$ given by
    \begin{equation}\label{eq:1}
        \mathcal{F}^{-1}[Y](x) = \Bigg( \frac{Y_1(x)}{Y_m(x)},\ldots, \frac{Y_{m-1}(x)}{Y_m(x)},Y_m(x)\Bigg), \qquad x\in M.
    \end{equation}
    Moreover, since pointwise multiplication and division are continuous operations over $C^{1}_+(M,\mathbb{R})$, we have $\mathcal{F}$ and $\mathcal{F}^{-1}$ are both continuous over $C_+^1(M,\mathbb{R}^m)$. Thus, $\mathcal{F}$ is a homeomorphism.
\end{proof}
\begin{lemma}\label{lemma:2}
Consider the coordinate projection
\[
\pi : C_+^{1}(M,\mathbb{R}^{m}) \to C_+^{1}(M,\mathbb{R}^{m-1}),
\]
defined for $Y(x)=(Y_1(x),\ldots,Y_m(x))\in C_+^{1}(M,\mathbb{R}^m)$ by
\begin{equation}\label{eq:proj}
\pi[Y](x) = \bigl(Y_1(x),\ldots, Y_{m-1}(x)\bigr), \qquad x\in M.
\end{equation}
Then $\pi$ is continuous and open.
\end{lemma}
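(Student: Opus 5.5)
Continuity and openness are both elementary. The plan is to work directly with the $C^1$ norm \eqref{eq:normC1}, using the fact that $C_+^1(M,\mathbb{R}^k)$ is an open subset of $C^1(M,\mathbb{R}^k)$ for every $k$. This openness holds because $M$ is compact: a positive continuous function has a positive minimum, so a small sup-norm perturbation keeps it positive.

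\textbf{Continuity.} The projection $\pi$ is the restriction of the linear map $C^1(M,\mathbb{R}^m)\to C^1(M,\mathbb{R}^{m-1})$ that drops the last coordinate. This map is bounded, since $|\pi[Y](x)|\le |Y(x)|$ and $\|d(\pi[Y])_x\|\le \|dY_x\|$ pointwise, which gives $\|\pi[Y]\|_{C^1}\le \|Y\|_{C^1}$. Bounded linear maps are continuous, and so is their restriction to a subspace. We also check that $\pi$ maps $C_+^1$ into $C_+^1$, which is immediate from the componentwise definition.

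\textbf{Openness.} Let $U\subseteq C_+^1(M,\mathbb{R}^m)$ be open and $Y\in U$. Choose $\varepsilon>0$ so that the $C^1$-ball of radius $\varepsilon$ about $Y$, intersected with $C_+^1$, lies in $U$. The goal is to show that $\pi[U]$ contains a ball about $\pi[Y]$. Given $Z\in C_+^1(M,\mathbb{R}^{m-1})$ with $\|Z-\pi[Y]\|_{C^1}<\varepsilon$, define the lift $\tilde Z:=(Z_1,\dots,Z_{m-1},Y_m)$, which reuses the last coordinate of $Y$. Then $\tilde Z\in C_+^1(M,\mathbb{R}^m)$, because $Y_m>0$ and each $Z_j>0$, and $\pi[\tilde Z]=Z$. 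Moreover $\tilde Z-Y=(Z-\pi[Y],0)$. The Euclidean norm and the operator norm of the differential are unchanged by appending a zero coordinate, so $\|\tilde Z-Y\|_{C^1}=\|Z-\pi[Y]\|_{C^1}<\varepsilon$. Hence $\tilde Z\in U$ and $Z\in\pi[U]$. So $\pi[U]$ contains a relatively open ball about each of its points, and $\pi$ is open.

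The only point that needs care is that the lift must stay inside the positive cone. Keeping $Y_m$ fixed handles this automatically, so I expect no real obstacle here. If the lemma is meant for the density space $D_+^1$ instead, the same lift works, because the last coordinate retains its unit integral.
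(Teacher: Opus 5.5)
Your proof is correct, but it proves openness by a different route from the paper. The paper extends $\pi$ to the linear surjection $\tilde{\pi}: C^1(M,\mathbb{R}^m)\to C^1(M,\mathbb{R}^{m-1})$ and invokes the open mapping theorem on these Banach spaces. It then uses that $C_+^1$ is open in $C^1$, in both domain and codomain, to pass openness to the restriction.

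You instead construct an explicit local section $Z\mapsto (Z_1,\dots,Z_{m-1},Y_m)$. It preserves $C^1$ distances to $Y$ and stays in the positive cone, so $\pi$ carries the relative $\varepsilon$-ball about $Y$ onto a set containing the relative $\varepsilon$-ball about $\pi[Y]$. Combined with $\|\pi[W]\|_{C^1}\le\|W\|_{C^1}$, the image is in fact exactly that ball.

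What your route buys:
\begin{itemize}
\item It needs neither completeness nor the open mapping theorem. That theorem is heavier than necessary here, since a coordinate projection has an obvious bounded right inverse.
\item It gives a quantitative statement.
\item Your openness step only uses the subspace-topology fact that $B_\varepsilon(Y)\cap C_+^1\subseteq U$ for some $\varepsilon>0$. It never uses that $C_+^1$ is open in $C^1$, so the remark about openness of the positive cone in your first paragraph is not needed.
\item Consequently, as you note, the same lift works verbatim on $D_+^1$. That set is not open in $C^1$, so the paper's reduction would have to be reworked there, for example inside the closed affine subspace of unit-mass functions.
\end{itemize}
The paper's argument is shorter to state given standard functional analysis. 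For the way the lemma is actually used (in Lemma~\ref{lemma:Q}, on $C_+^1$), either proof suffices.
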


\begin{proof}
The continuity of $\pi$ is immediate by its definition \eqref{eq:proj}.

 We now define the map $\tilde{\pi}:C^1(M,\mathbb{R}^m)\to C^1(M,\mathbb{R}^{m-1})$, given by $\tilde{\pi}[Y] = (Y_1,\dots,Y_{m-1})$. Since $C^1$ is a Banach space with norm \eqref{eq:normC1} and the map $\tilde{\pi}$ is continuous, surjective, and linear, it follows by the open mapping theorem that $\tilde{\pi}$ is an open map. Since $C_+^1(M,\mathbb{R}^m)$ is open in $C^1(M,\mathbb{R}^m)$ it then follows that $\pi$ is also an open mapping.

  Indeed, for any open set $O \subseteq C_+^1(M,\mathbb{R}^m)$ it holds that $O$ is also open in $C^1(M,\mathbb{R}^m)$. Thus, $\pi(O)=\tilde{\pi}(O)$ is open in $C^1(M,\mathbb{R}^{m-1})$. 
  Because $\pi(O)\subseteq C_+^1(M,\mathbb{R}^{m-1})$ and $C_+^1(M,\mathbb{R}^{m-1})$ is open in $C^1(M,\mathbb{R}^{m-1})$, it follows that $\pi(O)$ is open in $C_+^1(M,\mathbb{R}^{m-1})$ with the subspace topology. Therefore $\pi$ is open. 
\end{proof}
 \begin{lemma}\label{lemma:I}
 Define
\[
\mathcal{I}: C_+^{1}(M,\mathbb{R}^m)\to D_+^{1}(M,\mathbb{R}^m)
\]
by
\begin{equation}\label{eq:I}
        \mathcal{I}[Y](x) = \Bigg( \frac{Y_1(x)}{\int_M Y_1(z) \, \mathrm{d}z},\ldots ,\frac{Y_{m}(x)}{\int_M Y_m(z) \, \mathrm{d}z}\Bigg),\qquad x\in M,
    \end{equation}
for $Y(x)=(Y_1(x),\ldots,Y_m(x))\in C_+^{1}(M,\mathbb{R}^m)$.  
Then $\mathcal{I}$ is continuous and surjective.
\end{lemma}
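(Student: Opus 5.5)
The map $\mathcal{I}$ is well defined, and I will check this first. For $Y\in C_+^1(M,\mathbb{R}^m)$, each component $Y_j$ is continuous and strictly positive on the compact manifold $M$. Hence $c_j(Y):=\int_M Y_j(z)\,\mathrm{d}z$ is a finite, strictly positive number: $Y_j$ attains a positive minimum, and $M$ has positive finite Riemannian volume. So $Y_j/c_j(Y)$ is a $C^1$, strictly positive function with integral $1$. Therefore $\mathcal{I}[Y]\in D_+^1(M,\mathbb{R}^m)$.

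Surjectivity is immediate, because $D_+^1(M,\mathbb{R}^m)\subseteq C_+^1(M,\mathbb{R}^m)$. Given $\rho\in D_+^1(M,\mathbb{R}^m)$, take $Y=\rho$ itself. Then every $c_j(\rho)=1$, so $\mathcal{I}[\rho]=\rho$. In fact $\mathcal{I}$ restricted to $D_+^1$ is the identity, so $\mathcal{I}$ is a retraction.

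For continuity, I will argue componentwise, since the $C^1$ norm on $\mathbb{R}^m$-valued maps is equivalent to the sum of the componentwise $C^1$ norms. The proof has three steps.

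\textbf{Step 1.} The functional $Y_j\mapsto c_j(Y)$ is linear and bounded on $C^1(M,\mathbb{R})$, since $|\int_M Y_j - \int_M Z_j|\le \mathrm{vol}(M)\,\|Y_j-Z_j\|_{C^1}$. Hence it is continuous.

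\textbf{Step 2.} The map $C^1(M,\mathbb{R})\times(0,\infty)\to C^1(M,\mathbb{R})$ given by $(u,c)\mapsto u/c$ is continuous. The estimate is
\[
\Big\|\frac{u}{c}-\frac{u'}{c'}\Big\|_{C^1}\le \frac{\|u-u'\|_{C^1}}{c}+\|u'\|_{C^1}\,\Big|\frac{1}{c}-\frac{1}{c'}\Big|,
\]
and the right-hand side tends to $0$ as $(u',c')\to(u,c)$, because $c>0$ is fixed and $c'$ stays bounded away from $0$ near $c$.

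\textbf{Step 3.} Each component $Y\mapsto Y_j/c_j(Y)$ is the composition of the maps in Steps 1 and 2, so it is continuous. Hence $\mathcal{I}$ is continuous into $C^1(M,\mathbb{R}^m)$, and therefore continuous into $D_+^1(M,\mathbb{R}^m)$ with the subspace topology.

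The only point needing any care is keeping the denominators bounded away from zero locally, which the positivity from the first paragraph guarantees. There is no real obstacle beyond this.
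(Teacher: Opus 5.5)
Your proof is correct and follows essentially the same route as the paper: surjectivity comes from $\mathcal{I}$ acting as the identity on $D_+^1(M,\mathbb{R}^m)$, and continuity comes from the same splitting estimate $\|u/c-u'/c'\|_{C^1}\le \|u-u'\|_{C^1}/c+\|u'\|_{C^1}\,|1/c-1/c'|$ together with continuity of the integral functional. Your explicit well-definedness check and your factorization through $(u,c)\mapsto u/c$ only reorganize the argument, which the paper phrases with convergent sequences.
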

\begin{proof} 
 We will first establish that the map $\mathcal{I}$ is continuous. To prove this, it suffices to establish continuity of the map $y\mapsto y/\int_M y \, \textrm{d}x$ over the domain $C_+^1(M,\mathbb{R})$ with respect to the norm \eqref{eq:normC1}. Towards this, we will assume that $y_\ell\in C_+^1(M,\mathbb{R})$ for each $\ell\in \mathbb{N}$ and that $y_\ell\xrightarrow[]{\ell \to \infty} y\in C_+^1(M,\mathbb{R}).$ Then, we have that  
    \begin{align}
       \bigg \| \frac{y}{\int_M y \, \textrm{d}x}- \frac{y_\ell}{\int_M y_\ell \, \textrm{d}x }\bigg\|_{C^1} &\leq \bigg \| \frac{y-y_\ell}{\int_M y \, \textrm{d}x}\bigg\|_{C^1}+ \bigg\|y_\ell \bigg( \frac{1}{\int_M y\, \textrm{d}x} - \frac{1}{\int_M y_\ell\, \textrm{d}x}\bigg)\bigg\|_{C^1} \nonumber\\ 
       &\leq \frac{1}{\int_M y \, \textrm{d}x} \|y-y_\ell\|_{C^1}+\|y_\ell\|_{C^1}\bigg|  \frac{1}{\int_M y\, \textrm{d}x} - \frac{1}{\int_M y_\ell\, \textrm{d}x}\bigg|\xrightarrow[]{\ell\to \infty} 0 \label{eq:c2},
    \end{align}
    where \eqref{eq:c2} follows from the assumption that $y_\ell\xrightarrow[]{\ell\to \infty} y$ in $C^1$, which allows us to deduce convergence of the integrals $\int_M y_\ell \, \textrm{d}x \xrightarrow[]{\ell\to\infty} \int_M y \, \textrm{d}x$. Thus the proof of continuity is complete.
    
    The fact that $\mathcal{I}$ is surjective is clear from the definition of $\mathcal{I}$, since $\mathcal{I}[Y] = Y$ if $Y\in D_+^1(M,\mathbb{R}^m)$ and $D_+^1(M,\mathbb{R}^m)\subseteq C_+^1(M,\mathbb{R}^m)$. 
\end{proof}
The following lemma establishes conditions under which density is preserved under forward and inverse images. While the result is standard, it plays a crucial role in our analysis, so we include a quick proof for completeness.
\begin{lemma}\label{lemma:dense_open}
    Let $X$ and $Y$ be topological spaces, and let $f:X\to Y$ be continuous. Then:
\begin{enumerate}
    \item[(i)] If $B\subseteq Y$ is dense and $f$ is open, then $f^{-1}(B)$ is dense in $X$.
    \item[(ii)] If $A\subseteq X$ is dense and $f$ is surjective, then $f(A)$ is dense in $Y$.
\end{enumerate}
\end{lemma}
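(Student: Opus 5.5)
Both parts reduce to the characterization of density by open sets: a subset $S$ of a topological space is dense if and only if it meets every nonempty open set. I will verify this criterion directly in each case, using continuity only implicitly; the essential hypotheses are openness in (i) and surjectivity together with continuity in (ii).

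For (i), fix a nonempty open set $U\subseteq X$. Because $f$ is an open map, $f(U)$ is open in $Y$. It is also nonempty, since $U$ is. Density of $B$ then gives a point $b\in B\cap f(U)$. Write $b=f(u)$ with $u\in U$. Then $u\in f^{-1}(B)\cap U$, so $f^{-1}(B)$ meets every nonempty open subset of $X$ and is therefore dense.

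For (ii), fix a nonempty open set $V\subseteq Y$. By continuity, $f^{-1}(V)$ is open in $X$. By surjectivity it is nonempty: choose $y\in V$ and a preimage $x$ with $f(x)=y$. Density of $A$ gives a point $a\in A\cap f^{-1}(V)$, and then $f(a)\in f(A)\cap V$. Hence $f(A)$ meets every nonempty open subset of $Y$ and is dense.

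There is no genuine obstacle here. The only point needing care is to notice where each hypothesis enters. In (i), openness of $f$ is exactly what allows $f(U)$ to serve as a test set for the density of $B$. In (ii), surjectivity is exactly what guarantees that the preimage $f^{-1}(V)$ is nonempty, so that the density of $A$ can be applied to it.
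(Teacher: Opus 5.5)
Your proof is correct, and part (i) is exactly the paper's argument. For part (ii) the paper instead cites the closure inclusion $f(\overline{A})\subseteq\overline{f(A)}$ for continuous maps together with $f(X)=Y$, while you check the open-set criterion directly. That check is just the proof of the same inclusion written out, so the two routes are essentially the same.
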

\begin{proof}

\textit{(i)}\; It suffices to show that $f^{-1}(B)$ intersects every non-empty open
subset of $X$. Let $U\subseteq X$ be a non-empty open set. Since $f$ is open,
$f(U)$ is an open subset of $Y$. Because $B$ is dense in $Y$, we have
$B\cap f(U)\neq \emptyset$, so choose $y\in B\cap f(U)$. By definition of
$f(U)$, there exists $x\in U$ with $f(x)=y$. Hence, $x\in U\cap f^{-1}(B)$.
As $U$ was arbitrary, it follows that $f^{-1}(B)$ is dense in $X$. 
\textit{(ii)}\; By \cite[Theorem~2.9(c)]{armstrong2013basic}, one has
$f(\overline{A}) \subseteq \overline{f(A)}$. Since $A$ is dense in $X$, we have
$\overline{A}=X$. Because $f$ is surjective, $f(X)=Y$. Hence, $\overline{f(A)} = Y$ and $f(A)$ is dense in $Y$.
\end{proof}

In Lemma~\ref{lemma:Q} below, we introduce a new generic set $\boldsymbol{Q}$ that is closely
related to the generic set $\boldsymbol{D}$ appearing in the proof of
Theorem~\ref{thm1}.

\begin{lemma}\label{lemma:Q}
The set \begin{equation}\label{eq:density}
      \boldsymbol{Q} := \displaystyle\bigg\{ Y\in C_+^1(M,\mathbb{R}^{m}) : \bigg( \frac{Y_1}{Y_{m}},\ldots ,\frac{Y_{m-1}}{Y_{m}}\bigg) \textup{ is an embedding}\bigg\}
   \end{equation} is open and dense in $C_+^1(M,\mathbb{R}^{m})$.\end{lemma}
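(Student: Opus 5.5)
The plan is to write $\boldsymbol{Q}$ as a preimage of Whitney's generic set under a continuous open map. By \eqref{eq:1},
\[
\boldsymbol{Q} = (\pi\circ\mathcal{F}^{-1})^{-1}\bigl(\boldsymbol{W}_{m-1}\cap C_+^1(M,\mathbb{R}^{m-1})\bigr),
\]
where $\pi$ is the coordinate projection of Lemma~\ref{lemma:2} and $\mathcal{F}$ is the homeomorphism of Lemma~\ref{lemma:prod}. Indeed, $\pi\circ\mathcal{F}^{-1}[Y] = (Y_1/Y_m,\ldots,Y_{m-1}/Y_m)$, and this map always lands in $C_+^1(M,\mathbb{R}^{m-1})$, so only the positive part of $\boldsymbol{W}_{m-1}$ matters. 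Since $m>2d+1$, we have $m-1\ge 2d+1$, so Theorem~\ref{thm:whitney} applies and $\boldsymbol{W}_{m-1}$ is open and dense in $C^1(M,\mathbb{R}^{m-1})$.

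The first step is to transfer genericity of $\boldsymbol{W}_{m-1}$ to the open subspace $C_+^1(M,\mathbb{R}^{m-1})$. Openness is immediate, since the intersection of two open sets is open. For density, note that a dense open set meets every nonempty open subset of the ambient space. Every nonempty open subset of $C_+^1(M,\mathbb{R}^{m-1})$ is open in $C^1(M,\mathbb{R}^{m-1})$, so it meets $\boldsymbol{W}_{m-1}$. Hence $\boldsymbol{W}_{m-1}\cap C_+^1(M,\mathbb{R}^{m-1})$ is open and dense in $C_+^1(M,\mathbb{R}^{m-1})$.

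The second step concerns the map $\Phi:=\pi\circ\mathcal{F}^{-1}: C_+^1(M,\mathbb{R}^m)\to C_+^1(M,\mathbb{R}^{m-1})$. It is continuous, as a composition of continuous maps from Lemmas~\ref{lemma:prod} and~\ref{lemma:2}. It is also open: $\mathcal{F}^{-1}$ is a homeomorphism, so it maps open sets to open sets, and $\pi$ is open by Lemma~\ref{lemma:2}.

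Openness of $\boldsymbol{Q}=\Phi^{-1}(\cdot)$ then follows from continuity of $\Phi$. Density follows from Lemma~\ref{lemma:dense_open}(i), applied to the continuous open map $\Phi$ and the dense set $\boldsymbol{W}_{m-1}\cap C_+^1(M,\mathbb{R}^{m-1})$.

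The only delicate point is the first step: Whitney genericity is stated in the full space $C^1(M,\mathbb{R}^{m-1})$, while Lemma~\ref{lemma:dense_open}(i) must be applied in the positive cone $C_+^1$. The restriction argument above handles this because $C_+^1(M,\mathbb{R}^{m-1})$ is open in $C^1(M,\mathbb{R}^{m-1})$, which holds by compactness of $M$: a uniform positive lower bound on each component persists under small $C^0$ perturbations. Everything else is bookkeeping with the lemmas already established.
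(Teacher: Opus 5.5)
Your proposal is correct and follows essentially the same route as the paper. Both write $\boldsymbol{Q}=(\pi\circ\mathcal{F}^{-1})^{-1}\bigl(\boldsymbol{W}_{m-1}\cap C_+^1(M,\mathbb{R}^{m-1})\bigr)$, use that this intersection is open and dense in the open subspace $C_+^1(M,\mathbb{R}^{m-1})$, and conclude via continuity and openness of $\pi\circ\mathcal{F}^{-1}$ together with Lemma~\ref{lemma:dense_open}(i). Your version is slightly more careful: it explicitly justifies density of the restricted Whitney set, and it correctly attributes openness of $\boldsymbol{Q}$ to continuity rather than to Lemma~\ref{lemma:dense_open}.
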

   \begin{proof}
        We begin by defining the set 
    \begin{equation}\label{eq:whitney_pos}
        \boldsymbol{W}_+:=\{ Y\in C_+^1(M,\mathbb{R}^{m-1}): Y \text{ is an embedding}\}.
    \end{equation}
Note that 
$
\boldsymbol{W}_+ \;=\; C_+^{1}(M,\mathbb{R}^{m-1})\cap \boldsymbol{W}_{m-1},
$
where $\boldsymbol{W}_{m-1}$ is the set defined in Theorem~\ref{thm:whitney}.  
Since $C_+^{1}(M,\mathbb{R}^{m-1})\subseteq C^{1}(M,\mathbb{R}^{m-1})$ is open, 
Theorem~\ref{thm:whitney} implies that $\boldsymbol{W}_+$ is open and dense in 
$C_+^{1}(M,\mathbb{R}^{m-1})$.

Define a map 
$
\Lambda : C_+^{1}(M,\mathbb{R}^{m}) \to C_+^{1}(M,\mathbb{R}^{m-1})
$
by 
$
\Lambda := \pi \circ \mathcal{F}^{-1},
$
where $\mathcal{F}^{-1}$  and $\pi$ are given in \eqref{eq:1} and \eqref{eq:proj}, respectively. Observe that 
$
\boldsymbol{Q} = \Lambda^{-1}(\boldsymbol{W}_+).
$
By Lemmas~\ref{lemma:prod} and~\ref{lemma:2}, the map $\Lambda$ is continuous and open. 
Therefore, Lemma~\ref{lemma:dense_open}\textit{(i)} yields that $\boldsymbol{Q}$ is open and dense.
\end{proof}

At several points in the proofs of our main results, we scale embeddings or
compose them with auxiliary mappings. The following lemma shows that these
modified functions remain embeddings.
   \begin{lemma}\label{lemma:comp_embedding}
     Let $n\geq 1$.   If $Y=(Y_1,\ldots,Y_n)\in C_+^1(M,\mathbb{R}^n)$ is an embedding, then the following maps defined over $M$ are also embeddings:
       \begin{enumerate}
           \item[(i)] $Z(x):=(c_1Y_1(x),\ldots, c_k Y_n(x))$  where $c\in \mathbb{R}_{>0}^{n}$.
           \item[(ii)] $H(x):=\log(Y(x))=(\log(Y_1(x)),\ldots, \log(Y_n(x)))$.
       \end{enumerate}
   \end{lemma}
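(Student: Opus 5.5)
The plan is to verify the three conditions of Definition~\ref{def:embedding} directly for $Z$ and $H$, writing each as $Z=\Phi\circ Y$ and $H=\Psi\circ Y$. Here $\Phi,\Psi$ are diffeomorphisms defined on the open positive orthant $\mathbb{R}_{>0}^n$, which contains the image of $Y$ because $Y\in C_+^1(M,\mathbb{R}^n)$. Concretely, $\Phi(u)=(c_1u_1,\ldots,c_nu_n)$ is a linear isomorphism of $\mathbb{R}^n$, since every $c_j>0$. Likewise $\Psi(u)=(\log u_1,\ldots,\log u_n)$ is a smooth diffeomorphism from $\mathbb{R}_{>0}^n$ onto $\mathbb{R}^n$ with inverse $(e^{w_1},\ldots,e^{w_n})$. (In (i) the last constant should read $c_n$; the indexing $c_k$ is a typo.) The general principle is that composing an embedding with a diffeomorphism of an open set containing its image yields an embedding, and I would prove it in this form.

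First, regularity: $Z$ and $H$ are $C^1$ because they are compositions of the $C^1$ map $Y$ with smooth maps. For $H$, note that $\log$ is smooth on $(0,\infty)$ and each $Y_j>0$. Second, injectivity: if $Z(x)=Z(x')$ or $H(x)=H(x')$, then applying $\Phi^{-1}$ or $\Psi^{-1}$ gives $Y(x)=Y(x')$, so $x=x'$ because $Y$ is injective. Third, immersivity: by the chain rule, $dZ_x=d\Phi_{Y(x)}\circ dY_x$ and $dH_x=d\Psi_{Y(x)}\circ dY_x$. Here $d\Phi_{Y(x)}=\mathrm{diag}(c_1,\ldots,c_n)$ and $d\Psi_{Y(x)}=\mathrm{diag}(1/Y_1(x),\ldots,1/Y_n(x))$, and both are invertible because $c_j>0$ and $Y_j(x)>0$. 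A composition of an invertible linear map with the injective map $dY_x$ is injective.

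Finally, condition (iii) follows from compactness of $M$, as remarked after Definition~\ref{def:embedding}: a continuous injection from a compact space into the Hausdorff space $\mathbb{R}^n$ is a homeomorphism onto its image. There is no real obstacle here. The only point needing care is the strict positivity of $Y$, which is used in two places: to make $\log\circ Y$ well defined and $C^1$, and to make the Jacobian $\mathrm{diag}(1/Y_j(x))$ finite and nonsingular.
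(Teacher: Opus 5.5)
Your proof is correct and follows the same route as the paper. The paper writes $Z=L\circ Y$ and $H=S\circ Y$, where $L=\mathrm{diag}(c)$ and $S=\log$ are diffeomorphisms of $\mathbb{R}_{>0}^n$ (onto $\mathbb{R}_{>0}^n$ and onto $\mathbb{R}^n$, respectively), and then cites that a composition of embeddings is an embedding; you verify that principle directly through injectivity, the chain rule, and compactness, and you are right that $c_k$ should read $c_n$.
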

   \begin{proof}
       \textit{(i)}: We can rewrite $Z = L \circ Y$, where $L:\mathbb{R}_{> 0}^n\to \mathbb{R}_{> 0}^n$ is given by $L(x) = \text{diag}(c) x$ for $x\in \mathbb{R}_{> 0}^n$. One can check that $L$ is a diffeomorphism of $\mathbb{R}_{>0}^n$. Since the composition of embeddings remains an embedding, the claim follows. \textit{(ii)}: We will define $S(x):=(\log(x_1),\ldots, \log(x_n))$ for $x\in \mathbb{R}_{> 0}^n$. Again it is quick to verify that $S$ is a diffeomorphism between $\mathbb{R}_{>0}^n$ and $\mathbb{R}^n$. Therefore, $H$, a composition of embeddings, remains an embedding.
   \end{proof}

   We next show that if the delay map based upon $h$ is an embedding that the delay map based upon $h^{-1}$ must also be an embedding. This is needed for our proof of Theorem \ref{thm2}.
   \begin{lemma}\label{lemma:permute_embed}
       Let $n\geq 2d+1$, let $(y,h)\in C^1(M,\mathbb{R})\times \textup{Diff}^1(M,M)$, and assume that $\Psi_{(y,h)}^{(n)}$ is an embedding. Then, $\Psi_{(y,h^{-1})}^{(n)}$ is also an embedding. 
   \end{lemma}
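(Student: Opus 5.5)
The idea is to write $\Psi_{(y,h^{-1})}^{(n)}$ as a known embedding composed with a diffeomorphism of $M$ and a coordinate permutation of $\mathbb{R}^n$. Set $g := h^{-(n-1)} \in \textup{Diff}^1(M,M)$. For $x\in M$, evaluate the delay map for $h$ at the point $g(x)$:
\[
\Psi_{(y,h)}^{(n)}(g(x)) = \bigl(y(h^{-(n-1)}(x)),\, y(h^{-(n-2)}(x)),\,\dots,\, y(h^{-1}(x)),\, y(x)\bigr).
\]
This is exactly $\Psi_{(y,h^{-1})}^{(n)}(x) = (y(x), y(h^{-1}(x)),\dots, y(h^{-(n-1)}(x)))$ with its coordinates listed in reverse order. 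Let $P:\mathbb{R}^n\to\mathbb{R}^n$ be the linear reversal map $P(z_1,\dots,z_n) = (z_n,\dots,z_1)$. Then
\[
\Psi_{(y,h^{-1})}^{(n)} = P\circ \Psi_{(y,h)}^{(n)}\circ g .
\]

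It remains to check the conditions of Definition~\ref{def:embedding} for this composition.

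\begin{enumerate}
\item[(i)] \emph{Injectivity.} The map $g$ is a bijection of $M$, $\Psi_{(y,h)}^{(n)}$ is injective by hypothesis, and $P$ is a linear isomorphism. A composition of injective maps is injective.

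\item[(ii)] \emph{Immersion.} By the chain rule,
\[
d\bigl(\Psi_{(y,h^{-1})}^{(n)}\bigr)_x = P\circ d\bigl(\Psi_{(y,h)}^{(n)}\bigr)_{g(x)}\circ dg_x .
\]
Here $dg_x$ is a linear isomorphism $T_xM\to T_{g(x)}M$ because $g$ is a $C^1$ diffeomorphism. The middle factor is injective by hypothesis, and $P$ is invertible, so the composition is injective.

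\item[(iii)] \emph{Homeomorphism onto the image.} Since $M$ is compact, this follows from (i), as remarked after Definition~\ref{def:embedding}.
\end{enumerate}

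The composition is also $C^1$, since $y$ is $C^1$ and $h^{-1}$ is a $C^1$ diffeomorphism; in particular $\Psi_{(y,h^{-1})}^{(n)}\in C^1(M,\mathbb{R}^n)$.

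There is no real obstacle beyond spotting the reindexing identity. The one point to be careful about is that $h^{-(n-1)}$ is a $C^1$ diffeomorphism, so that its differential is invertible at every point. The hypothesis $n\geq 2d+1$ is not used, except to place the lemma in the setting where such embeddings exist.
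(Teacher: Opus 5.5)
Your proposal is correct and follows the paper's proof: the paper uses the same identity $\sigma\circ\Psi_{(y,h)}^{(n)}\circ(h^{n-1})^{-1}=\Psi_{(y,h^{-1})}^{(n)}$, with $\sigma$ the coordinate reversal. The only difference is that you verify injectivity and the immersion property explicitly via the chain rule, where the paper simply invokes that composing an embedding with diffeomorphisms preserves the embedding property.
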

   \begin{proof}
       Define the permutation map $\sigma: \mathbb{R}^n \to \mathbb{R}^n$ by setting $$\sigma(x_1,\ldots, x_{n-1}, x_n) := (x_n, x_{n-1},\ldots, x_1),\qquad (x_1,\ldots,x_{n})\in \mathbb{R}^{n}.$$
       Note that $\sigma$ is invertible and linear, and therefore a diffeomorphism of $\mathbb{R}^n$. Next, we have for all $x\in M$ that 
\begin{align*}
    (\sigma \circ \Psi_{(y,h)}^{(n)}\circ (h^{n-1})^{-1})(x) &=  \sigma  \Big(y((h^{n-1})^{-1}(x)), \ldots, y(h^{-1}(x)),y(x)\Big)\\
    &= \Big(y(x), y(h^{-1}(x)),\ldots, y((h^{n-1})^{-1}(x))\Big)\\
    &= \Psi_{(y,h^{-1})}^{(n)}(x).
\end{align*}
Since $(h^{n-1})^{-1}$ is a diffeomorphism, $\Psi_{(y,h)}^{(n)}$ is an embedding, and $\sigma$ is a diffeomorphism, we have that $\Psi_{(y,h^{-1})}^{(n)}$ is an embedding.
   \end{proof}
     Next, we specify the generic set $\boldsymbol{D}$ appearing in the statement of Theorem~\ref{thm1}.
\begin{lemma}\label{lemma:thm1_set}
The set 
    \begin{equation}\label{eq:density2}
      \boldsymbol{D} := \displaystyle\bigg\{ Y = (Y_1,\ldots,Y_{m})\in D_+^1(M,\mathbb{R}^{m}) : \bigg( \frac{Y_1}{Y_{m}},\ldots ,\frac{Y_{m-1}}{Y_{m}}\bigg) \textup{ is an embedding}\bigg\}
      \end{equation}
      is open and dense in $D_+^1(M,\mathbb{R}^{m})$.
\end{lemma}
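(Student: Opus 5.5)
Observe first that $\boldsymbol{D} = \boldsymbol{Q}\cap D_+^1(M,\mathbb{R}^m)$. The natural strategy is to transfer openness and density of $\boldsymbol{Q}$ (Lemma~\ref{lemma:Q}) to $\boldsymbol{D}$ through the normalization map $\mathcal{I}$ from Lemma~\ref{lemma:I}. The key structural fact is that $\boldsymbol{Q}$ is invariant under $\mathcal{I}$: if $Y\in C_+^1(M,\mathbb{R}^m)$ and $c_j := \int_M Y_j\,\mathrm{d}z>0$, then
\[
\Big(\frac{\mathcal{I}[Y]_1}{\mathcal{I}[Y]_m},\ldots,\frac{\mathcal{I}[Y]_{m-1}}{\mathcal{I}[Y]_m}\Big) = \Big(\frac{c_m}{c_1}\frac{Y_1}{Y_m},\ldots,\frac{c_m}{c_{m-1}}\frac{Y_{m-1}}{Y_m}\Big).
\]
This is a positive componentwise rescaling of the ratio map. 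By Lemma~\ref{lemma:comp_embedding}\textit{(i)}, applied with $n=m-1$ and in both directions since the inverse scaling is also positive, it is an embedding if and only if the unscaled ratio map is. Hence $\mathcal{I}^{-1}(\boldsymbol{D}) = \boldsymbol{Q}$ and $\mathcal{I}(\boldsymbol{Q}) = \boldsymbol{D}$; the latter uses that $\mathcal{I}$ fixes $D_+^1$ pointwise.

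\textbf{Openness.} This is immediate: $\boldsymbol{D} = \boldsymbol{Q}\cap D_+^1(M,\mathbb{R}^m)$, where $\boldsymbol{Q}$ is open in $C_+^1(M,\mathbb{R}^m)$ by Lemma~\ref{lemma:Q}, and $D_+^1$ carries the subspace topology. One could equally argue via $\mathcal{I}$ and $\mathcal{I}^{-1}(\boldsymbol{D})=\boldsymbol{Q}$, but the intersection argument is more direct.

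\textbf{Density.} Here I would use Lemma~\ref{lemma:dense_open}\textit{(ii)}. The map $\mathcal{I}$ is continuous and surjective onto $D_+^1(M,\mathbb{R}^m)$ by Lemma~\ref{lemma:I}, and $\boldsymbol{Q}$ is dense in $C_+^1(M,\mathbb{R}^m)$ by Lemma~\ref{lemma:Q}. Therefore $\mathcal{I}(\boldsymbol{Q}) = \boldsymbol{D}$ is dense in $D_+^1(M,\mathbb{R}^m)$.

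The only delicate point is the invariance computation, specifically checking that rescaling by positive constants preserves embeddings in both directions so that the set equalities hold exactly. Lemma~\ref{lemma:comp_embedding}\textit{(i)} covers this. That lemma is stated for maps in $C_+^1$, and the ratio maps here are positive, so its hypothesis is satisfied. The remaining steps are routine topology.
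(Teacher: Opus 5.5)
Your proposal is correct and follows the paper's proof essentially step for step. Openness comes from $\boldsymbol{D}=\boldsymbol{Q}\cap D_+^1(M,\mathbb{R}^m)$ in the subspace topology, and density from showing $\mathcal{I}(\boldsymbol{Q})=\boldsymbol{D}$ via the positive rescaling of the ratio map and Lemma~\ref{lemma:comp_embedding}\textit{(i)}, then applying Lemma~\ref{lemma:dense_open}\textit{(ii)} to the continuous surjection $\mathcal{I}$. Your extra observation that $\mathcal{I}^{-1}(\boldsymbol{D})=\boldsymbol{Q}$ is true but not needed.
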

\begin{proof}
We have that $\boldsymbol{D} = \boldsymbol{Q} \cap D_+^1(M,\mathbb{R}^{m})$ and since $\boldsymbol{Q}$ is open (see Lemma~\ref{lemma:Q} and Equation~\eqref{eq:density}), we have by the definition of the subspace topology that $\boldsymbol{D}$ is open in $D_+^1(M,\mathbb{R}^{m})$.

    Next, we aim to show that $\boldsymbol{D}$ is also dense. Towards this, we will first show $\boldsymbol{D} = \mathcal{I}(\boldsymbol{Q})$, where $\mathcal{I}$ is defined in~\eqref{eq:I}. We begin by proving $\boldsymbol{D} \subseteq \mathcal{I}(\boldsymbol{Q})$. Let $Y\in \boldsymbol{D}$. Then, since $\boldsymbol{D}\subseteq \boldsymbol{Q}$ and $\mathcal{I}(Y) = Y$ we have that $Y\in \mathcal{I}(\boldsymbol{Q})$. 
    
    We now show that $\boldsymbol{D}\supseteq \mathcal{I}(\boldsymbol{Q})$. If $Y\in \mathcal{I}(\boldsymbol{Q})$, we have $Y = \mathcal{I}(W)\in D_+^1(M,\mathbb{R}^{m})$ for some $W\in \boldsymbol{Q}\subseteq C_+^1(M,\mathbb{R}^{m})$. By Lemma~\ref{lemma:Q}, $W = (W_1,\ldots,W_{m})$ has the property that $(W_1/W_{m},\ldots, W_{m-1}/W_{m})$ is an embedding.  
    It therefore follows by Lemma \ref{lemma:comp_embedding}\textit{(i)} that 
\begin{align*}\label{eq:LHS_RHS}
   \bigg( \frac{Y_1}{Y_{m}},\ldots, \frac{Y_{m-1}}{Y_{m}}\bigg)& =  \bigg( \frac{\mathcal{I}[W]_1}{\mathcal{I}[W]_{m}},\ldots, \frac{\mathcal{I}[W]_{m-1}}{\mathcal{I}[W]_{m}}\bigg) \\ &= \bigg(\frac{\int_{M}W_{m}\, \textrm{d}x}{\int_{M}W_{1}\, \textrm{d}x} \cdot \frac{W_1 }{W_{m}},\ldots, \frac{\int_{M}W_{m}\, \textrm{d}x}{\int_{M}W_{m-1}\, \textrm{d}x}\cdot \frac{W_{m-1}}{W_{m}}\bigg)
\end{align*}
is an embedding as well, where $\mathcal{I}[W]_j$ denotes the $j$-th element in the vector-valued function $\mathcal{I}[W]$, $1\leq j \leq m$.  As a result, we have $Y \in \boldsymbol{D}$, as wanted. 

Recall from Lemma~\ref{lemma:I} that $\mathcal{I}$ is continuous and surjective, and moreover the fact that $\boldsymbol{Q}$ is dense in  $C_+^{1}(M,\mathbb{R}^{m})$.   Therefore, by Lemma~\ref{lemma:dense_open}\textit{(ii)}, it follows that  $\boldsymbol{D}$ is dense in $D_+^{1}(M,\mathbb{R}^{m})$, as claimed.
\end{proof}

At this point, all required lemmas have been established and we proceed to prove the main results of our paper, including Theorem \ref{thm1}, Theorem \ref{thm2}, and Corollary \ref{cor1}.
\begin{proof}[Proof of Theorem \ref{thm1}]
We impose the condition that $(\rho_1,\ldots, \rho_{m})\in \boldsymbol{D}$, which is defined in~\eqref{eq:density2}. Assuming $f_\#\rho_j = g_\#\rho_j$, we have by the change of variables formula \eqref{eq:cov} that
    \begin{equation}\label{eq:cov2}
        \rho_j(f^{-1}(x)) |\det df_x^{-1}| = \rho_j(g^{-1}(x))|\det dg_x^{-1}|,\qquad x\in N,\qquad 1\leq j \leq m.
    \end{equation}
    After dividing~\eqref{eq:cov2} where $1\leq j\leq m-1$ by~\eqref{eq:cov2} where $j=m$, we obtain \begin{equation}\label{eq:after_division}
        \frac{\rho_j(f^{-1}(x))}{\rho_{m}(f^{-1}(x))} = \frac{\rho_j(g^{-1}(x))}{\rho_{m}(g^{-1}(x))},\qquad x\in N,\qquad 1\leq j \leq m-1.
    \end{equation}
       Since $(\rho_1/\rho_{m},\ldots,\rho_{m-1}/\rho_{m})$ is an embedding by the definition of $\boldsymbol{D}$ in~\eqref{eq:density2}, \eqref{eq:after_division} implies $f^{-1}=g^{-1},$ and hence, $f = g.$ This proves~\hyperref[i]{(K1)} under the conditions in Theorem~\ref{thm1}.
       
       To prove~\hyperref[ii]{(K2)}, we again fix $(\rho_1,\ldots,\rho_{m})\in \boldsymbol{D}$ and assume that
   \begin{equation}\label{eq:div}
        \textup{div} (\rho_j v)(x) = \textup{div} (\rho_j w)(x)\qquad x\in M ,\qquad 1\leq j \leq m.\
   \end{equation}
   Using the product rule, Equation~\eqref{eq:div} rearranges into 
   \begin{equation}\label{eq:div2}
      \langle \nabla \rho_j(x),  v(x)\rangle_{r_x} +  \rho_j(x) \textup{div} (v)(x) = \langle\nabla \rho_j(x) , w(x)\rangle_{r_x} + \rho_j(x) \textup{div} (w)(x),\quad x\in M,
   \end{equation}
 for $ 1\leq j \leq m.$ Dividing \eqref{eq:div2} by $\rho_j$ gives 
   \begin{equation}\label{eq:div3}
     \langle  \nabla \log(\rho_j(x)) , v(x)\rangle_{r_x} + \textup{div} (v)(x) = \langle \nabla \log(\rho_j(x)), w(x)\rangle_{r_x} + \textup{div} (w)(x),\quad x\in M,
   \end{equation}
   for  $1\leq j \leq m.$ Here, we used that fact that
   \[
   \frac{1}{\rho_j(x)}  \left \langle \nabla \rho_j(x) , v(x) \right\rangle_{r_x} =  \left\langle \frac{1}{\rho_j(x)}  \nabla\rho_j(x), v(x)\right\rangle_{r_x} =  \left\langle    \nabla \log(\rho_j(x)) , v(x)\right\rangle_{r_x},
   \]
   since for a fixed $x\in M$, $\rho_j(x)$ is a scalar and the metric $r_x$ is bilinear.
 
 Next, we use \eqref{eq:div3} to simplify $\textup{div} (\rho_jv)(x) - \textup{div} (\rho_{m} v)(x)$, which yields 
   \begin{equation}\label{eq:div4}
    \bigg\langle \nabla \log\bigg( \frac{\rho_j(x)}{\rho_{m}(x)} \bigg), v(x)\bigg\rangle_{r_x} = \bigg\langle \nabla \log\bigg( \frac{\rho_j(x)}{\rho_{m}(x)} \bigg), w(x)\bigg\rangle_{r_x},\quad x\in M ,
   \end{equation}
   for $ 1\leq j \leq m-1.  $ Now define $\Phi: M \to \mathbb{R}^{m-1}$ by setting $$\Phi(x):= \bigg(\log\bigg( \frac{\rho_1(x)}{\rho_{m}(x)} \bigg),\ldots, \log \bigg(\frac{\rho_{m-1}(x)}{\rho_{m}(x)}\bigg)\bigg), \qquad x\in M$$
   and note by Lemma \ref{lemma:comp_embedding}\textit{(ii)} that $\Phi$ is an embedding. Rewriting \eqref{eq:div4} in vectorized notation we have that 
   $$d \Phi_x v = d\Phi_x w,\qquad x\in M.$$
   Since $\Phi$ is an embedding with injective differential it then follows that $v = w$, as desired. 
\end{proof}

\begin{proof}[Proof of Theorem \ref{thm2}]
In this proof we will write $J_f(x):= |\det d f_x|$ for notational simplicity.
 We will also denote $\psi:= \rho_1/h_\#\rho_1.$ Throughout, we assume that $(\psi,h) \in \boldsymbol{G}$, which is the generic set appearing in Theorem \ref{thm:takens}. Since $\rho_j = h^{j-1}_\#\rho_1$, we have that by the change of variables formula~\eqref{eq:cov} that
    \begin{equation}\label{eq:tak1}
        \rho_j(x)=\rho_1((h^{j-1})^{-1}(x))J_{(h^{j-1})^{-1}}(x), \qquad x\in M,
    \end{equation}
    for  $1\leq j \leq m.$ Using the chain rule $$J_{(h^{j})^{-1}}(x)=J_{h^{-1}}((h^{j-1})^{-1}(x)) J_{(h^{j-1})^{-1}}(x), \qquad x\in M,$$
     we then obtain
\begin{align}
    \frac{\rho_{j}(x)}{\rho_{j+1}(x)} = \frac{\rho_1((h^{j-1})^{-1}(x))J_{(h^{j-1})^{-1}}(x)}{\rho_1((h^{j})^{-1}(x))J_{(h^{j})^{-1}}(x)}  &= \frac{\rho_1((h^{j-1})^{-1}(x)) J_{(h^{j-1})^{-1}}(x)}{\rho_1((h^{j})^{-1}(x))  J_{h^{-1}}((h^{j-1})^{-1}(x)) J_{(h^{j-1})^{-1}}(x)}\label{eq:jac_1} \\
    &= \frac{\rho_1((h^{j-1})^{-1}(x))}{\rho_1((h^{j})^{-1}(x)) J_{h^{-1}}((h^{j-1})^{-1}(x))} \nonumber\\ &= \frac{\rho_1((h^{j-1})^{-1}(x))}{\rho_2((h^{j-1})^{-1}(x))}  \label{eq:jac_2} \\&= \psi((h^{j-1})^{-1}(x)), \label{eq:tak3}
\end{align}
for $1\leq j \leq m-1$. Above, \eqref{eq:jac_1} comes from \eqref{eq:tak1} and the chain rule, and  \eqref{eq:jac_2} again makes use of~\eqref{eq:tak1}. 

Now, let us first assume that $f_\# \rho_j = g_\#\rho_j$ for $1\leq j \leq m$. This means that 
\begin{equation}\label{eq:first}
    \rho_j(f^{-1}(x)) J_{f^{-1}}(x) = \rho_j(g^{-1}(x))J_{g^{-1}}(x).
\end{equation}
Dividing $f_\#\rho_j$ by $f_\#\rho_{j+1}$ and using \eqref{eq:first} to simplify then yields
\begin{equation}\label{eq:second}
    \frac{\rho_j(f^{-1}(x))}{\rho_{j+1}(f^{-1}(x))} =  \frac{\rho_j(g^{-1}(x))}{\rho_{j+1}(g^{-1}(x))}, \quad \text{for $1 \leq j \leq m-1$.}
\end{equation} 
Applying \eqref{eq:tak3}, we obtain 
$$\psi((h^{j-1})^{-1}(f^{-1}(x))) = \psi((h^{j-1})^{-1}(g^{-1}(x))),\quad \quad \text{for $1 \leq j \leq m-1$.}$$
Recalling the time-delay map from Definition \ref{def:delay_map}, we then have 
\begin{equation}\label{eq:finvginv}
\Psi^{(m-1)}_{(\psi,h^{-1})}\left(f^{-1}(x)\right) = \Psi^{(m-1)}_{(\psi,h^{-1})}\left(g^{-1}(x)\right),\qquad x\in N. 
\end{equation}
Note that by the assumption that $(\psi,h)\in \boldsymbol{G}$ (see Assumption~\ref{assumption:1}), we have $\Psi_{(\psi,h)}^{(m-1)}$ is an embedding as a result of Takens' theorem. Hence, it follows by Lemma \ref{lemma:permute_embed} that $\Psi_{(\psi,h^{-1})}^{(m-1)}$ is an embedding as well. 
Finally, combining Equation~\eqref{eq:finvginv} with the injectivity of an embedding, we have $f^{-1} = g^{-1}$, which gives $f = g$, as desired.\\
\\
We next consider the case when $\textup{div} (\rho_j v) = \textup{div} (\rho_j w)$ for $1\leq j \leq m$. Following similar steps in the proof of Theorem~\ref{thm1}, we obtain from this relationship that
   \begin{equation}\label{eq:v_eq1}
      \Big\langle \nabla \log(\rho_j(x)) , v(x)\Big\rangle_{r_x} + \textup{div} (v)(x) = \Big\langle \nabla \log(\rho_j(x)), w(x)\Big\rangle_{r_x}+ \textup{div} (w)(x), \qquad x\in M
   \end{equation}
   for $1\leq j \leq m$. Using \eqref{eq:v_eq1} to simplify $\textup{div} (\rho_j v) - \textup{div} (\rho_{j+1} v)$, we then obtain
   $$ \Bigg\langle \nabla \log \bigg( \frac{\rho_j(x)}{\rho_{j+1}(x)}\bigg) , v(x) \Bigg\rangle_{r_x} =  \Bigg\langle \nabla \log \bigg( \frac{\rho_j(x)}{\rho_{j+1}(x)}\bigg) , w(x) \Bigg\rangle_{r_x}, \qquad x\in M$$
   for $1\leq j \leq m-1$.
   Utilizing Equation~\eqref{eq:tak3}, the above equation becomes
   \begin{equation}\label{eq:vw}
    \Big\langle \nabla \log (\psi((h^{j-1})^{-1}(x))) , v(x)\Big\rangle_{r_x} = \Big\langle \nabla \log (\psi((h^{j-1})^{-1}(x))) , w(x) \Big\rangle_{r_x}, \quad \text{ $1\leq j \leq m-1$.}\end{equation}
   Let $\Phi(x):= \log\left( \Psi^{(m-1)}_{(\psi,h^{-1})}(x)\right)$, which is an embedding by Lemma~\ref{lemma:comp_embedding}. Thus, Equation~\eqref{eq:vw}  is equivalent to $d\Phi_xv = d\Phi_xw$, which implies $v = w$ and completes the proof.
\end{proof}

\begin{proof}[Proof of Corollary \ref{cor1}]
Assume that either $(\rho_1,\ldots,\rho_{m})\in \boldsymbol{D}$ or that $\rho_j = h_{\#}^{j-1} \rho_1$ for $1\leq j \leq m$ where $(\rho_1,h)$ satisfies Assumption~\ref{assumption:1}.  
To see that $\mathfrak{D}$ is a metric on $\mathrm{Diff}^1(M,N)$, note first that
$\mathfrak{D}(f,f)=0$. If $f\neq g$, then by Theorem \ref{thm1} and Theorem \ref{thm2}  there exists 
$j$ such that $f_\#\rho_j \neq g_\#\rho_j$, hence $\mathfrak{D}(f,g)>0$.  
Symmetry follows from the symmetry of $\mathcal{D}$, and the triangle
inequality follows immediately from the triangle inequality for $\mathcal{D}$:
\[
\mathfrak{D}(f,g)
=\sum_{j}\mathcal{D}(f_\#\rho_j,g_\#\rho_j)
\le \sum_{j}\Big(\mathcal{D}(f_\#\rho_j,q_\#\rho_j)
+\mathcal{D}(q_\#\rho_j,g_\#\rho_j)\Big)
= \mathfrak{D}(f,q)+\mathfrak{D}(q,g),
\]
for any $q \in \textup{Diff}^1(M,N)$.
Thus $\mathfrak{D}$ is a metric.

The argument for $\mathsf{D}$ is identical.  Clearly $\mathsf{D}(v,v)=0$, and if 
$v\neq w$, then by Theorem~\ref{thm1} and Theorem~\ref{thm2} there exists $j$ with 
$\textup{div}(\rho_j v)\neq \textup{div}(\rho_j w)$, so $\mathsf{D}(v,w)>0$.  
Symmetry follows from symmetry of $\mathbf{d}$, and the triangle inequality from that of $\mathbf{d}$. Hence, $\mathsf{D}$ is a metric. 
\end{proof}
\subsection{Proofs for Section \ref{subsec:dynamics}}
We now present the proofs of Propositions~\ref{prop:koopman1} and~\ref{prop:koopman2}, which are applications of Whitney and Takens embedding theorems.
\begin{proof}[Proof of Proposition \ref{prop:koopman1}]
    Let $\boldsymbol{W}_m$ be the generic set from Theorem \ref{thm:whitney} and choose $Y = (y_1,\ldots, y_m)\in \boldsymbol{W}_m.$ First, if $y_j\circ f = y_j \circ g$, this implies $Y \circ f = Y\circ g$ and since $Y$ is invertible we have $f =g$. Next, if $\langle \nabla y_j , v\rangle =\langle \nabla y_j , w\rangle$, we have that $dY_x (v) = dY_x (w)$ for all $x\in M$.  Since $dY_x$ is injective for each $x\in M$, this implies $v = w$. 
\end{proof}
\begin{proof}[Proof of Proposition \ref{prop:koopman2}]
    Let $\boldsymbol{G}$ be the generic set from Theorem \ref{thm:takens} and choose $(y,h)\in \boldsymbol{G}$. Recall that $y_j:= y\circ h^{j-1} $ for $1\leq j \leq m$. By Theorem \ref{thm:takens} we then have that the delay map
    $$\Psi_{(y,h)}^{(m)}(x):= (y(x),y(h(x)),\ldots,y(h^{m-1}(x))) = (y_1(x),\ldots, y_{m}(x)),\qquad x\in M$$
    is an embedding. As shorthand we will write $\Psi = \Psi_{(y,h)}^{(m)}.$ Since $y_j \circ f = y_j \circ g$ for $1\leq j \leq m$ we have $\Psi \circ f =\Psi \circ g $ and thus $f = g.$ Moreover, if $ \langle \nabla y_j , v\rangle =  \langle \nabla y_j , w \rangle$  for $1\leq j \leq m$ then $d\Psi_x (v) = d\Psi_x (w)$ and we obtain $v = w.$  
\end{proof}
The proof of Corollary \ref{cor2} is analogous to the proof of Corollary \ref{cor1}.
\begin{proof}[Proof of Corollary \ref{cor2}]
Assume that either $(y_1,\ldots,y_m)\in \boldsymbol{W}_m$ or $y_j = y_1\circ h^{j-1}$ for $1\leq j \leq m$ where $(y_1,h)\in \boldsymbol{G}$.  
By construction it is clear that
$\mathfrak{D}(f,f)=0$. If $f\neq g$, then by Propositions~\ref{prop:koopman1} and~\ref{prop:koopman2}  there exists 
$j$ such that $y_j\circ f \neq y_j\circ g$, hence $\mathfrak{D}(f,g)>0$.  
Symmetry follows from the symmetry of $\mathbf{d}$, and the triangle
inequality follows immediately from the triangle inequality for $\mathbf{d}$:
\[
\mathfrak{D}(f,g)
=\sum_{j}\mathbf{d}(y_j\circ f,y_j\circ g)
\le \sum_{j}\Big(\mathbf{d}(y_j\circ f,y_j\circ q)
+\mathbf{d}(y_j\circ q,y_j\circ g)\Big)
= \mathfrak{D}(f,q)+\mathfrak{D}(q,g),
\]
for any $q\in \textup{Diff}^1(M,M)$.
Thus $\mathfrak{D}$ is a metric.

The argument for $\mathsf{D}$ is identical.  Clearly $\mathsf{D}(v,v)=0$, and if 
$v\neq w$, then by Propositions~\ref{prop:koopman1}~and~\ref{prop:koopman2} there exists $j$ with 
$\langle \nabla y_j , v  \rangle \neq \langle \nabla y_j , w\rangle$, so $\mathsf{D}(v,w)>0$.  
Symmetry follows from symmetry of $\mathbf{d}$, and the triangle inequality from that of $\mathbf{d}$. Hence, $\mathsf{D}$ is a metric.
\end{proof}
\subsection{Proofs for Section \ref{subsec:PDE} }

We now present proofs of the results from Section \ref{subsec:PDE} which provide guarantees for unique vector field recovery from finite snapshot data in certain PDE inverse problems. In our proof of Corollary \ref{cor:CE}, we will use the fact that the solution $t\mapsto \rho(t)$ of the continuity equation $$\partial_t \rho(t,x) + \textup{div}(\rho(t,x) v(x)) = 0$$ over $[0,T]$ is given by $\rho(t) = (f_t)_\#\rho_0$ where $f_t$ is the time-$t$ flow map of the vector field $v$. We also use the fact that the solution $t\mapsto \rho(t)$ of the advection equation $$\partial_t\rho(t,x) + \langle \nabla \rho(t,x), v(x)\rangle = 0 $$ is given by $\rho(t) = \rho_0 \circ f_t^{-1}.$

\begin{proof}[Proof of Corollary \ref{cor:CE}] \textbf{(Continuity equation)}  First, we consider the case when $(\rho_0,f_{\Delta t}^{(v)})$ satisfies Assumption \ref{assumption:1} and where the solution maps $t\mapsto \rho^{(v)}(t)$ and $t\mapsto \rho^{(w)}(t)$ satisfy the continuity equation \eqref{eq:PDE_1} over $[0,T]$ with $\rho^{(v)}(0) = \rho_0 = \rho^{(w)}(0)$. We further assume that $\rho^{(v)}(t_j,x) = \rho^{(w)}(t_j,x)$ for $0\leq j \leq m$ and all $x\in M$.  Viewing the continuity equation solution operator as a pushforward map, this implies that $$\left(f_{t_j}^{(v)}\right)_\#\rho_0 =\left(f_{t_j}^{(w)}\right)_\#\rho_0,\qquad 0 \leq j \leq m.$$ Because measurements are uniform in time, i.e., $t_j = j\Delta t$, we equivalently have 
    \begin{equation}\label{eq:rhoj}
         \rho_j:=\left (f_{\Delta t}^{(v)}\right )^j_\#\rho_0 = \left (f_{\Delta t}^{(w)}\right)^j_\#\rho_0,\qquad  1 \leq j \leq m.
    \end{equation}
By the definition of $\rho_j$ in \eqref{eq:rhoj}, it also follows that 
$$
\left(f_{\Delta t}^{(v)}\right)_\#\rho_j = \left(f_{\Delta t}^{(w)}\right)_\#\rho_j,\qquad 0\leq j \leq m-1.$$
Since $(\rho_0,f_{\Delta t})$ satisfies Assumption \ref{assumption:1}, it then follows by Theorem \ref{thm2} that $f_{\Delta t}^{(v)}(x) = f_{\Delta t}^{(w)}(x)$ for all $x\in M$, as desired. If we have additional knowledge that $\mathcal{L}^{(v)}[\rho^{(v)}(t_j,x)]= \mathcal{L}^{(w)}[\rho^{(w)}(t_j,x)]$ for $0\leq j \leq m-1$ and all $x\in M$, then by definition of the continuity equation this gives that $\textup{div} ( \rho_j v)(x) = \textup{div} (\rho_j w)(x)$ for $0\leq j \leq m-1$ and all $x\in M$. The conclusion that $v(x) = w(x)$ for all $x\in M$ then follows from Theorem \ref{thm2}.\vspace{-0.2cm}\\
\\
\noindent \textbf{(Advection equation)} We now turn to the case when $(\rho_0,f_{\Delta t}^{(v)})\in \boldsymbol{G}$ and $t\mapsto \rho^{(v)}(t)$ and $t\mapsto \rho^{(w)}(t)$ satisfy the advection equation \eqref{eq:PDE_1} over $[0,T]$ with $\rho^{(v)}(t_j,x) = \rho^{(w)}(t_j,x)$ for $0\leq j \leq m$ and all $x\in M$. Using the fact that solutions to the advection equation are given by composition with the inverse flow map and, additionally, that measurements are uniform in time, we obtain 
\begin{equation}\label{eq:advs}
    \rho_j:= \rho_0 \circ \left ( f_{\Delta t}^{(v)} \right)^{-j} = \rho_0 \circ \left ( f_{\Delta t}^{(w)} \right)^{-j},\qquad 1 \leq j \leq m.
\end{equation} 
From the definition of $\rho_j$ in \eqref{eq:advs} it follows that
\begin{equation}\label{eq:advs2}
    \rho_j \circ
\left(f^{(v)}_{\Delta t}\right)^{-1} = \rho_j \circ \left(f^{(w)}_{\Delta t}\right)^{-1}, \qquad 0\leq j \leq m-1.
\end{equation}
Moreover, since $(\rho_0,f_{\Delta t}^{(v)})\in \boldsymbol{G}$, the generic set from Theorem \ref{thm:takens}, the delay map $\Psi_{(\rho_0,f_{\Delta t}^{(v)})}^{(m)}$ is an embedding. By Lemma \ref{lemma:permute_embed}, it then follows that $\Phi(x) =  (\rho_0(x),\dots,\rho_{m-1}(x))$, is an embedding. As a result, \eqref{eq:advs2} implies that $f_{\Delta t}^{(v)}(x) = f_{\Delta t}^{(w)}(x)$ for all $x\in M$. Moreover, if $\mathcal{L}^{(v)}[\rho^{(v)}(t_j,x)]= \mathcal{L}^{(w)}[\rho^{(w)}(t_j,x)]$ for $0\leq j \leq m-1$ and all $x\in M$, then by the definition of the advection equation we have $\langle \nabla \rho_j(x) ,v(x)\rangle_{r_x} = \langle \nabla \rho_j(x), w(x)\rangle_{r_x}$ for $0\leq j \leq m-1$ and all $x\in M$. This implies that $d\Phi_x v = d\Phi_x w$, and since $\Phi$ is an embedding, this implies by Definition \ref{def:embedding} that $v(x) = w(x)$ for all $x\in M$ as desired. 
\end{proof}

In the statement of Corollary \ref{cor:ADR}, we consider the action of the ADR differential operator $\mathcal{L}^{(v)}[\rho]$ on a $C^1$ density $\rho$. Given that the differential operator includes second-order spatial derivatives, we interpret this quantity in the weak sense, i.e., through the relationship
\begin{equation}\label{eq:weak}
    \int_M \phi \mathcal{L}^{(v)}[\rho] \, \textrm{d}x = \int_M \textup{div} (\rho v ) \phi \, \textrm{d}x + \int_M \langle \nabla \phi , D \nabla \rho\rangle\, \textrm{d}x + \int_M R(\rho)\phi \, \textrm{d}x,\quad \forall \phi \in C_c^{\infty}(M). 
\end{equation}
We now present the proof of Corollary \ref{cor:ADR}.
\begin{proof}[Proof of Corollary \ref{cor:ADR}]
    Assume that $(\rho_1,\dots,\rho_{m})\in \boldsymbol{D}$, the generic set from Theorem \ref{thm1} and that $\mathcal{L}^{(v)}[\rho_j] = \mathcal{L}^{(w)}[\rho_j]$ for $1\leq j \leq m$.  Equating the weak form expressions of these derivatives (see \eqref{eq:weak}) and canceling common terms,  we have 
    \begin{equation}\label{eq:int_eq}
        \int_M \textup{div}(\rho_j v) \phi \, \textrm{d}x = \int_M \textup{div}(\rho_j w) \phi \, \textrm{d}x ,\qquad \forall \phi \in C_c^{\infty}(M),\qquad 1\leq j \leq m.
    \end{equation}
    Note that the maps $x\mapsto \textup{div}(\rho_j v)(x)$ and $x\mapsto \textup{div}(\rho_j w)(x)$ are continuous, hence integrable, as $M$ is compact. Thus, since \eqref{eq:int_eq} holds for all $\phi\in C_c^{\infty}(M)$ we have that $\textup{div}(\rho_j v)(x) = \textup{div}(\rho_j w)(x)$ for all $x\in M$ and $1\leq j \leq m$.
    By Theorem \ref{thm1} this implies $v(x) = w(x)$ for all $x\in M$.
\end{proof}

\section{Numerical Experiments}\label{sec:numerics}

In this section, we conduct numerical tests  which showcase the unique recovery of transport maps and vector fields from finite measure-valued datasets.\footnote{Our code is available: \url{https://github.com/jrbotvinick/Transport-Map-and-Vector-Field-Recovery}.} Throughout, the unknown functions are parameterized by neural networks and the metrics introduced in Corollary \ref{cor1} are used to construct loss functions that promote unique training on distributional learning tasks. In Section \ref{subsec:numerics1d}, we  demonstrate unique pushforward map recovery in the time-independent setting. Section \ref{subsec:lorenz} then studies a corresponding time-dependent problem which can also be viewed as an inverse problem for the continuity equation. Across both Sections \ref{subsec:numerics1d} and \ref{subsec:lorenz} we repeat experiments with different randomized realizations of the measure-valued datasets, providing empirical evidence that the genericity assumptions introduced in Sections \ref{sec:results} and \ref{subsec:discussion} hold in practice. Finally, in Section \ref{subsec:vector} we demonstrate unique vector field recovery through the comparison of finitely many density-weighted divergence operators. We investigate how the accuracy of the vector field reconstruction depends on the number of available densities, and we discuss these results in light of the theoretical guarantees established in Section~\ref{sec:results}.

\subsection{Unique Recovery of a One-Dimensional Pushforward Map}\label{subsec:numerics1d}
We begin by showcasing unique pushforward map recovery for a simple one-dimensional example. We will aim to learn the map $f:\mathbb{S}^1\to \mathbb{R}^3$ given by
\begin{equation}\label{eq:smap}
    f(x) =  \Bigg( \sin(x),\; \frac{\cos(3x)+\sin(2x)}{2},\; \frac{\sin(3x)+\sin(5x)}{2} \Bigg), 
\qquad x \in [-\pi,\pi],
\end{equation}
from its pushforward action on five densities $\rho_1,\dots,\rho_5\in \mathcal{P}(\mathbb{S}^1)$. We have written $\mathbb{S}^1$ to denote the circle obtained by identifying the endpoints of $[-\pi,\pi]$. The densities $\rho_j$ are constructed from the von Mises distribution
\begin{equation*}\label{eq:VM}
  \rho_j(x;\alpha_j,\beta_j) \propto \exp( \alpha_j \cos(x - \beta_j)), \qquad x\in [-\pi,\pi], 
\end{equation*}
 where the concentrations $\{\alpha_j\}_{j=1}^5$ are sampled i.i.d.~from $\text{Unif}([1,3])$ and the centers $\{\beta_j\}_{j=1}^5$ are sampled i.i.d.~from $\text{Unif}([-\pi,\pi])$. These reference densities are visualized in the top row of Figure~\ref{fig:1a}. Thus, each $\rho_j$ is a realization of a random measure \cite{kallenberg2017random}.

\begin{figure}[h!]
  \centering
  
  \subfloat[(Top) input mesures $\rho_1,\ldots,\rho_5$. (Bottom) Marginals from output measures ${f} _\#\rho_i$ and ${\left(f_{\theta}\right)}_\#\rho_i$, all estimated from samples by a kernel density estimation.  ]
{\label{fig:1a}\includegraphics[width=\textwidth]{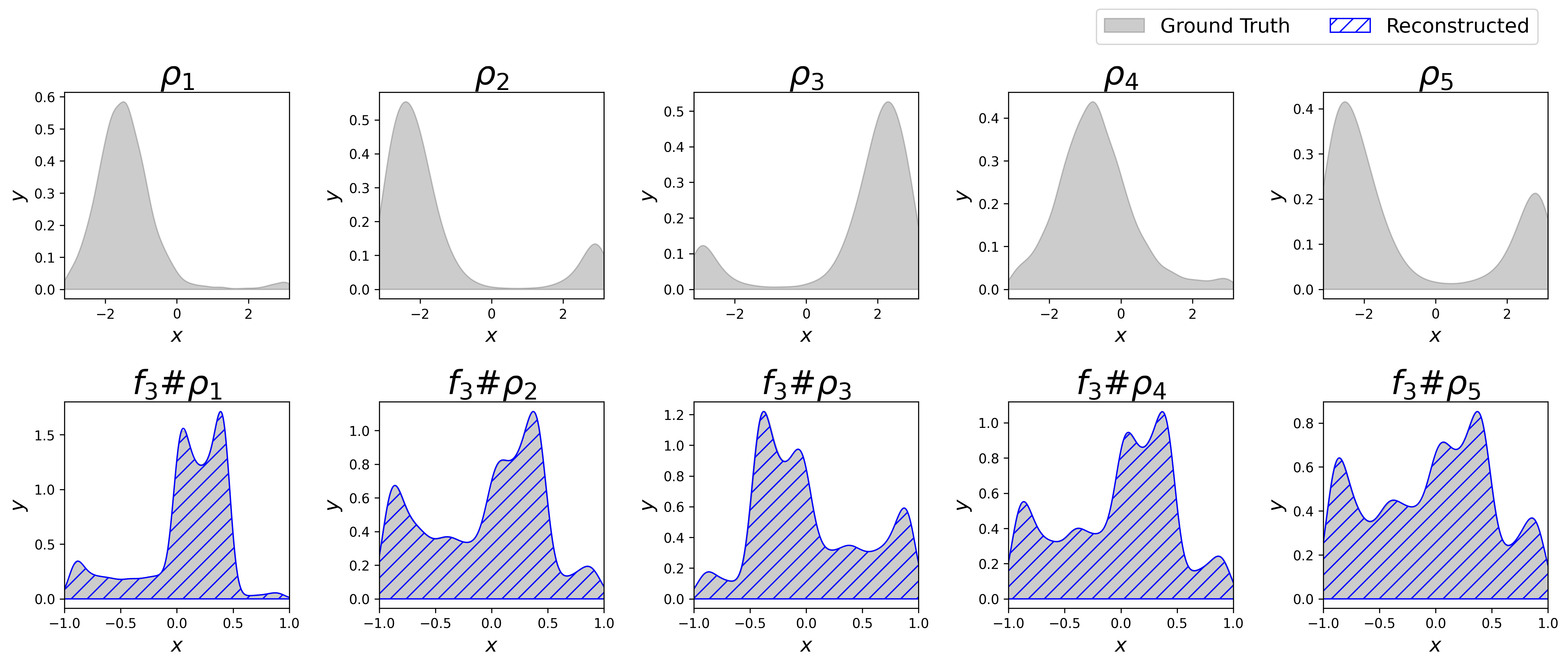}}
  \hfill
  \vspace{0.5cm}
  \subfloat[The function $f_{\theta}$ recovered from training on the measure-valued data closely agrees with the ground truth $f$.]
  {\label{fig:1b}\includegraphics[width=.75\textwidth]{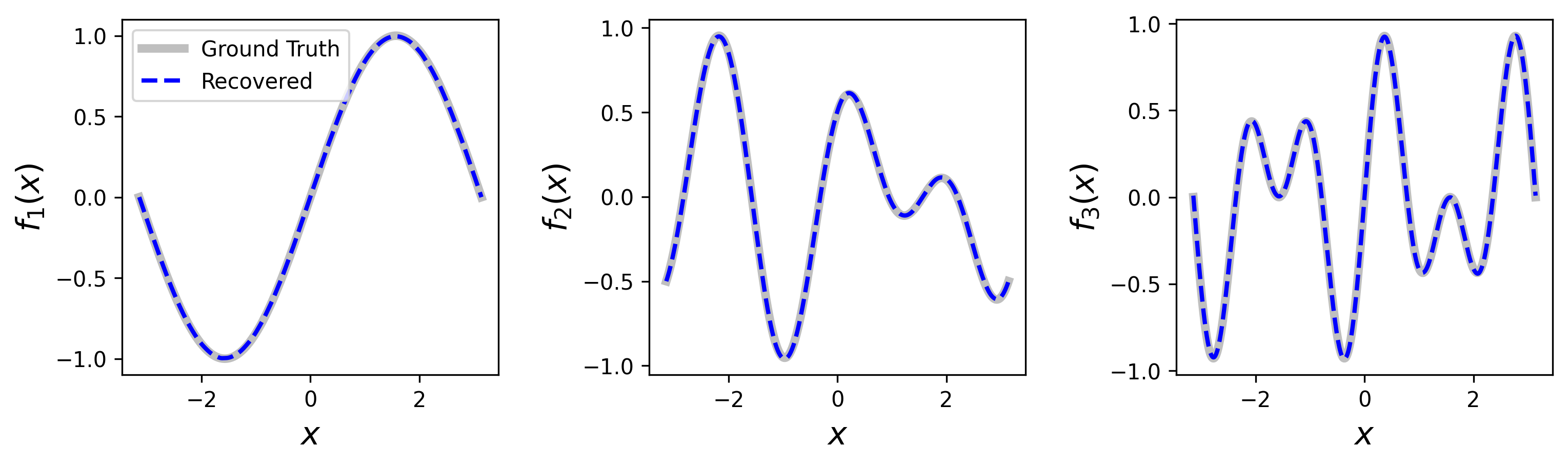}}
  \hfill
  \caption{Recovering a one-dimensional function from its pushforward action on five densities. }
  \label{fig:1}
\end{figure}

The densities $\rho_1,\dots,\rho_5$ are used to construct the training dataset

\begin{equation}\label{eq:dataset}
    \{(\rho_j, f_\# \rho_j)\}_{j=1}^5,
\end{equation}
where a marginal of the output measures $f_\#\rho_j$ is visualized in the bottom row of Figure~\ref{fig:1a}. Throughout Figure \ref{fig:1a}, we have written $f_i$ to denote the $i$-th component of the vector-valued map $f$. To learn $f$ from the dataset \eqref{eq:dataset}, we initialize a neural network $f_\theta:\mathbb{S}^1\to \mathbb{R}$ with weights and biases $\theta\in \mathbb{R}^p$ and seek to minimize the loss
\begin{equation}\label{eq:metric2}
    \mathcal{J}(\theta) = \frac{1}{5} \sum_{j=1}^5 \mathcal{D}\big(f_\# \rho_j,(f_{\theta})_\# \rho_j\big),
\end{equation}
where $\mathcal{D}$ denotes the energy Maximum Mean Discrepancy (MMD) \cite{feydy2019interpolating}. Note that the objective function~\eqref{eq:metric2} directly applies the metric introduced in Corollary \ref{cor1} to determine the mismatch between $f$ and $f_{\theta}$. 

The neural network  $f_{\theta}$ uses a Fourier embedding of $\mathbb{S}^1$ to ensure continuity and smoothness at the endpoints of $[-\pi,\pi]$. In particular, it is constructed as 
$$f_{\theta}(x) = h_{\theta}(\sin(x),\cos(x)),\qquad x\in [-\pi,\pi],$$
where $h_{\theta}:\mathbb{R}^2\to \mathbb{R}$ is a fully connected neural network with two hidden layers of $100$ nodes each and hyperbolic tangent activation function. At each optimization step, the loss \eqref{eq:metric2} is estimated using 100 i.i.d.~samples from each $\rho_j$ which are used to construct empirical approximations to $f_\#\rho_j$ and $(f_{\theta})_\#\rho_j$. Optimization is carried out using Adam \cite{DBLP:journals/corr/KingmaB14} with learning rate $10^{-3}$ for $5 \times 10^4$ iterations.

Following training, we visualize the learned map $f_\theta$; see Figure~\ref{fig:1b}. Consistent with the theoretical identifiability guarantees established in Theorem \ref{thm1}, the recovered $f_\theta$ closely matches the ground-truth  $f$. The experiment is repeated 10 times for different randomized choices of the measure-valued dataset. In each trial we randomly resample the concentrations and centers $\{(\alpha_j,\beta_j)\}_{j=1}^5$ which parameterize the densities $\rho_j$, as well as the initialization of the neural network. Following training, we assess the learned neural network's performance by approximating the relative mean squared error
\begin{equation*}\label{eq:rel_err}
    \text{MSE}(f_{\theta})=\frac{\int_{\mathbb{S}^1} |f(x) - f_{\theta}(x)|^2 \, \textrm{d}x}{ \int_{\mathbb{S}^1} |f(x)|^2 \, \textrm{d}x}.
\end{equation*} Across all ten trials, we found that the model $f_{\theta}$ converged to the ground truth solution with high accuracy with   $\text{MSE}(f_{\theta})$ taking values in $[1.54\cdot 10^{-5}, 8.36 \cdot 10^{-5}]$ with a median of $3.23\cdot 10^{-5}.$

\subsection{Unique Lorenz-63 Identification from Density Snapshots}\label{subsec:lorenz}

We next consider a more realistic setting in which data is generated by a time-dependent process. Here, we consider the Lorenz-63 system defined by the following system of differential equations 
\begin{equation}\label{eq:lorenz}\begin{cases}
    \dot{x} = \sigma(y-x)\\
    \dot{y} = x(\rho - z) - y\\
    \dot{z} = xy - \beta z
\end{cases}.\end{equation}
Denoting by $f$ the time-$\Delta t$ flow of \eqref{eq:lorenz} with $\Delta t = 0.1$, we consider a dataset of the form $\{\rho_j\}_{j=0}^{m}$ with $m = 7$ and $\rho_j := f^j_\#\rho$ for $0\leq j \leq m$, where $\rho\in \mathcal{P}(\mathbb{R}^3)$ is a Gaussian initial condition. The initial condition $\rho$ is a realization of a random measure with mean $(\gamma_1,\gamma_2,\gamma_3)$, where $\gamma_1 \sim \text{Unif}(-15,15)$, $\gamma_2 \sim \text{Unif}(-15,15)$, and $\gamma_3 \sim \text{Unif}(20,40)$, and covariance $\sigma^2 I $, where $\sigma \sim \text{Unif}(3,7)$ and $I\in \mathbb{R}^{3\times 3}$ denotes the identity matrix. 

\begin{figure}[h!]
  \centering
  
  \subfloat[Marginal of the training data onto the $x$-$z$ axis.]
  {\label{fig:3a}\includegraphics[width=\textwidth]{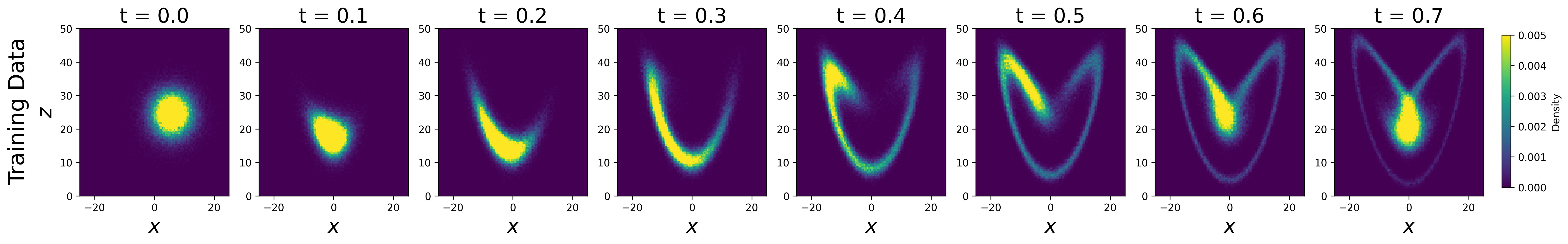}}
  \hfill
  \subfloat[Support of the observed data (top left), the relative reconstruction error (top right), the ground truth vector field (bottom left), and the learned vector field (bottom right).]
  {\label{fig:3b}\includegraphics[width=.7\textwidth]{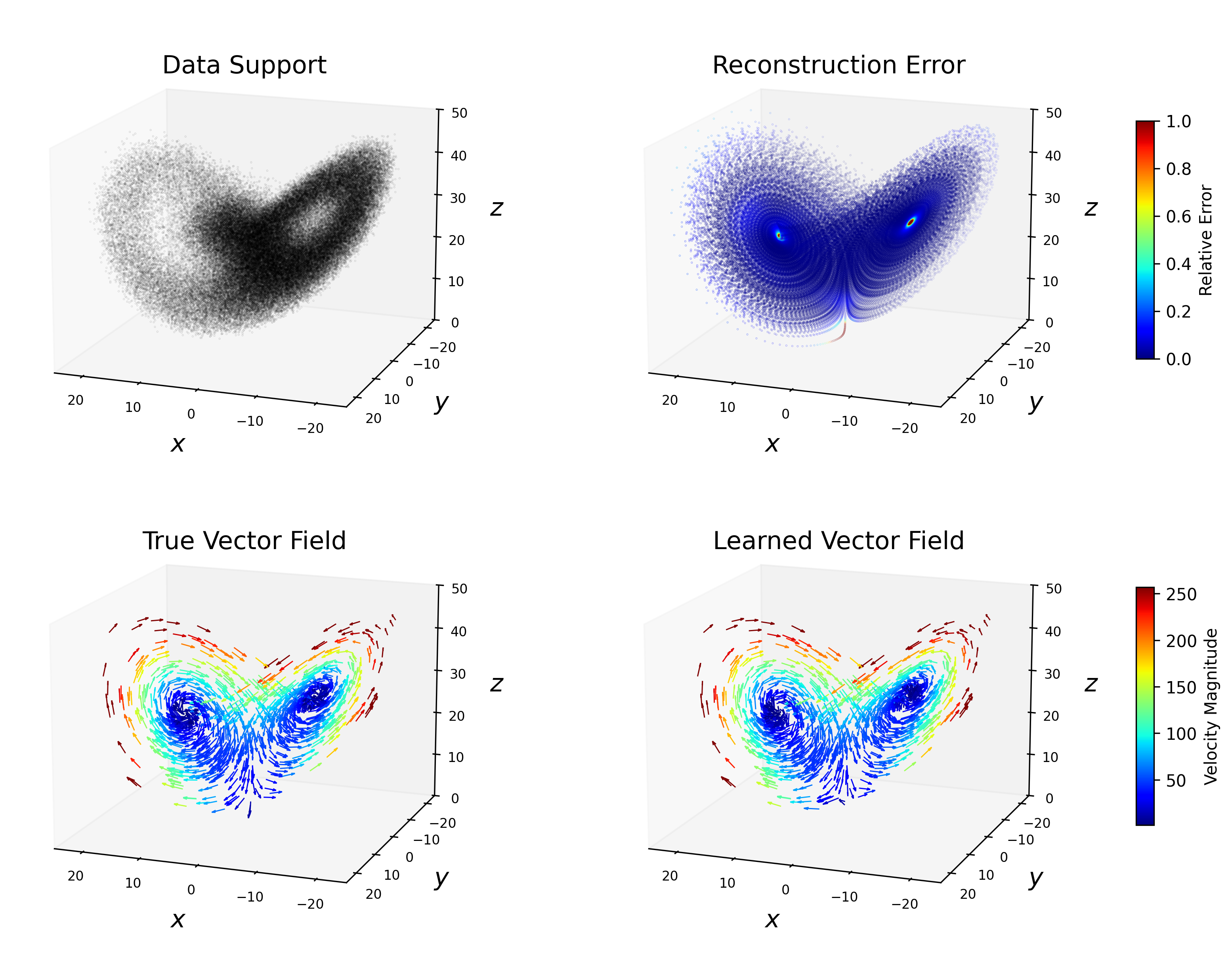}}
  \hfill
 \subfloat[Density evolution predicted by the learned model for a new Gaussian initial condition with mean $(-10,10,40)$ and covariance $ 49 I$.]
  {\label{fig:3c}\includegraphics[width=\textwidth]{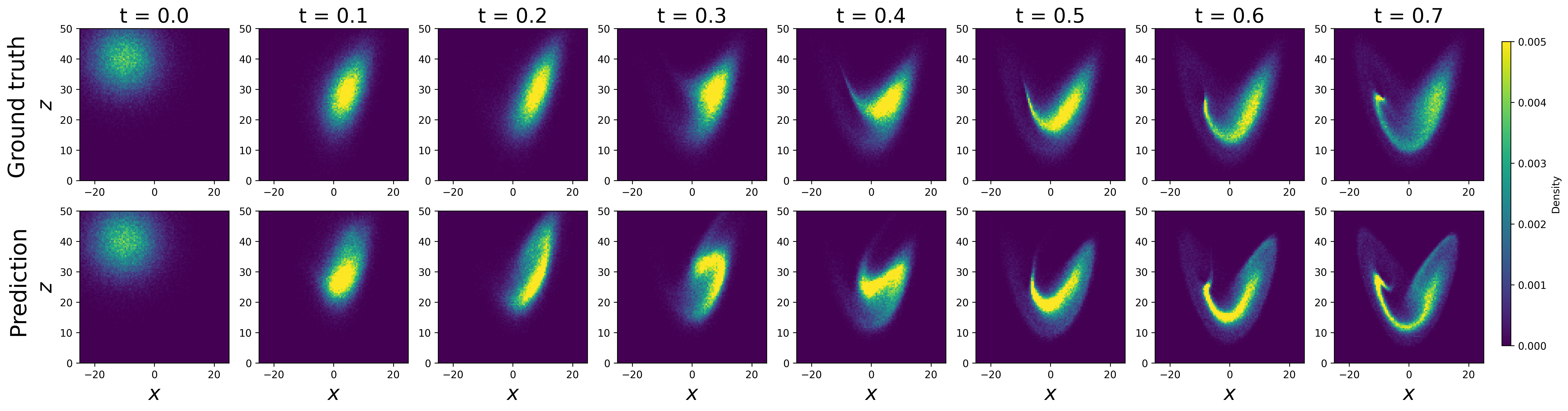}}
  \hfill
  \caption{Recovering the Lorenz-63 dynamics on the support of finite snapshot data $\{\rho_j\}_{j=0}^m$ that covers the full chaotic attractor. In this example, $\text{MSE}(v_{\theta}) = 1.48\cdot 10^{-2}$ and $\text{MSE}(f_{\theta}) = 2.27\cdot 10^{-3}.$}
  \label{fig:3}
\end{figure}

\begin{figure}[h!]
  \centering
  
  \subfloat[Marginal of the training data onto the $x$-$z$ axis.]
  {\label{fig:4a}\includegraphics[width=\textwidth]{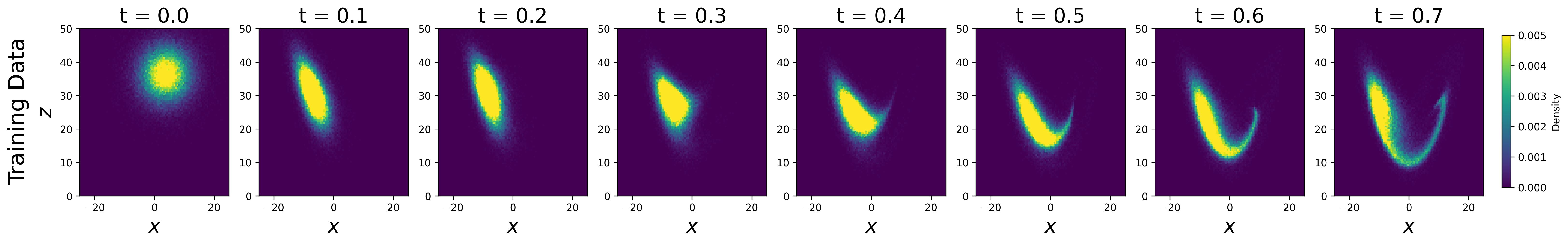}}
  \hfill
  \subfloat[Support of the observed data (top left), the relative reconstruction error (top right), the ground truth vector field (bottom left), and the learned vector field (bottom right).]
  {\label{fig:4b}\includegraphics[width=.7\textwidth]{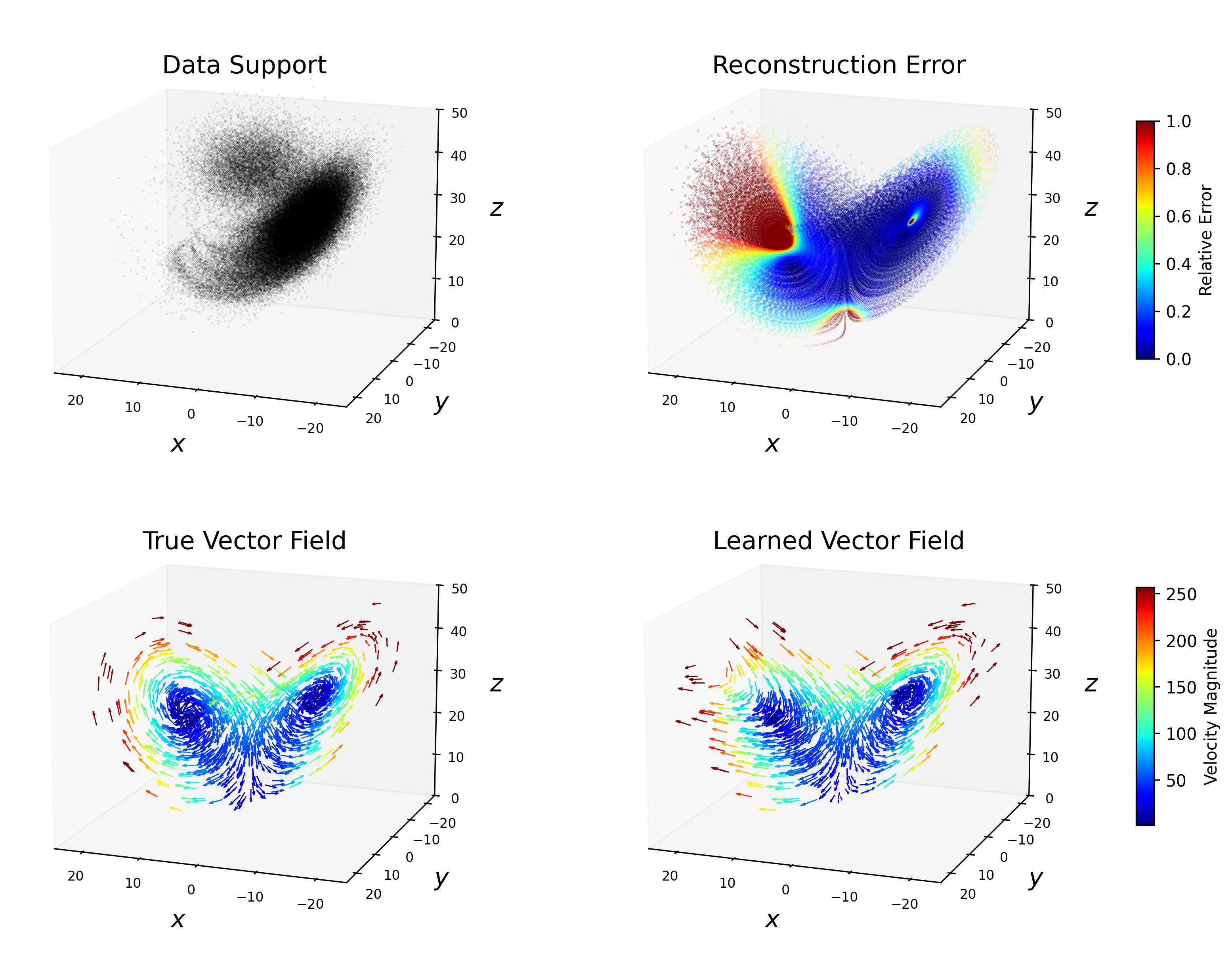}}
 \caption{Recovering the Lorenz-63 dynamics on the support of finite snapshot data $\{\rho_j\}_{j=0}^m$ which covers only a fraction of the chaotic attractor. The dynamics are accurately identified on the support of the observed density snapshots. In this example, $\text{MSE}(v_{\theta}) = 8.48\cdot 10^{-2}$ and $\text{MSE}(f_{\theta}) = 3.93\cdot 10^{-3}.$}\label{fig:4}
\end{figure}

Each measure $\rho_j$ is represented as an empirical distribution from $N = 10^5$ samples, and the pushforward $\rho_{j+1}=f_{\#}\rho_j$ is computed by evaluating the flow $f$, approximated via the forward Euler method with a time step of $0.01$, on each sample. We then seek to numerically recover the underlying vector field~\eqref{eq:lorenz} from this measure-valued dataset. In particular, we parameterize $v_{\theta}$ as a neural network with corresponding flow $f_{\theta}$ and seek to minimize the objective 
\begin{equation}\label{eq:lorenz_loss}
    \mathcal{J}(\theta) = \sum_{j=0}^{m-1}\mathcal{D}\left((f_{\theta})_\# \rho_j, \rho_{j+1}\right).
\end{equation}
Note that the objective \eqref{eq:lorenz_loss} directly applies the metric introduced in Corollary \ref{cor1} to solve this distribution-matching inverse problem. 

The neural network $v_{\theta}$ is fully connected with two hidden layers of 100 nodes and hyperbolic tangent activation, and the time-$\Delta t$ flow $f_{\theta}$ is again approximated using Euler's method with a timestep of $0.01$. Before training, all data is rescaled by an affine transformation to belong to the unit cube. We then use the Adam optimizer with a learning rate of $10^{-3}$ to minimize  $\mathcal{J}(\theta)$ over $10^4$ training iterations, where  $\mathcal{D}$ is the sample-based energy MMD and each term in the loss \eqref{eq:lorenz_loss} is approximated from minibatches of 200 sample points.  This experiment is repeated 10 times for different randomized means $(\gamma_1,\gamma_2,\gamma_3)$ and covariances $\sigma^2 I$ of the initial measure $\rho_0$, as well as different randomized neural network initializations. Following training, we assess the model's performance by evaluating both the relative mean-squared error of the vector field $v_{\theta}$ and flow map $f_{\theta}$, weighted by the measure
\begin{equation}\label{eq:mixture}
   \tilde{\rho}: = \frac{1}{m}\sum_{j=0}^{m-1} \rho_j \in \mathcal{P}(\mathbb{R}^3).
\end{equation}
In particular, our evaluation metrics for the trained model with parameters $\theta$ are given by 
\begin{equation*}\label{eq:error_metric_l}
    \text{MSE}(v_{\theta}) = \frac{ \int_{\mathbb{R}^3} | v_{\theta}(x) - v(x)|^2 \tilde{\rho}(x) \, \textrm{d}x }{\int_{\mathbb{R}^3} |v(x)|^2 \tilde{\rho}(x) \, \textrm{d}x }, \qquad  \text{MSE}(f_{\theta}) = \frac{ \int_{\mathbb{R}^3} | f_{\theta}(x) - f(x)|^2 \tilde{\rho}(x) \, \textrm{d}x }{\int_{\mathbb{R}^3} |f(x)|^2 \tilde{\rho}(x) \, \textrm{d}x },
\end{equation*}
where all integrals are approximated via Monte--Carlo sampling. By evaluating the relative mean squared errors weighted by $\tilde{\rho}$ we ensure that we only assess the model's performance in regions where data has been observed. Across all ten trials we observed accurate recovery of the vector field and flow map on the support of the data, with $\text{MSE}(v_{\theta})$ taking values in $[1.35\cdot 10^{-2}, 1.53\cdot 10^{-1}]$ with a median of $6.47\cdot 10^{-2}$, and with $\text{MSE}(f_{\theta})$ taking values in $[1.01\cdot 10^{-3}, 1.26\cdot 10^{-2}]$ with a median of $3.92\cdot 10^{-3}.$ These results are consistent with Corollary~\ref{cor:CE} where we introduced theoretical guarantees for recovering flow maps from finite snapshot data arising from continuity equations.

In Figures \ref{fig:3} and \ref{fig:4} we visualize two illustrative cases from these ten trials. Figure \ref{fig:3} depicts a situation in which the trajectory $\{\rho_j\}$ covers the full Lorenz-63 attractor, while in Figure \ref{fig:4} the trajectory only covers a fraction of the attractor. In both cases, we demonstrate accurate recovery of the vector field \textit{on the support of the observed data}. Figures \ref{fig:3a} and \ref{fig:4a} show marginal projections of the training data $\{\rho_j\}$. Moreover, in Figures \ref{fig:3b} and \ref{fig:4b} we visualize the support of $\tilde{\rho}$, the relative error $|v_{\theta}(x) - v(x)|^2/|v(x)|^2$ on the support of the Lorenz-63 attractor, the ground truth vector field $v$, and the learned vector field $v_{\theta}$. As expected, in Figure \ref{fig:4b} the support of $\tilde{\rho}$ is correlated with the regions of low relative error. 

In Figure \ref{fig:3c}, we show how the learned model can be used to conduct accurate forecasts for distributional dynamics  corresponding to new measure-valued initial conditions that were unseen during training. Extrapolating to new measure-valued initial conditions for the model learned in Figure \ref{fig:4} is more challenging, given that the support of the data $\{\rho_j\}_{j=0}^m$ did not include a critical part of the Lorenz-63 attractor. This limitation  arises solely from the observed data and is unrelated to the learning framework. 
\subsection{Unique Vector Field Recovery via Divergence Operator Comparison}\label{subsec:vector}
While Sections~\ref{subsec:numerics1d} and~\ref{subsec:lorenz} reconstruct transport maps by comparing pushforward measures, we now showcase an experiment in which a vector field is recovered by comparing weighted divergence operators over a finite collection of densities. In particular, we consider the 2D vector field defined by 
\begin{equation}\label{eq:2d_field}
    v(x,y) = (y,- \sin( 4 \pi x)), \qquad (x,y)\in [-1,1]^2,
\end{equation}
 which describes the motion of an undamped pendulum.
While we do not have direct access to the unknown ground truth $v$, we assume we can evaluate the function $x\mapsto \textup{div}( \rho_j v)(x)$ for a finite collection of densities $\{\rho_j\}_{j=1}^m$. Each $\rho_j$ is a realization of a random measure defined by $\rho_j = N(\gamma_j, \sigma_j^2 I)$ with $\gamma_j \sim \textup{Unif}([-1,1]^2)$ and $\sigma_j \sim \text{Unif}([0.75,1.25])$. We then parameterize the unknown vector field as a neural network $v_{\theta}:\mathbb{R}^2\to \mathbb{R}^2$ and seek to reduce the loss 
\begin{equation}\label{eq:lossdiv}
    \mathcal{J}(\theta) = \sum_{j=1}^m \|\textup{div}(\rho_j v) - \textup{div}(\rho_j v_{\theta})\|_{L^2}^2.
\end{equation}
The loss \eqref{eq:lossdiv} makes use of the metric on the space of vector fields introduced in Corollary~\ref{cor1}. We approximate $\mathcal{J}(\theta)$ using automatic differentiation and Monte--Carlo integration with randomly sampled minibatches of 200 points from $\text{Unif}([-1,1]^2)$. This follows the conventional training pipeline of a physics-informed neural network \cite{raissi2019physics}. The network $v_{\theta}$ has two hidden layers, each consisting of 50 nodes and using the hyperbolic tangent activation function, and optimization is carried out using Adam for $2\cdot 10^4$ iterations.

\begin{figure}[h!]
  \centering
  \subfloat[Reference vector field (left) and the relative error as a function of $m$  (right). ]
  {\includegraphics[width=.8\textwidth]{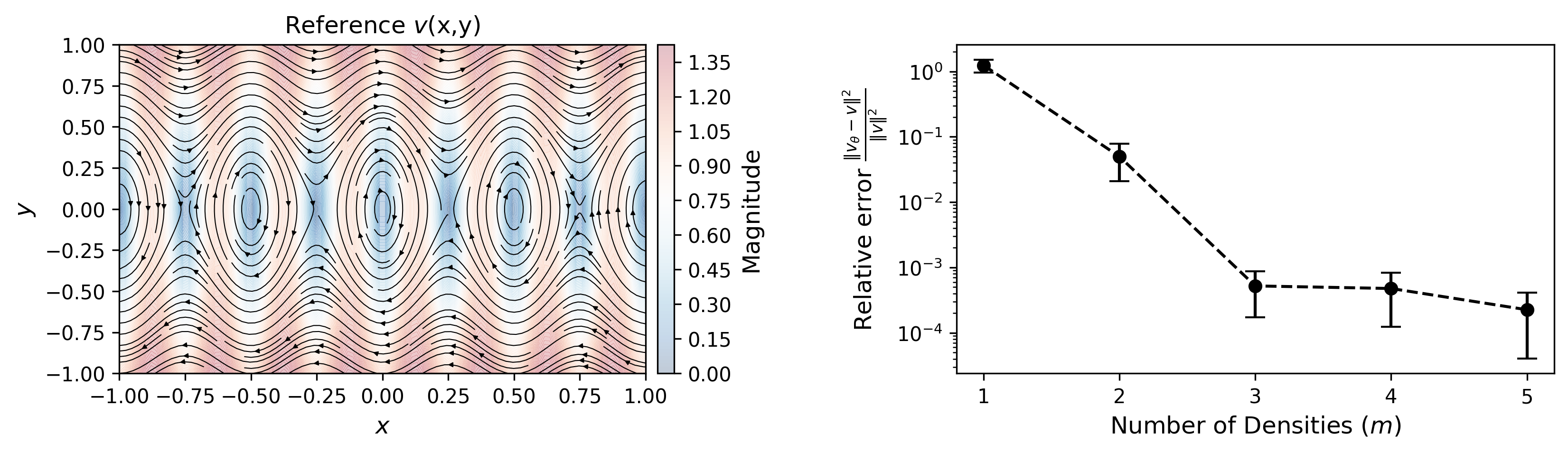}\label{fig:5a}}
  \vspace{0.2cm}
  \subfloat[Visualization of the densities $\{\rho_j\}_{j=1}^m$ via the function $\max_j \rho_j(x)$ (top row), the learned vector field $v_{\theta}$ after minimizing the objective \eqref{eq:lossdiv} (middle), and the absolute error between the learned $v_{\theta}$ and reference $v$ (bottom).]
  {\label{fig:5b}\includegraphics[width=\textwidth]{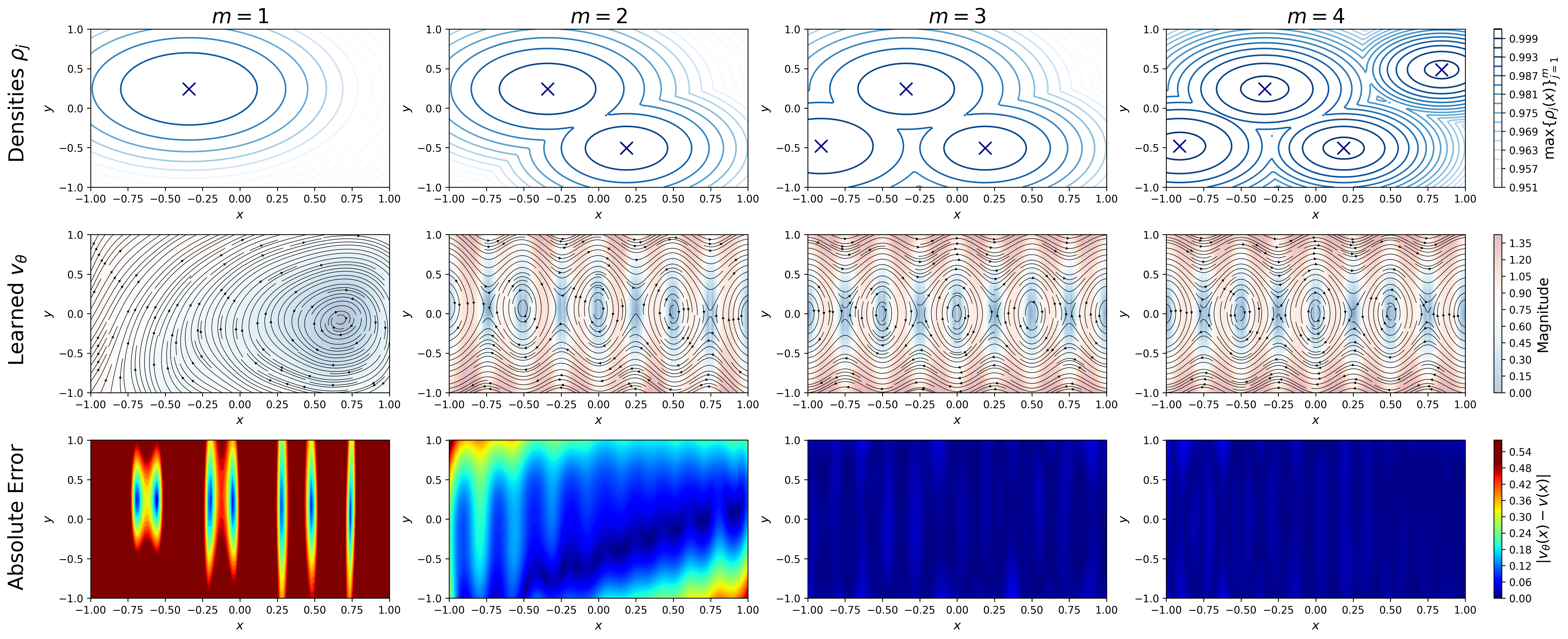}}
 \caption{Reconstructing the vector field \eqref{eq:2d_field} by comparing divergence operators via the objective function \eqref{eq:lossdiv}. The reference vector field $v$ and relative error as a function of $m$ are shown in Figure \ref{fig:5a}, while Figure \ref{fig:5b} visualizes individual training results for $m \in \{1,2,3,4\}$. }\label{fig:5}
\end{figure}

In Figure \ref{fig:5}, we report training results for various choices of $m$ to empirically investigate how the reconstructed vector field depends on the number of accessible weighted divergence operators.  For each fixed choice of $m$, we repeat the neural network training $10$ times with different randomized initializations.
For each network training we compute the relative error leading to the visualization of means and standard deviations in Figure \ref{fig:5a}. Figure \ref{fig:5b} visualizes individual training results for $m \in \{1,2,3,4\}$.

Theorem \ref{thm1} predicts that  unique recovery occurs when $ m > 2d+1$, which in this case is $m  =6$. This heuristic relies on the Whitney and Takens embedding theorems which are known to provide \textit{upper bounds} on the required embedding dimensions, while in practice lower dimensional embeddings often exist. These minimum embedding dimensions are expected to depend highly on the observed $\rho_j$ and underlying $v$. In Figure \ref{fig:5}, we observe accurate recovery of the underlying vector field using only $m = 3$ densities. 
\section{Conclusion}\label{sec:conclusion}

In this work, we established conditions guaranteeing that a diffeomorphism or vector field can be uniquely recovered from its action on finitely many probability measures. In the static setting, Theorem~\ref{thm1} proves that if $m>2d+1$, then for a generic family of $m$ strictly positive $C^1$ densities, equality of the corresponding pushforwards or weighted divergences forces equality of the underlying maps or vector fields. As a result, the theorem provides finite-data identifiability guarantees for transport-based inverse problems. We extended these results to time-dependent data generated by iterated pushforwards of a dynamical system, showing in Theorem~\ref{thm2} that identifiability persists under a Takens-type embedding assumption. As a consequence, Corollary~\ref{cor1} introduces metrics on spaces of diffeomorphisms and vector fields defined through finitely many measure-valued observations, providing a new finite-data framework for data fitting in distributional inverse problems.

More broadly, our results provide a unified framework for reconstruction problems arising in operator-theoretic dynamical systems, PDE inverse problems, and transport-based generative modeling. We showed that the same identifiability mechanism applies to the recovery of Perron--Frobenius and Koopman operators, to inverse problems for continuity, advection, Fokker--Planck, and advection--diffusion--reaction equations, and to transport-based generative models. The numerical experiments in Section~\ref{sec:numerics} further support the practical relevance of the theory by demonstrating recovery of vector fields and pushforward maps from finite distributional data.

From the perspective of machine learning, these results provide finite-data identifiability guarantees for transport- and operator-based models, clarifying when such measure-valued learning problems are well posed. This, in turn, lays groundwork for future study of stability, robustness, and generalization, since uniqueness is a prerequisite for meaningful control of error propagation and sensitivity to perturbations. More generally, by characterizing the number and type of distributions needed for unique recovery, our analysis may help guide the design of learning objectives, model classes, and data-collection strategies that enforce identifiability explicitly rather than implicitly.

Important directions for future work include relaxing the assumptions underlying the current theory. It will also be important to establish stability estimates for the proposed metrics, thereby quantifying how perturbations in the observed measures affect the recovery of the underlying diffeomorphism or vector field. Further avenues include extending the framework beyond the diffeomorphic setting, as well as developing a quantitative understanding of how identifiability is influenced by noise and finite-sample effects.

\section*{Acknowledgements}
J.~B.-G.~was supported in part by a fellowship award under contract FA9550-21-F-0003 through the National Defense Science and Engineering Graduate (NDSEG) Fellowship Program, sponsored by the Air Force Research Laboratory (AFRL), the Office of Naval Research (ONR) and the Army Research Office (ARO), and ONR under award N00014-24-1-2088. Y.~Y.~was supported in part by the National Science Foundation under award DMS-2409855 and by ONR under award N00014-24-1-2088.

\bibliographystyle{unsrt}
\bibliography{references}
\end{document}